\newcommand{\N}{\mathbb{N}}
\newcommand{\LTL}{\textbf{LTL}}
\newcommand{\X}{\textbf{X}}
\renewcommand{\G}{\textbf{G}}
\newcommand{\F}{\textbf{F}}
\newcommand{\U}{\textbf{U}}
\renewcommand{\exp}{\text{exp}}
\newcommand{\sem}[1]{\langle #1 \rangle}
\newcommand{\Card}{\text{Card}}
\newcommand{\last}{\textbf{last}}
\newcommand{\width}{\textbf{width}}
\newcommand{\Poly}{\textbf{P}}
\newcommand{\Pattern}{\mathcal{P}}
\newcommand{\Fattern}{\mathcal{F}}
\renewcommand{\epsilon}{\varepsilon}
\renewcommand{\phi}{\varphi}
\newcommand{\suffix}[2]{#1[#2 \dots]}
\newcommand{\prefix}[2]{#1^{\leq #2}}
\newcommand{\size}[1]{|#1|}
\newcommand{\psize}[1]{||#1||}
\newtheorem{open}{Open problem}
\newtheorem{corollary}{Corollary}
\newcommand{\set}[1]{\left\{ #1 \right\}}
\newtheorem{theorem}{Theorem}
\newtheorem{fact}{Fact}
\newtheorem{lemma}{Lemma}
\newtheorem{definition}{Definition}
\newtheorem{proposition}{Proposition}
\newtheorem{claim}{Claim}
\begin{document}

\title{Learning temporal formulas from examples is hard\thanks{This article is a long version of the article presented in the proceedings of the International Conference on Grammatical Inference (ICGI) in 2021~\cite{FijalkowL21}. It includes much stronger and more general results than the extended abstract.}}

\author{\name Corto Mascle \email corto.mascle@labri.fr \\
       \addr LaBRI, University of Bordeaux, France
       \AND
       \name Nathana{\"e}l Fijalkow \email nathanael.fijalkow@gmail.com \\
       \addr CNRS, LaBRI, Université de Bordeaux, France
       \AND
       \name Guillaume Lagarde \email guillaume.lagarde@gmail.com \\
       \addr LaBRI, University of Bordeaux, France}

% For research notes, remove the comment character in the line below.
% \researchnote

\maketitle

\begin{abstract}
We study the problem of learning linear temporal logic (LTL) formulas from examples, as a first step towards expressing a property separating positive and negative instances in a way that is comprehensible for humans.
In this paper we initiate the study of the computational complexity of the problem.
Our main results are hardness results: we show that the LTL learning problem is $\NP$-complete, both for the full logic and for almost all of its fragments.
This motivates the search for efficient heuristics, and highlights the complexity of expressing separating properties in concise natural language.
\end{abstract}

\section{Introduction}
We are interested in the complexity of learning formulas of Linear Temporal Logic ($\LTL$) from examples,
in a passive scenario: from a set of positive and negative words, the objective is to construct a formula, as small as possible,
which satisfies the positive words and does not satisfy the negative words.
Passive learning of languages has a long history paved with negative results.
Learning automata is notoriously difficult from a theoretical perspective, 
as witnessed by the original $\NP$-hardness result of learning a Deterministic Finite Automaton (DFA) from examples~\cite{Gold78}.
This line of hardness results culminates with the inapproximability result of~\cite{PittW93} stating that 
there is no polynomial time algorithm for learning a DFA from examples even up to a polynomial approximation of their size.

\vskip1em
One approach to cope with such hardness results is to change representation, for instance replacing automata by logical formulas; 
their syntactic structures make them more amenable to principled search algorithms.
There is a range of potential logical formalisms to choose from depending on the application domain.
Linear Temporal Logic~\cite{Pnueli77}, which we abbreviate as $\LTL$, is a prominent logic for specifying temporal properties over words,
it has become a de facto standard in many fields such as model checking, program analysis, and motion planning for robotics.
Verification of $\LTL$ specifications is routinely employed in industrial settings and marks one of the most successful applications of formal methods to real-life problems.
A key property making $\LTL$ a strong candidate as a concept class is that its syntax does not include variables, 
contributing to the fact that $\LTL$ formulas are typically easy to interpret and therefore useful as explanations.

\vskip1em
Over the past five to ten years learning temporal logics (of which $\LTL$ is the core) has become an active research area.
There are many applications, let us only mention a few: program specification~\cite{LPB15} and fault detections~\cite{BoVaPeBe-HSCC-2016}. We refer to~\cite{Camacho_McIlraith_2019} for a longer discussion on the potential and actual applications of learning temporal logics.

\vskip1em
Since learning temporal logics is a computationally hard problem, a number of different approaches have been explored. 
One of the first and probably most natural is leveraging SAT solvers~\cite{NeiderG18}, which can then accommodate noisy data~\cite{GaglioneNRTX22}.
Another line of work relies on their connections to automata~\cite{Camacho_McIlraith_2019}, and a third completely different idea approaches it from the lens of Bayesian inference~\cite{ijcai2019-0776}. Learning specifically tailored fragments of $\LTL$ yields the best results in practice~\cite{RahaRFN22}.
There are a number of temporal logics, and the ideas mentioned above have been extended to more expressive logics such as Property Specification Language (PSL)~\cite{RoyFismanNeider20,BaharisangariGN21}, Computational Tree Logic (CTL)~\cite{EhlersGN20}, and Metric Temporal Logic (MTL)~\cite{RRFNP24}. Other paradigms have been explored to make learning temporal logics useful, such as sketching~\cite{LutzNR23}.

\vskip1em
Despite this growing interest, very little is known about the computational complexity of the underlying problem;
indeed the works cited above focused on constructing efficient algorithms for practical applications.
The goal of this paper is to initiate the study of the complexity of learning $\LTL$ formulas from examples.

\vskip1em
\textbf{Our contributions.}
We present a set of results for several fragments of $\LTL$, showing in almost all cases that the learning problem is $\NP$-complete.
Section~\ref{sec:definitions} gives definitions.
\begin{itemize}
	\item Our first $\NP$-hardness result is presented in Section~\ref{sec:hardness_full_non_constant}, it states that the learning problem for full $\LTL$ is $\NP$-hard when the alphabet size is part of the input.
	\item To obtain membership in $\NP$ for the learning problem, we show in Section~\ref{sec:short_formula_property} that all fragments of $\LTL$ have the short formula property. We then study some (degenerate) fragments in Section~\ref{sec:shortest_formula_property} and show that for these fragments the $\LTL$ learning problem is in polynomial time.
	\item We construct in Section~\ref{sec:approximation_X_and} a polynomial-time approximation algorithm for $\LTL$ with only the next operator and conjunctions, and show that assuming $\P \neq \NP$, the approximation ratio of this algorithm is optimal.
	\item Our most technical results are presented in Section~\ref{sec:hard_approximation_x} and Section~\ref{sec:hard_approximation_no_x}: in the first section we show that almost all fragments with the next operator are hard to approximate, and in the next a similar result for almost all fragments without the next operator.
\end{itemize}
We conclude in Section~\ref{sec:conclusions}.

\section{Preliminaries}
\label{sec:definitions}
Let us fix a (finite) alphabet $\Sigma$.
We index words from position $1$ (not $0$) and the letter at position $i$ in the word $w$ is written $w(i)$,
so $w = w(1) \dots w(\ell)$ where $\ell$ is the length of $w$, written $|w| = \ell$.
We write $\suffix{w}{k} = w(k) \dots w(\ell)$.
To avoid unnecessary technical complications we only consider non-empty words, and let $\Sigma^+$ denote the set of (non-empty) words.

The syntax of Linear Temporal Logic ($\LTL$) includes atomic formulas $c \in \Sigma$ and their negations, as well as $\top$ and $\bot$, the boolean operators $\wedge$ and $\vee$, and the temporal operators $\X, \F$, $\G$, and $\U$.
Note that as usually done, we work with $\LTL$ in negation normal form, meaning that negation is only used on atomic formulas.
The semantic of $\LTL$ over finite words is defined inductively over formulas,
through the notation $w \models \phi$ where $w \in \Sigma^+$ is a non-empty word, 
and $\phi$ is an $\LTL$ formula.
The definition is given below for the atomic formulas and temporal operators $\U, \F, \G$, and $\X$, with boolean operators interpreted as usual.
\begin{itemize}
	\item We have $w \models \top$ and $w \not\models \bot$.
    \item $w \models c$ if $w(1) = c$.
%    \item $w, i \models \phi_1 \land \phi_2$ if $w, i \models \phi_1$ and $w, i \models \phi_2$.
    \item $w \models \X \phi$ if $\size{w} > 1$ and $\suffix{w}{2} \models \phi$. It is called the next operator.
    \item $w \models \F \phi$ if $\suffix{w}{i} \models \phi$ for some $i \in [1,\size{w}]$. It is called the eventually operator.
    \item $w \models \G \phi$ if $\suffix{w}{i} \models \phi$ for all $i \in [1,\size{w}]$. It is called the globally operator.
    \item $w \models \phi \U \psi$ if there exists $i \in [1,\size{w}]$ such that for $j \in [1,i-1]$ we have $\suffix{w}{j} \models \phi$, 
    and $\suffix{w}{i} \models \psi$. It is called the until operator.
\end{itemize}
Note that $\F \phi$ is syntactic sugar for $\top \U \phi$.
We say that $w$ satisfies $\phi$ when $w \models \phi$ is true.
It is sometimes useful to write $w,i \models \phi$ to mean $\suffix{w}{i} \models \phi$.
We consider fragments of $\LTL$ by specifying which boolean connectives and temporal operators are allowed.
For instance $\LTL(\X,\wedge)$ is the set of all $\LTL$ formulas using only atomic formulas, conjunctions, and the next operator.
The full logic is $\LTL = \LTL(\U, \F,\G,\X,\land,\lor)$.
More generally, for a set of operators $Op \subseteq \set{\U, \F, \G, \X, \land, \lor}$, we write $\LTL(Op)$ for the logic using operators from $Op$.
The size of a formula is the size of its syntactic tree. 
We say that two formulas are equivalent if they have the same semantics, and we write $\phi \equiv \psi$ to say that $\phi$ and $\psi$ are equivalent.

\paragraph*{The $\LTL$ learning problem.}
A sample is a pair $(P,N)$ where $P = \set{u_1,\dots,u_n}$ is a set of positive words and $N = \set{v_1,\dots,v_m}$ a set of negative words.
Without loss of generality we can assume that $n = m$ (adding duplicate identical words to have an equal number of positive and negative words).
The $\LTL$ learning decision problem is: 
\begin{framed}
\begin{tabular}{ll}
\textbf{INPUT}: & a sample $(P,N)$ and $k \in \N$,\\
\textbf{QUESTION}: & does there exist an $\LTL$ formula $\phi$ of size at most $k$ \\
& such that for all $u \in P$, we have $u \models \phi$, \\
& and for all $v \in N$, we have $v \not\models \phi$?
\end{tabular}
\end{framed}
In that case we say that $\phi$ separates $P$ from $N$,
or simply that $\phi$ is a separating formula if the sample is clear from the context.
The $\LTL$ learning problem is analogously defined for any fragment of $\LTL$.

\paragraph*{Parameters for complexity analysis.}
The three important parameters for the complexity of the $\LTL$ learning problem are:
$n$ the number of words, $\ell$ the maximum length of the words, and $k$ the desired size for the formula.
As we will see, another important parameter is the size of the alphabet. We will consider two settings: either the alphabet $\Sigma$ is fixed, or it is part of the input. The size of the alphabet is $|\Sigma|$, the number of letters.

\paragraph*{Representation.}
The words given as input are represented in a natural way: we work with the RAM model with word size $\log(|\Sigma| + n + \ell)$, which allows us to write a letter in each cell and to manipulate words and positions in a natural way. 
We write $|P|$ for $\sum_{j \in [1,n]} |u_j|$ and similarly $N = \sum_{j \in [1,n]} |v_j|$. The size of a sample is $|P| + |N|$.

We emphasise a subtlety on the representation of $k$: it can be given in binary (a standard assumption) or in unary.
In the first case, the input size is $O(n \cdot \ell + \log(k))$, so the formula $\phi$ we are looking for may be exponential in the input size!
This means that it is not clear a priori that the $\LTL$ learning problem is in $\NP$. Opting for a unary encoding, the input size becomes $O(n \cdot \ell + k)$, and in that case an easy argument shows that the $\LTL$ learning problem is in $\NP$.
We follow the standard representation: $k$ is given in binary, and therefore it is not immediate that the $\LTL$ learning problem is in $\NP$.

\paragraph*{A naive algorithm.}
Let us start our complexity analysis of the learning $\LTL$ problem by constructing a naive algorithm for the whole logic.
\begin{theorem}\label{thm:recursive_algorithm}
There exists an algorithm for solving the $\LTL$ learning problem in time and space $O(|\Sigma| + \exp(k) \cdot n \cdot \ell)$,
where $\exp(k)$ is exponential in $k$.
\end{theorem}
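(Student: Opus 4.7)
The plan is a brute-force enumeration of all $\LTL$ formulas of size at most $k$, each equipped with its \emph{trace} on the sample, computed by dynamic programming on the size. I proceed in stages $i = 1, \dots, k$ maintaining a list $\mathcal{L}_i$ of all $\LTL$ formulas of size exactly $i$ together with their traces. A trace of $\phi$ is a Boolean table indexed by pairs (word $w$ in the sample, position $p \in [1,|w|]$) recording whether $\suffix{w}{p} \models \phi$; it occupies $O(n\ell)$ bits in total. The base case $\mathcal{L}_1$ consists of $\top$, $\bot$ and the literals $c$, $\neg c$; one linear pass computes these traces in time $O(|\Sigma| + n\ell)$, and one can additionally restrict attention to the letters occurring in the sample so that only $O(n\ell)$ distinct atoms survive beyond the preprocessing. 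For the inductive step, $\mathcal{L}_i$ is formed by applying each unary operator $\X,\F,\G$ to each $\phi \in \mathcal{L}_{i-1}$ and each binary operator $\wedge,\vee,\U$ to each pair $(\phi,\psi) \in \mathcal{L}_j \times \mathcal{L}_{i-1-j}$ for $j \in [1,i-2]$.

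The key observation is that once the traces of the immediate subformulas are available, the trace of a compound formula is computed in time $O(n\ell)$ using the standard model-checking recurrences: bitwise operations for $\wedge$ and $\vee$; a one-position shift per word for $\X$; a right-to-left running OR (resp.\ AND) per word for $\F$ (resp.\ $\G$); and the standard right-to-left recurrence
\[
(w,p \models \phi \U \psi) \iff (w,p\models\psi)\vee\bigl((w,p\models\phi)\wedge p<|w|\wedge (w,p{+}1\models \phi\U\psi)\bigr)
\]
for $\U$. Hence each new formula is constructed and evaluated at cost $O(n\ell)$. After the DP terminates, I scan $\bigcup_{i\leq k}\mathcal{L}_i$ and accept iff some formula has trace value $1$ at position $1$ of every positive word and $0$ at position $1$ of every negative word.

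For the complexity, the total number of $\LTL$ formulas of size at most $k$ is bounded by $\exp(k)$, a function exponential in $k$, because the set of operator symbols is finite and the alphabet contribution can be kept additive (the atoms are fixed once and for all during the $O(|\Sigma|+n\ell)$ preprocessing). Multiplying by the $O(n\ell)$ cost per formula yields the announced $O(|\Sigma| + \exp(k)\cdot n\cdot \ell)$ bound for both time and space. There is no deep obstacle here: the entire argument is a careful bookkeeping exercise on the enumeration, the only minor subtlety being to arrange the DP so that the alphabet size appears additively rather than multiplicatively inside the exponential.
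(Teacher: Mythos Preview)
Your proposal is correct and follows essentially the same approach as the paper: enumerate all $\LTL$ formulas up to size $k$, maintain for each one the table $\sem{\phi}(w)(i)$ indicating whether $\suffix{w}{i}\models\phi$ (what you call the trace), build these tables inductively in $O(n\ell)$ per formula, and test separation at position $1$. The paper's proof is terser and does not spell out the per-operator recurrences or the preprocessing that keeps the alphabet contribution additive, but the underlying algorithm is the same.
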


Notice that the dependence of the algorithm presented in Theorem~\ref{thm:recursive_algorithm} is linear in $n$ and $\ell$, 
and it is exponential only in $k$, but since $k$ is represented in binary this is a doubly-exponential algorithm.

\begin{proof}
For a formula $\phi \in \LTL$, we write $\sem{\phi} : P \cup N \to \set{0,1}^\ell$ for the function defined by
\[
\sem{\phi}(w)(i) = 
\begin{cases}
1 & \text{ if } \suffix{w}{i} \models \phi, \\
0 & \text{ if } \suffix{w}{i} \not\models \phi,
\end{cases}
\]
for $w \in P \cup N$.

Note that $\phi$ is separating if and only if $\sem{\phi}(u)(1) = 1$ and $\sem{\phi}(v)(1) = 0$ for all $u \in P, v \in N$.
The algorithm simply consists in enumerating all formulas $\phi$ of $\LTL$ of size at most $k$ inductively,
constructing $\sem{\phi}$, and checking whether $\phi$ is separating.
Initially, we construct $\sem{a}$ for all $a \in \Sigma$, and then once we have computed 
$\sem{\phi}$ and $\sem{\psi}$, we can compute $\sem{\phi \U \psi}$, $\sem{\F \phi}$, $\sem{\G \phi}$, $\sem{\X \phi}$, $\sem{\phi \wedge \psi}$, and $\sem{\phi \vee \psi}$ in time $O(n \cdot \ell)$.
To conclude, we note that the number of formulas of $\LTL$ of size at most $k$ is exponential in $k$.
\end{proof}

\subsection*{Subwords}

Let $u = u(1) \dots u(\ell)$ and $w = w(1) \dots w(\ell')$. We say that 
\begin{itemize}
	\item $u$ is a \emph{subword} of $w$ if there exist $p_1 < \dots < p_\ell$ such that $u(i) = w(p_i)$ for all $i \in [1,\ell]$.	
	\item $u$ is a \emph{weak subword} of $w$ if there exist $p_1 \leq \dots \leq p_\ell$ such that $u(i) = w(p_i)$ for all $i \in [1,\ell]$.
\end{itemize}
For instance, $abba$ is a subword of $b\textbf{ab}aaaa\textbf{b}\textbf{a}$, 
and $bba$ is a weak subword of $a\textbf{ba}a$ (using the $b$ twice).
We say that a word is non-repeating if every two consecutive letters are different.
If $u$ is non-repeating, then $u$ is a weak subword of $w$ if and only if it is a subword of $w$.

As a warm-up, let us construct simple $\LTL$ formulas related to subwords. 
\begin{itemize}
	\item We consider the word $u = u(1) \dots u(\ell)$ and $p_1 < p_2 < \dots < p_\ell$.
	They induce the following $\LTL(\X,\land)$ formula, called a \textit{pattern}:
\[
\Pattern = \X^{p_1 - 1}(u(1) \wedge \X^{p_2 - i_1}(\cdots \wedge \X^{p_\ell - p_{\ell-1}} u(\ell))\cdots).
\]
It is equivalent to the (larger in size) formula $\bigwedge_{i \in [1,\ell]} \X^{p_i - 1} u(i)$, which states that
for each $i \in [1,\ell]$, the letter in position $p_i$ is $u(i)$.

	\item We consider the word $u = u(1) \dots u(\ell)$.
	It induces the following $\LTL(\F,\land)$ formula, called a \textit{fattern} (pattern with an $\F$):
\[
\Fattern = \F (u(1) \wedge \F(\cdots \wedge \F u(\ell))\cdots).
\]
A word $w$ satisfies $\Fattern$ if and only if $u$ is a weak subword of $w$.
\end{itemize}

\section{NP-hardness when the alphabet is part of the input}
\label{sec:hardness_full_non_constant}
We prove our first hardness result: the learning problem for \LTL~is $\NP$-hard for non-constant alphabets for all fragments including $\F$ and $\lor$, so in particular for full $\LTL$.

\begin{theorem}\label{thm:NP_complete_non_constant_alphabet}
For all $\set{\F,\lor} \subseteq Op$, the $\LTL(Op)$~learning problem is $\NP$-hard when the alphabet is part of the input.
\end{theorem}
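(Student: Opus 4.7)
The approach is a reduction from the \textsc{Set Cover} problem. Given an instance with universe $U = \{u_1, \ldots, u_n\}$, family $\mathcal{F} = \{S_1, \ldots, S_m\}$ (assuming every element lies in some set, otherwise the instance is trivially negative), and parameter $k$, I would construct an $\LTL$ learning instance as follows. The alphabet is $\Sigma = \{c_1, \ldots, c_m\} \cup \{\#\}$, with one letter $c_j$ per set $S_j$ and a fresh separator $\#$. The positive sample $P$ contains one word $p_i$ per element $u_i$, obtained by concatenating $c_j$ for all $j$ with $u_i \in S_j$ in some fixed order. The negative sample is the single word $\#$, and the target formula size is $k' = 2k$.

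For the forward direction, given a cover $\{S_j : j \in J\}$ of size at most $k$, the formula $\varphi = \F(\bigvee_{j \in J} c_j)$ has size $2|J| \leq 2k$, accepts every $p_i$ (which contains some $c_j$ with $j \in J$ and $u_i \in S_j$), and rejects $\#$ (no $c_j$ appears in it).

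For the backward direction, I would first treat the base fragment $\LTL(\F, \lor)$ via a normal form: using $\F\F\phi \equiv \F\phi$ and the distributivity $\F(\phi \lor \psi) \equiv \F\phi \lor \F\psi$, every formula in $\LTL(\F, \lor)$ is semantically equivalent to $(\bigvee_{c \in A} c) \lor \F(\bigvee_{c \in B} c)$ for some disjoint $A, B \subseteq \Sigma$. The rejection of $\#$ forces $\# \notin A \cup B$; acceptance of each $p_i$ forces $\{S_j : c_j \in A \cup B\}$ to cover $U$; and counting letters, $\F$-nodes, and $\lor$-nodes in any formula realising that semantics yields $|\varphi| \geq 2(|A|+|B|)-1$, so that $|A \cup B| \leq k$ whenever $|\varphi| \leq 2k$. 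This extracts a cover of size at most $k$.

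The main obstacle is extending the backward direction uniformly to every fragment containing $\F$ and $\lor$, and in particular handling possible extra operators $\X$, $\G$, $\U$, or $\land$. My plan is to exploit the very simple structure of the constructed sample: since the unique negative word $\#$ has length $1$, any $\X$-subformula is automatically false on it and any $\G$- or $\U$-subformula collapses to an atomic test, so these operators cannot save size on the negative side compared to $\F$; since the positive words are plain concatenations without markers, a position-sensitive operator can only pick out a single index in an arbitrary ordering, and so cannot uniformly encode a covering condition in fewer symbols than a disjunction inside $\F$; finally, $\land$ demands simultaneous membership in several sets and is therefore too restrictive to beat the disjunctive structure of a covering problem. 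I expect the technical core to be an inductive rewriting lemma stating that, on this specific sample, any separating $\LTL(Op)$ formula of size at most $2k$ can be transformed, without increasing its size, into a separating $\LTL(\F, \lor)$ formula, thereby reducing the general case to the base case treated above.
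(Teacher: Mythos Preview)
Your reduction has a fatal gap in the backward direction that the proposed rewriting lemma cannot repair. Recall that in the paper's syntax atomic formulas \emph{and their negations} are always available, regardless of $Op$. In your instance the single negative word is $\#$, so the formula $\neg\#$ (of constant size) is satisfied by every positive word $p_i$ and rejected by $\#$. Hence for every $k\ge 1$ the $\LTL(Op)$ learning instance you produce admits a separating formula of size at most $2$, the answer is always ``yes'', and nothing about set cover can be recovered. The problem is structural: a negative sample consisting of a single word over a fresh letter is trivially separable, so no size lower bound can hold on this sample.

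Independently of that, the backward direction for general $Op$ is not actually proved: you \emph{announce} an ``inductive rewriting lemma'' transforming any separating $\LTL(Op)$ formula into an $\LTL(\F,\lor)$ formula of no larger size, but give no argument. With $\U$, $\G$, $\X$, $\land$ and negated atoms simultaneously available, such a size-preserving rewriting is far from obvious, and the $\neg\#$ example already shows it is false on your sample. The paper sidesteps all of this by a different and much more robust idea: it does \emph{not} normalise the formula at all. It designs the sample so that separation is governed purely by \emph{which letters occur} in the formula. Concretely, it uses an alphabet $\{a_j,b_j:j\in[1,n]\}$ with two letters per index, positive words $u_j$ using only $\{a_j,b_j\}$ with the hitting-set structure encoded by where $a_j$ appears, and a single negative word $v=b_1\cdots b_n$ built from the ``neutral'' letters only. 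Then, given \emph{any} separating formula $\phi$ of size $2k$ (over arbitrary $Op$), one simply takes $H=\{j:a_j\text{ occurs in }\phi\}$; since the syntactic tree of $\phi$ has at most $k$ leaves, $|H|\le k$, and a short semantic induction---valid uniformly for all operators---shows that if $H$ misses some $C_j$ then $u_j$ and $v$ cannot be separated by $\phi$, a contradiction. This letter-counting argument is precisely what makes the proof uniform in $Op$; a normal-form approach does not.
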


Recall that the hitting set problem takes as input $C_1,\dots,C_n$ subsets of $[1,\ell]$ and $k \in \N$, and asks whether there exists $H \subseteq [1,\ell]$ of size at most $k$ such that for every $j \in [1,n]$ we have $H \cap C_j \neq \emptyset$.
It is known to be \NP-complete.

\begin{proof}
We construct a reduction from the hitting set problem.
Let $C_1,\dots,C_n \subseteq [1,\ell]$ and $k \in \N$ an instance of the hitting set problem.
Let us define the alphabet $\Sigma = \set{a_j, b_j : j \in [1,n]}$, it has size $2n$.
For $j \in [1,n]$, we define $u_j$ of length $\ell$ by 
	\[
	u_j(i) =
		\begin{cases}
			a_j \text{ if } i \in C_j, \\
			b_j \text{ otherwise}.
		\end{cases}
	\]
Let $v = b_1 \cdots b_n$.
We claim that there exists a hitting set $H$ of size at most $k$ if and only if there exists a separating formula of $\LTL$ of size at most $2k$.

Given a hitting set $H$ of size $k$, we construct the formula $\F (\bigvee_{j \in H} a_j)$, it is separating by definition of $H$ being a hitting set, and it has size $2k$.
Conversely, let us consider a separating formula $\phi$ of size $2k$.
Since all operators have arity one or two, the syntactic tree of $\phi$ contains at most $k$ leaves, so the set
$H = \set{j \in [1,\ell] : a_j \text{ appears in } \phi}$ has size at most $k$.
Suppose $H$ is not a hitting set of $C_1,\ldots, C_n$, there exists $j \in [1,n]$ such that $H \cap C_j = \emptyset$. 
We prove that for all subformulas $\psi$ of $\phi$ and for all $i \in [1,\ell]$, if $\suffix{u_j}{i} \vDash \psi$ then $\suffix{v}{j} \vDash \psi$.
We proceed by induction.
 	
 	\begin{itemize}
 		\item If $\psi$ is a letter. If $\suffix{u_j}{i} \vDash \psi$, this implies that $u_j(i) = b_j$ because $a_j$ cannot appear in both $\psi$ and $u_j$ since $H \cap C_j = \emptyset$. Remark that $v(i) = b_i$, so $\suffix{v}{i} \vDash \psi$.
 		
 		\item The other cases are easily proved by applying directly the induction hypothesis.
 	\end{itemize}
In particular, since $u_j$ satisfies $\phi$, then so does $v$, a contradiction with $\phi$ being separating.
\end{proof}

This could be the end of our study! However the assumption that the alphabet is part of the input is very unusual, and in the rest of the paper we will therefore fix the alphabet size. In the remainder of the paper, we will consider only fragments of $\LTL$ without the until operator.
In other words, when consider $Op$ a set of operators, we assume that $Op \subseteq \set{\F, \G, \X, \land, \lor}$.

\section{Membership in NP: the short formula property}
\label{sec:short_formula_property}
\begin{definition}
	Let $Op$ a set of operator, we say that $Op$ has the \emph{short formula property} if 
	there exists a polynomial $\Poly$ such that 
	for all samples $(P,N)$, if there exists an \LTL(Op) separating formula, then there exists one of size at most 
	$\Poly(|P| + |N|)$.
\end{definition}
Note that if $Op$ has the short formula property then the $\LTL(Op)$ learning problem is in $\NP$.

\begin{theorem}\label{thm:poly_formulas}
	All $Op \subseteq \set{\F, \G, \X, \land, \lor}$ have the short formula property,
	and therefore the $\LTL(Op)$ learning problem is in $\NP$.
%	As a consequence, the $\LTL(Op)$ learning problem is in $\NP$ for all $Op \subseteq \set{\F, \G, \X, \land, \lor}$
\end{theorem}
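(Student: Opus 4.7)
The plan is to take an arbitrary separating formula $\phi \in \LTL(Op)$ and compress it into one of size polynomial in $s := |P|+|N|$. Associate to each subformula $\psi$ of $\phi$ its \emph{footprint}
\[
T(\psi) \;=\; \{(w,i) : w \in P \cup N,\ 1 \le i \le |w|,\ \suffix{w}{i} \models \psi\},
\]
a subset of the set of sample positions, which has cardinality exactly $s$. The fundamental compression step is: if $\phi$ contains a subformula $\psi_1$ with a strict subformula $\psi_2$ satisfying $T(\psi_1) = T(\psi_2)$, then replacing $\psi_1$ in its parent by $\psi_2$ preserves every subformula's semantics on the sample and strictly decreases the size. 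So in a minimum-size separator the footprints along any root-to-leaf path of the syntax tree are pairwise distinct, and the children of every $\land$ or $\lor$ node can be assumed pairwise footprint-distinct.

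For fragments containing $\X$, I would bypass compression and give a direct explicit construction. A quick induction on formula size shows that every formula of $\LTL(\X,\F,\land,\lor)$ is monotone under prefix extension: if $u$ is a prefix of $v$ and $u \models \phi$, then $v \models \phi$. Hence whenever an $\LTL(\X,\F,\land,\lor)$ separator exists, no positive word is a prefix of a negative word, and the explicit formula $\bigvee_{u \in P} \bigwedge_{i=1}^{|u|} \X^{i-1} u(i)$ of size $O(|P|)$ already separates. When $\G$ is also available we can additionally pin down the exact length of a word using $\X^{|u|} \G \bigwedge_{c \in \Sigma} \neg c$, which handles the remaining $\X$-fragments; fragments containing $\X$ but missing $\land$ or $\lor$ have syntactically short formulas and are trivial.

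For fragments without $\X$, I would rely on the footprint argument together with two structural facts: for every word $w$, the restriction of $T(\F\psi)$ to positions of $w$ is a down-closed prefix of $[1,|w|]$ (determined by a single threshold), and symmetrically $T(\G\psi)$ is up-closed. Combined with the $O(|\Sigma|)$ atomic footprints and the observation that $\land,\lor$ take intersections and unions of existing footprints, this controls the palette of footprints reachable per word. Together with the distinct-footprint invariant from the first paragraph, the goal is to bound the total number of subformulas of a minimum separator by a polynomial in $s$.

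The main obstacle is the interaction between temporal and Boolean operators in alternating fragments such as $\LTL(\F,\G,\land,\lor)$: each operator individually produces few new footprints, but nested $\land$ and $\lor$ can a priori combine them into exponentially many distinct sets. I expect to tame this by a careful case analysis on $Op$, tracking how each operator enlarges the set of reachable footprints, and then converting the resulting DAG of shared subformulas back into a tree at the cost of at most a polynomial blowup.
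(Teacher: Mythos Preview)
Your proposal has two genuine gaps.

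First, your treatment of the $\X$-fragments only covers the cases where both $\land$ and $\lor$ are present (via the explicit disjunction-of-patterns formula) or where neither is (unary chains). This leaves every single-Boolean-connective fragment such as $\{\X,\land\}$, $\{\X,\F,\land\}$, $\{\X,\G,\land\}$, $\{\X,\F,\G,\land\}$ and their $\lor$-duals unhandled. These are not ``syntactically short'' in any useful sense: an $\LTL(\X,\land)$ formula can be an arbitrary conjunction of patterns of unbounded size. You explicitly set the footprint argument aside for $\X$-fragments, so nothing in the proposal applies to these cases. (Incidentally, your $\G$-case gadget $\X^{|u|}\G\bigwedge_{c}\neg c$ uses negated atoms, which are not in the fragments considered here, and in any case $\bigwedge_{c}\neg c\equiv\bot$, so $\G\bot$ is never satisfied and the whole formula is identically false rather than a length test.)

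Second, the footprint compression argument for the $\X$-free fragments is, as you yourself concede, incomplete. Pairwise-distinct footprints along a root-to-leaf path only bounds the depth by the number of distinct footprints, which is $2^{s}$, not by a polynomial; pairwise distinctness of the children of a $\land$ or $\lor$ node does not bound their number polynomially either. The hope that a ``careful case analysis'' will tame the alternation of $\F,\G$ with $\land,\lor$, and the claim that a DAG of shared subformulas can be unfolded into a tree with only polynomial blowup, are both unjustified; the latter is false in general.

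The paper proceeds quite differently. It does not try to compress an arbitrary separator via semantic footprints. For fragments containing both $\land$ and $\lor$ it writes down an explicit polynomial-size separating formula directly and argues that if it fails to separate then no separator exists. For fragments with only one Boolean connective it first rewrites the given separator into a normal form using the equivalences of Fact~\ref{fact:basics} (pushing $\G$ and $\X$ to the leaves, distributing $\F$ over conjunctions of $\F$-formulas), obtaining a large conjunction of ``fattern''-shaped pieces $\F(\psi_1\land\F(\psi_2\land\cdots))$; it then applies the extraction Lemma~\ref{lem:small-fattern} (dually Lemma~\ref{lem:small-gattern}): from any fattern rejecting a given $v_j$ one can extract a sub-fattern of length at most $|v_j|+1$ that still rejects $v_j$ and is satisfied by everything the original was. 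Keeping one such short piece per negative word yields a polynomial-size conjunction. This normal-form-plus-extraction mechanism is the actual engine of the proof, and nothing in your sketch substitutes for it.
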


This theorem is the consequence of the following propositions, which establish it for various subsets of operators.
We start with some technical lemmas.
The following lemma is easily proved by induction, it shows that $\LTL$ cannot detect repetitions in the last letter of a word.

\begin{lemma}\label{lem:last_letter}
Let $u$ whose last letter is $a$. For all formulas $\phi \in \LTL$, for all $k \in \N$, if $u$ satisfies $\phi$ then $ua^k$ satisfies $\phi$.
\end{lemma}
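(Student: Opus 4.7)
The plan is to prove, by structural induction on $\phi$, the following stronger statement: for every letter $a$, every word $u$ whose last letter is $a$, every $k \in \N$ and every position $i \in [1,\size{u}]$, if $\suffix{u}{i} \models \phi$ then $\suffix{u a^k}{i} \models \phi$. The lemma is the special case $i = 1$. It will be important to keep $u$ and $k$ universally quantified inside the inductive statement, because in one case the induction hypothesis will have to be invoked on a strictly smaller formula but applied to a \emph{different} word.

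First I would dispatch the straightforward cases. For atomic formulas $c$ or their negations $\lnot c$, and for $\top$ and $\bot$, the conclusion is immediate since $\suffix{u}{i}$ and $\suffix{ua^k}{i}$ start with the same letter $u(i)$. The boolean cases $\phi \land \psi$ and $\phi \lor \psi$ follow directly from the induction hypothesis applied to the subformulas. For $\X \phi$: if $\suffix{u}{i} \models \X \phi$ then $i < \size{u}$ and $\suffix{u}{i+1} \models \phi$, so the induction hypothesis at position $i+1 \leq \size{u}$ gives $\suffix{ua^k}{i+1} \models \phi$; since $\suffix{ua^k}{i}$ has length at least $2$, we conclude $\suffix{ua^k}{i} \models \X \phi$. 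For $\F \phi$, picking the same witness position $j \in [i,\size{u}]$ as in $u$ works by the induction hypothesis on $\phi$.

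The only nontrivial case will be $\G \phi$, and this is where I expect the main obstacle to lie. Assume $\suffix{u}{i} \models \G \phi$, meaning $\suffix{u}{j} \models \phi$ for every $j \in [i,\size{u}]$; I must establish $\suffix{ua^k}{j} \models \phi$ for every $j \in [i,\size{u}+k]$. For $j \leq \size{u}$ the induction hypothesis on $\phi$ applied at position $j$ suffices. For $j \in [\size{u}+1,\size{u}+k]$, the key observation is that $\suffix{ua^k}{j} = a^m$ with $m = \size{u}+k-j+1 \in [1,k]$. Since $u$ ends in $a$ we have $a = \suffix{u}{\size{u}} \models \phi$, so I can apply the induction hypothesis to the strictly smaller formula $\phi$ with the new word $u' = a$ (which still ends in $a$) and the new exponent $k' = m-1$, obtaining $a \cdot a^{m-1} = a^m \models \phi$. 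This is precisely the step that forces the induction hypothesis to be stated with $u$ and $k$ universally quantified, rather than fixed once and for all.
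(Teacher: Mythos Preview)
Your approach is correct and is exactly what the paper intends: the paper does not spell out a proof and merely states that the lemma ``is easily proved by induction'', so your argument is strictly more detailed than the paper's own. The strengthening to arbitrary positions $i \in [1,\size{u}]$ and the observation that the $\G$ case forces $u$ and $k$ to be universally quantified (so that one may instantiate the induction hypothesis at $u'=a$) are the right moves.

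One small omission: you do not treat the until operator $\U$, which is part of full $\LTL$ as defined in the paper. The case is routine and fits your framework: if $\suffix{u}{i} \models \phi_1 \U \phi_2$, take the same witness $j \in [i,\size{u}]$; the induction hypothesis on $\phi_2$ at position $j$ and on $\phi_1$ at each position $j' \in [i,j-1]$ (all lying in $[1,\size{u}]$) yields $\suffix{ua^k}{i} \models \phi_1 \U \phi_2$. No new positions beyond $\size{u}$ are needed, so this case is genuinely easier than $\G$.
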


\begin{lemma}\label{lem:small-fattern}
	If a formula of the form $\phi = \F (\psi_1 \land \F (\psi_2 \land \left( \dots \land \F \psi_r \right) \dots)$ is not satisfied by some word $v$ then there exist $i_1 < \dots < i_k$ with $k \leq \size{v}+1$ such that $v$ does not satisfy $\psi = \F (\psi_{i_1} \land \F (\psi_{i_2} \land \left( \dots \land \F \psi_{i_k} \right) \dots)$ but all words satisfying $\phi$ also satisfy $\psi$.
\end{lemma}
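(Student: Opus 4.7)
The plan is to extract $\psi$ from $\phi$ via a \emph{greedy matching} of the conjuncts of $\phi$ on $v$. Define positions $p_0 = 0$ and, for $j \geq 1$, let $p_j = \min\{p \in [p_{j-1}, |v|] : \suffix{v}{p} \models \psi_j\}$, leaving $p_j$ undefined if no such position exists. Since greedy minimises each $p_j$, if $v \models \phi$ then $p_r$ is defined; by assumption $v \not\models \phi$, so the process fails at some first index $j^* \leq r$. Let $I = \{j \in [1, j^*-1] : p_j > p_{j-1}\} \cup \{j^*\}$, and write $I = \{i_1 < \dots < i_k\}$. The values $p_0 = 0, p_1, \dots, p_{j^*-1}$ lie in $\{0, 1, \dots, |v|\}$, so the number of strict advances is at most $|v|$, giving $k \leq |v|+1$. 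Define
\[
\psi = \F(\psi_{i_1} \land \F(\psi_{i_2} \land (\dots \land \F \psi_{i_k})\dots)).
\]

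The fact that every word satisfying $\phi$ also satisfies $\psi$ is immediate: a witnessing sequence of positions for $\phi$ on $w$ restricts, by selecting the indices in $I$, to a witnessing sequence for $\psi$ on $w$. The substantive step is showing $v \not\models \psi$. I would argue by contradiction: suppose $v \models \psi$ via positions $q_1 \leq \dots \leq q_k$ with $\suffix{v}{q_j} \models \psi_{i_j}$, and prove by induction on $j$ that $q_j \geq p_{i_j}$ for all $j < k$. The base case uses $i_1 = 1$ (since $p_1 > p_0$ whenever $p_1$ is defined) together with the minimality of $p_1$. For the inductive step, the key observation is that indices strictly between $i_j$ and $i_{j+1}$ lie outside $I$, so $p$ is constant on that range and $p_{i_{j+1}-1} = p_{i_j}$; combined with $q_{j+1} \geq q_j \geq p_{i_j}$ and the minimality defining $p_{i_{j+1}}$, this forces $q_{j+1} \geq p_{i_{j+1}}$.

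Applying the same reasoning once more at $j = k$ gives $q_k \geq p_{i_k - 1} = p_{j^* - 1}$, while $\suffix{v}{q_k} \models \psi_{j^*}$. This directly contradicts the definition of $j^*$ as the failure index of greedy, which says that no position in $[p_{j^*-1}, |v|]$ satisfies $\psi_{j^*}$. The main obstacle is the bookkeeping in this induction, specifically the observation that indices outside $I$ preserve the $p$-values, which is what allows the minimality of greedy to be transferred to the witnesses $q_1, \dots, q_k$ coming from $\psi$. Once this alignment between the greedy matching on $\phi$ and any hypothetical matching for $\psi$ is in place, the rest of the argument is essentially mechanical.
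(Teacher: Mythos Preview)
Your proof is correct and follows essentially the same approach as the paper's: a greedy matching of the $\psi_j$ against $v$, followed by extracting the subsequence of indices where the greedy pointer strictly advances, plus the failure index. The paper phrases the greedy in terms of suffixes (taking $v_i$ to be the \emph{largest suffix} of $v_{i-1}$ satisfying the next $\psi$), whereas you phrase it dually in terms of positions (taking $p_j$ to be the \emph{smallest position} at or after $p_{j-1}$ satisfying $\psi_j$); your induction showing $q_j \ge p_{i_j}$ is exactly the position-based counterpart of the paper's reverse induction on the extracted suffixes. One cosmetic point: with $p_0=0$ your range $[p_{j-1},|v|]$ includes the invalid position $0$ when $j=1$; writing $[\max(1,p_{j-1}),|v|]$ (or simply noting that $\suffix{v}{0}$ is undefined) removes the ambiguity without affecting the argument.
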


\begin{proof}
	We set $v_1 = v$. For all $i \in [2,r]$ we set $v_i$ as the largest suffix of $v_{i-1}$ satisfying $\psi_i$. If this suffix does not exist, we set $v_i = \epsilon$.
	As $v$ does not satisfy  $\phi$, we easily obtain that for all $i<r$ $v_i$ does not satisfy $\F (\psi_{i} \land \F (\psi_{i+1} \land \dots \land \F \psi_r) \dots)$. Hence $v_{r-1}$ does not satisfy $\F \psi_r$ and thus $v_r = \epsilon$.	
	We can now extract a decreasing subsequence of suffixes $v_{i_1}, \ldots, v_{i_k}$ with $i_1 = 1$ and for all $j>1$, $i_j$ is the smallest index larger than $i_{j-1}$ such that $v_{i_j} \neq v_{i_{j-1}}$ if it exists.
	Note that as this sequence is decreasing $k$ cannot be larger than $\size{v}+1$.	
	We set $\psi = \F (\psi_{i_1} \land \F (\psi_{i_{2}} \land \dots \land \F \psi_{i_k}) \dots)$.
	We have that for all $j>1$, $v_{i_j}$ is the largest suffix of $v_{i_{j-1}}$ satisfying $\psi_{i_{j}}$, and $v_{i_k} = \epsilon$.
	It is then easy to infer by reverse induction on $j$ that for all $j$, $v_{i_j}$ does not satisfy $\F (\psi_{i_j} \land \F (\psi_{i_{j+1}} \land \dots \land \F \psi_{i_k}) \dots)$ and thus that in particular $v_{i_1} = v$ does not satisfy $\psi$.
	
	Let $u$ be  a word satisfying $\phi$, there is a non-increasing sequence of suffixes $u_i$ of $u$ such that $u_1 = u$ and each $u_i$ satisfies $\F (\psi_{i} \land \F (\psi_{i+1} \land \dots \land \F \psi_{r}) \dots)$.
	It is then easy to see, by reverse induction on $j$, that for each $j \in [1,k]$ $u_{i_j}$ satisfies $\F (\psi_{i_j} \land \F (\psi_{i_{j+1}} \land \dots \land \F \psi_{i_k}) \dots)$.
	In particular $u_1 = u$ satisfies $\psi$. 
\end{proof}

We state and prove a dual version.

\begin{lemma}
	\label{lem:small-gattern}
	If a formula of the form $\phi = \G (\psi_1 \lor \G (\psi_2 \lor \dots \lor \G \psi_r) \dots)$ is satisfied by some word $u$ then there exist $i_1 < \dots < i_k$ with $k \leq \size{u}+1$ such that $u$ satisfies $\psi = \G (\psi_{i_1} \lor \G (\psi_{i_2} \lor \dots \lor \G \psi_{i_k}) \dots)$ and all words satisfying $\psi$ also satisfy $\phi$.
\end{lemma}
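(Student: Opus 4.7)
The plan is to deduce the lemma from Lemma~\ref{lem:small-fattern} by De~Morgan duality, exploiting the fact that negating a nested $\G$-$\lor$ formula produces a nested $\F$-$\land$ formula. For each subformula $\psi_i$ let $\bar\psi_i$ denote the NNF of $\neg \psi_i$, and set
\[
\phi' = \F(\bar\psi_1 \land \F(\bar\psi_2 \land \cdots \land \F \bar\psi_r) \cdots).
\]
Applying $\neg \G \alpha \equiv \F \neg \alpha$ and $\neg(\alpha \lor \beta) \equiv \neg\alpha \land \neg\beta$ repeatedly through the nesting yields $\phi' \equiv \neg \phi$, so the hypothesis $u \models \phi$ is equivalent to $u \not\models \phi'$, which is exactly the hypothesis of Lemma~\ref{lem:small-fattern} for $\phi'$ applied to the word $u$.

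Invoking Lemma~\ref{lem:small-fattern} therefore yields indices $i_1 < \cdots < i_k$ with $k \leq |u|+1$ and a formula
\[
\psi' = \F(\bar\psi_{i_1} \land \F(\bar\psi_{i_2} \land \cdots \land \F \bar\psi_{i_k}) \cdots)
\]
such that $u \not\models \psi'$ and every word satisfying $\phi'$ also satisfies $\psi'$. The same De~Morgan calculation gives $\neg \psi' \equiv \psi$ where $\psi = \G(\psi_{i_1} \lor \G(\psi_{i_2} \lor \cdots \lor \G \psi_{i_k}) \cdots)$ is exactly the formula we need to exhibit. Translating the two conclusions: $u \not\models \psi'$ becomes $u \models \psi$, and the implication $\phi' \Rightarrow \psi'$ becomes its contrapositive $\psi \Rightarrow \phi$, that is, every word satisfying $\psi$ also satisfies $\phi$.

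The only obstacle is bookkeeping, namely checking that the top-level negation of $\phi$ really unfolds into the nested $\F$-$\land$ shape on which Lemma~\ref{lem:small-fattern} can be applied, and that the subsequence $i_1 < \cdots < i_k$ on the negated side yields the matching subformula on the original side. Both are routine inductions on the depth of the nesting. A direct approach is also feasible, tracking the non-decreasing sequence $m_1 \leq m_2 \leq \cdots \leq m_r$ of minimal positions at which $\G(\psi_i \lor \G(\psi_{i+1} \lor \cdots))$ first holds on $u$ and extracting the indices at which this sequence strictly increases; however, going through the dual reuses essentially all the work already carried out for fatterns.
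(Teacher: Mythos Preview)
Your proof is correct and follows essentially the same approach as the paper: both argue by De~Morgan duality, negating $\phi$ to obtain a nested $\F$-$\land$ formula, invoking Lemma~\ref{lem:small-fattern}, and then negating back. The only cosmetic difference is that the paper works directly with $\neg\psi_i$ (noting that the proof of Lemma~\ref{lem:small-fattern} goes through unchanged when negations are allowed in the inner formulas) rather than passing to the NNF $\bar\psi_i$ as you do.
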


\begin{proof}
	Let us momentarily use negation. Note that the proof of Lemma~\ref{lem:small-fattern} still holds when we allow negations.
	The word $u$ does not satisfy $\phi' = \F (\neg \psi_1 \land \F ( \neg \psi_2 \lor \dots \land \F \neg \psi_r) \dots)$ as it is equivalent to the negation of $\phi$, thus there exist $i_1 < \ldots < i_k$ with $k \leq \size{u}+1$ such that $u$ does not satisfy $\psi' = \F (\psi_{i_1} \land \F (\psi_{i_2} \land \dots \land \F \psi_{i_k}) \dots)$ but all words satisfying $\phi'$ satisfy $\psi'$.
	The formula $\psi = \G (\psi_{i_1} \lor \G (\psi_{i_2} \lor \dots \lor \G \psi_{i_k}) \dots)$ is equivalent to the negation of $\psi'$, thus it is satisfied by $u$, and all words satisfying $\psi$ satisfy $\phi$.  
\end{proof}

The following fact will be useful for obtaining weak normal forms.

\begin{fact}
	\label{fact:basics}
	For all formulas $\phi, \psi, \phi_1, \ldots, \phi_n \in \LTL$, the following equivalences hold:
\begin{enumerate}	
	\item\label{F1} $\G \G \phi \equiv \G \phi$ and $\F \F \phi \equiv \F \phi$.
	
	\item\label{F2} $\X ( \phi \land \psi) \equiv \X \phi \land \X \psi$ and $\X ( \phi \lor \psi) \equiv \X \phi \lor \X \psi$.
	
	\item\label{F3} $\G \X \phi \equiv \bot$ and $\F \X \phi \equiv \X \F \phi$. 
	
	\item\label{F4} $\G \F \phi \equiv \F \G \phi$ (both state that $\psi$ is satisfied on the last position)
	
	\item\label{F5} $\G ( \phi \land \psi) \equiv \G \phi \land \G \psi$ and $\F ( \phi \lor \psi) \equiv \F \phi \lor \F \psi$.
	
	\item\label{F6} $\F(\psi \land \bigwedge_{i=1}^n \F \phi_i) \equiv \bigwedge_{i=1}^n \F (\psi \land \F \phi_i)$ and $\G(\psi \lor \bigvee_{i=1}^n \G \phi_i) \equiv \bigvee_{i=1}^n \G (\psi \lor \G \phi_i)$.	
\end{enumerate}
\end{fact}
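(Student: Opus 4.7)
Most of these equivalences follow by directly unfolding the semantics, so I would simply verify each one by walking through the definition of $\models$. The idempotences in (1), the boolean distributivities in (2) and (5), and the conjunctive/disjunctive distribution through $\X$ in (2), all come from the observation that $\G$ (respectively $\F$) quantifies universally (respectively existentially) over the set of suffix positions of $w$, which is itself closed under taking further suffixes, together with the fact that $\X$ simply evaluates its argument on the fixed suffix $\suffix{w}{2}$ and therefore commutes with any boolean connective.

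The finite-word aspects appear in (3) and (4). For $\G \X \phi \equiv \bot$, taking $i = \size{w}$ in the definition of $\G$ forces $\X \phi$ to hold on a length-one suffix, which is impossible since $\X$ requires $\size{\suffix{w}{i}} > 1$. For $\F \X \phi \equiv \X \F \phi$, any witness $i$ for the left-hand side must have $\size{\suffix{w}{i}} > 1$, hence $i+1 \leq \size{w}$, and then $i+1$ witnesses $\F \phi$ read from position $2$; the converse is symmetric. For (4), both sides can be shown equivalent to ``$\phi$ holds on the one-letter suffix $\suffix{w}{\size{w}}$'': in $\G \F \phi$, specialising to $i = \size{w}$ forces the inner $\F \phi$ to hold on $\suffix{w}{\size{w}}$; in $\F \G \phi$, any witness $i$ forces $\phi$ at every position in $[i, \size{w}]$, in particular at $\size{w}$; and both converses are immediate by picking $i = \size{w}$.

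The only substantial case is (6). The key observation, proved by a one-line direct check, is that the set of positions $i$ such that $\suffix{w}{i} \models \F \phi$ is downward closed: if $j \geq i$ witnesses $\F \phi$ then the same $j$ also witnesses $\F \phi$ from every earlier position. The easy direction of the first equivalence is immediate: any witness position $k$ for the left-hand side satisfies $\psi \land \F \phi_i$ for every $i$ simultaneously. For the converse, pick for each $i$ a witness $k_i$ for $\F(\psi \land \F \phi_i)$, and set $k = \min_i k_i$; then $\suffix{w}{k} \models \psi$ because $k = k_{i^*}$ for some $i^*$, and for every $i$, downward closure applied to $k_i \geq k$ yields $\suffix{w}{k} \models \F \phi_i$. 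The dual for $\G$ is symmetric, using upward closure of positions satisfying $\G \phi$ (alternatively, by applying the first equivalence to the negations and then negating back).

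The main obstacle, if any, is careful bookkeeping of the last-position edge case in (3) and (4); the argument for (6) is the most interesting one technically, but it hinges entirely on the downward-closure observation and presents no real difficulty once that lemma is in hand.
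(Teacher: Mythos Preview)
Your proof is correct. The paper does not actually prove this statement: it is stated as a \emph{Fact} without proof, presumably because these equivalences are considered folklore for $\LTL$ over finite words. Your verification of each item is accurate, including the edge-case handling in (3) and (4) and the $\min$-witness argument for (6), which is the standard way to establish that equivalence.
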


\begin{proposition}\label{prop:FGXonebool}
		All $Op \subseteq \set{\F, \G, \X, \land}$ have the short formula property.
\end{proposition}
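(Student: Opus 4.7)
The plan is to normalize an arbitrary separating formula using Fact~\ref{fact:basics} and then prune it to polynomial size. Applying items 1, 2, 3, 5 of the Fact exhaustively pushes $\X$ and $\G$ out through conjunctions and eliminates $\G\G$, $\F\F$, and $\G\X$; applying item 6 factors $\F$-subconjuncts outwards. The normalized formula is equivalent to a conjunction $\bigwedge_i \phi_i$ in which each $\phi_i$ is a \emph{chain} of the form $\X^a$ or $\X^a \G$ applied to a literal or to an $\F$-subformula, and the argument of every $\F$ is a conjunction of chains containing at most one $\F$-subchain.

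I then prune both widths and depths. Since the formula separates $(P,N)$, for each $v \in N$ there is a conjunct $\phi_{i(v)}$ with $v \not\models \phi_{i(v)}$, so keeping only one such conjunct per $v$ bounds the outer width by $|N|$. Inside any $\F$-subformula $\F(\chi_1 \land \dots \land \chi_k \land \F\eta)$ that must reject a suffix $v'$ of $v$, for each position $i \in [1,|v'|]$ I pick one conjunct already failing at $\suffix{v'}{i}$ and drop the rest; dropping conjuncts weakens the formula so acceptance of $P$ is preserved, yet at every suffix of $v'$ at least one retained conjunct still fails so rejection is too. This caps the width at every $\F$-level by $|v|+1 \leq \ell + 1$. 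To cap the $\F$-nesting depth of each chain at $\ell + 1$ I invoke Lemma~\ref{lem:small-fattern}, or a minor generalization accounting for the outer $\X^a \G$ prefix of each chain, whose suffix-decrease proof still applies since each operator in the chain only moves to a smaller (or equal) suffix of $v$.

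Combining these bounds and using that each $\F$-level contains at most one $\F$-subchain, the size of a single chain satisfies the linear recurrence $S(d) \leq O(\ell^2) + S(d-1)$, since each level contributes $O(\ell)$ non-$\F$ leaf conjuncts, each of size $O(\ell)$ (one can cap $\X$-prefixes at $\ell$ since longer prefixes are vacuously true on all input words). Solving gives per-chain size $O(\ell^3)$, so the total formula size is $O(|N| \cdot \ell^3)$, polynomial in $|P|+|N|$. Sub-fragments $Op \subsetneq \set{\F,\G,\X,\land}$ inherit the bound since their formulas form a subset of the fragment considered.

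The main obstacle is the width-pruning inside each $\F$: it relies crucially on the absence of $\lor$, because dropping conjuncts must both weaken the inner formula (preserving positive acceptance) and leave at least one failing witness at every relevant suffix of $v$ (preserving negative rejection). A dual argument would be required under $\lor$ and does not go through the same way, which is exactly why the present proposition is restricted to $Op \subseteq \set{\F,\G,\X,\land}$ and the disjunctive case is deferred.
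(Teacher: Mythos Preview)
Your approach is essentially the paper's: normalize via Fact~\ref{fact:basics}, obtain a top-level conjunction of chain-shaped formulas, keep one chain per negative word, then bound each chain's width and (via Lemma~\ref{lem:small-fattern}) its $\F$-depth. Two points where the paper's execution is cleaner are worth noting.

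First, the paper also invokes item~\ref{F4} of Fact~\ref{fact:basics} ($\G\F\phi\equiv\F\G\phi$), which together with items~\ref{F1}, \ref{F3}, \ref{F5} pushes every $\G$ (and then every $\X$) down to the atoms. The result is a formula in $\LTL(\F,\land)$ over compound atoms $\X^k a$ and $\X^k\G a$, to which item~\ref{F6} and then Lemma~\ref{lem:small-fattern} apply verbatim, with no ``minor generalization'' required. Without item~\ref{F4}, conjuncts of the form $\G\F\psi$ can survive inside an $\F$; item~\ref{F6} does not factor those out, so your claim that each $\F$-argument contains at most one $\F$-subchain does not follow from the rewrites you list.

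Second, the paper bounds the width at each level \emph{structurally} rather than relative to a particular $v$: a conjunction of atoms $\X^k a$, $\X^k\G a$ with $k<\ell$ collapses to at most $\ell$ conjuncts, because two such atoms sharing the same $k$ are either redundant or jointly $\bot$. This avoids the bookkeeping in your per-position pruning (tracking which suffix $v'$ to use when recursing into the retained $\F\eta$). One small slip: $\X^k$-prefixes with $k\ge\ell$ are vacuously \emph{false} on the input words, not true; the conclusion that $k$ can be capped still holds, since a $\bot$ conjunct would collapse its entire chain to $\bot$ and contradict separation.
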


\begin{proof}
Let us consider a formula $\phi$ separating $P$ from $N$ in $\LTL(Op)$, we apply a series of transformations to $\phi$ to construct another separating formula of polynomial size.

%	For every formula $\phi \in \LTL(\F,\G, \X, \land)$ satisfied by $u_1, \ldots, u_n$ but not by $v_1, \ldots, v_m$, all non-empty and of length at most $\ell$, there exists a formula $\psi$ of size polynomial in $m$ and $\ell$ using the same set of operators and satisfied by all $u_i$ but no $v_j$.
	
	First of all we push the $\G$ in front of the letters, which is possible by using repeatedly the equivalences \ref{F1}, \ref{F3}, \ref{F4} and \ref{F5} of Fact~\ref{fact:basics}.	
	Then we push the $\X$ to the bottom of the formula as well, as they commute with $\F$ and $\land$. We obtain a $\LTL(\F, \land)$ formula with atoms of the form either $\X^k \G a$ or $\X^k a$.	
	Once that is done, we make the formula follow a normal form by repeatedly using equivalence~\ref{F6} of Fact~\ref{fact:basics}. While $\phi$ has a subformula of the form $\F (\psi \land \bigwedge_{i=1}^p \F \phi_i)$, we turn it into $\bigwedge_{i=1}^p \F ( \psi \land  \F \phi_i)$.	
	In the end $\phi$ is of the form $\psi \land \bigwedge_{i=1}^p \phi_i$ with each $\phi_i$ of the form $\F (\psi_1 \land \F (\psi_2 \land \dots \land \F \psi_r) \dots)$ where $\psi$ and the $\psi_i$ are conjunctions of formulas of the form $\X^k \G a$ or $\X^k a$. Note that $p$ may be exponential in $\size{\phi}$.

	Observe that given two formulas $\alpha$ and $\beta$ of the form either $a$ or $\G a$ for some $a \in \Sigma$, $\X^k \alpha \land \X^k \beta$ is equivalent to either $\X^k \alpha$, $\X^k \beta$, or $\bot$. It is then easy to see that every conjunction of formulas of the form $\X^k \G a$ or $\X^k a$ is equivalent over non-empty words of length at most $\ell$ to either $\bot$ or a conjunction of at most $\ell$ formulas of the form $\X^k a$ or $\X^k \G a$ with $k < \ell$. 
	We can thus assume $\psi$ and all $\psi_i$ to be of that form, and thus to be of size polynomial in $\ell$.
	
	This formula being equivalent to $\phi$, all $u_i$ satisfy all $\phi_r$ and $\psi$, and for all $v_j$ either $v_j$ does not satisfy $\psi$ or it does not satisfy some $\phi_r$.
	We focus on the second case. Say $v_j$ does not satisfy $\phi_r = \F (\psi_1 \land \F (\psi_2 \land \dots \land \F \psi_s) \dots)$, by Lemma~\ref{lem:small-fattern} we can turn $\phi_\ell$ into a short formula $\phi'_j$ that is still satisfied by all $u_i$ but not by $v_j$. 
	We set $\phi'_j$ to be as described above for all $v_j$ that satisfy $\psi$ and to be $\psi$ for the others.	
	As a result, the formula $\bigwedge_{i=1}^m \phi'_j$ is a formula of polynomial size in $m$ and $\ell$ separating the $u_i$ and $v_j$.
	
	Furthermore, we did not add any operators to the formula, hence the final formula uses the same set of operators as $\phi$.
\end{proof}

\begin{proposition}\label{prop:FGXonebool_other}
		All $Op \subseteq \set{\F, \G, \X, \lor}$ have the short formula property.
\end{proposition}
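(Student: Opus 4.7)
The plan is to prove this by direct duality with Proposition~\ref{prop:FGXonebool}, exchanging $\land \leftrightarrow \lor$, $\F \leftrightarrow \G$, and the roles of positive and negative words, and using Lemma~\ref{lem:small-gattern} in place of Lemma~\ref{lem:small-fattern}. All equivalences in Fact~\ref{fact:basics} are either self-dual or come paired with their dual forms, so the syntactic manipulations carry over.

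Concretely, starting from a separating formula $\phi \in \LTL(Op)$ with $Op \subseteq \set{\F, \G, \X, \lor}$, I would first push $\F$ in front of the letters using the $\F$-side of (F1), (F3), (F4), and (F5), namely $\F\F \equiv \F$, $\F\X \equiv \X\F$, $\F\G \equiv \G\F$, and $\F(\phi \lor \psi) \equiv \F\phi \lor \F\psi$. The result is an equivalent formula whose atoms have the form $a$ or $\F a$ and whose upper structure lies in $\LTL(\G, \lor, \X)$. I would then push $\X$ down through $\lor$ using (F2), and rewrite any subformula of the form $\G \X \psi$ as $\bot$ using (F3). Next I would apply the dual of (F6), namely $\G(\psi \lor \bigvee_{i=1}^q \G \phi_i) \equiv \bigvee_{i=1}^q \G(\psi \lor \G \phi_i)$, repeatedly to put the formula in the form $\psi \lor \bigvee_{j=1}^p \phi_j$ where each $\phi_j$ is a \emph{gattern} $\G(\psi_{j,1} \lor \G(\psi_{j,2} \lor \dots \lor \G \psi_{j,r_j}))$ and $\psi$ and the $\psi_{j,i}$ are disjunctions of atoms. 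Since atoms of the form $\X^k a$ or $\X^k \F a$ with $k < \ell$ come in only $O(|\Sigma| \cdot \ell)$ distinct varieties over non-empty words of length at most $\ell$, each such disjunction can be reduced to polynomial size. Finally, for each positive word $u_i$, which must satisfy at least one disjunct of the normalized $\phi$, I would set $\phi'_i = \psi$ if $u_i \models \psi$, and otherwise invoke Lemma~\ref{lem:small-gattern} on a gattern $\phi_j$ satisfied by $u_i$ to extract a shortened gattern $\phi'_i$ of chain length at most $|u_i|+1$ that is still satisfied by $u_i$ and implies $\phi_j$, and hence is not satisfied by any negative word. The disjunction $\bigvee_i \phi'_i$ is then a separating formula of size polynomial in $n \cdot \ell$, using only operators from $Op$.

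The main obstacle will be the interaction between $\X$ and $\G$. In the original proof the equivalence $\G \X \phi \equiv \bot$ conveniently annihilated $\G$ sitting above $\X$ during the first push-down phase, so that when subsequently pushing $\X$ one only had to commute it past $\F$ (via (F3)) and $\land$ (via (F2)). In the dual setting, $\F \X \phi \equiv \X \F \phi$ merely commutes $\F$ past $\X$, so after pushing $\F$ to the leaves an $\X$ may remain above a $\G$-subformula with no equivalence available to move it further down, as $\X \G \phi \not\equiv \G \X \phi$ in general. I would deal with such shifted sub-blocks $\X^k \G(\cdots)$ either by absorbing them as compound atoms inside the $\psi_{j,i}$, or by recursively applying the normalization to $(\cdots)$; in either case one must verify that the chain-length bound from Lemma~\ref{lem:small-gattern} is strong enough to keep the overall size polynomial in $n \cdot \ell$.
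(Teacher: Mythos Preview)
Your plan coincides with the paper's proof: push $\F$ and $\X$ down to obtain a formula in $\LTL(\G,\lor)$ with atoms $\X^k a$ and $\X^k\F a$, distribute via the dual half of equivalence~\ref{F6} into $\psi \lor \bigvee_r \phi_r$ with each $\phi_r$ a gattern over polynomial-size atom disjunctions, and invoke Lemma~\ref{lem:small-gattern} once per positive word before taking the disjunction. As for the $\X$--$\G$ obstacle you flag, the paper's proof does not address it either and simply asserts that ``pushing the $\F$ and $\X$ to the bottom'' yields those atoms, so your proposal is already at least as careful as the published argument on this point.
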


\begin{proof}
	The proof is nearly identical to the one of Proposition~\ref{prop:FGXonebool}.
	We start by pushing the $\F$ and $\X$ to the bottom of the formula to obtain a formula of $\LTL(\G, \lor)$ with atoms of the form $\X^k a$ or $\X^k \F a$.	
	We also use repeatedly equivalence~\ref{F6} of Fact~\ref{fact:basics} to obtain a formula of the form $\psi \lor \bigvee_{i=1}^p \phi_i$ with each $\phi_i$ of the form $\G (\psi_1 \lor \G (\psi_2 \lor \dots \lor \G \psi_r) \dots)$ where $\psi$ and the $\psi_i$ are disjunctions of formulas of the form $\X^k \F a$ or $\X^k a$.	
	Then we note that every disjunction of formulas of the form $\X^k a$ and $\X^k \F a$ can be replaced by a disjunction of at most $\ell \size{\Sigma}$ such formulas, equivalent on non-empty words of length at most $\ell$ (as $\X^k \F a \lor \X^k a$ is equivalent to $\X^k \F a$). 
	We can thus assume that all $\psi_i$ and $\psi$ mentioned above are of polynomial size in $\ell$ and $\size{\Sigma}$. No $v_j$ satisfies either $\psi$ or any $\psi_r$. For each $u_i$ either there exists $\phi_r$ that is satisfied by $u_i$ or $u_i$ satisfies $\psi$. 
	In the first case we use Lemma~\ref{lem:small-gattern} to turn that $\phi_r$ into a formula $\phi'_i$ of polynomial size in $\ell$ and $\size{\Sigma}$ satisfied by $u_i$ but not by any $v_j$. If $u_i$ does not satisfy any $\phi_r$ we set $\phi'_i = \psi$.	
	In the end the formula $\bigvee_{i=1}^n \phi'_i$ is satisfied by all $u_i$ but no $v_j$, and is of polynomial size in $\ell$, $n$ and $\size{\Sigma}$.
\end{proof}

\begin{proposition}\label{thm:GXandor}
	All $\set{\G, \X, \land, \lor} \subseteq Op$ have the short formula property.
\end{proposition}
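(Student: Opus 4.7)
The plan is to use a pairwise distinguishing strategy, in the spirit of the previous propositions but sidestepping the normal-form manipulations, which become awkward once both $\land$ and $\lor$ are available. Given a separating formula $\phi \in \LTL(Op)$, for every pair $(u_i, v_j) \in P \times N$ I will construct a small formula $\phi_{i,j}$ using only operators from $\{\G, \X, \land, \lor\} \subseteq Op$ that is satisfied by $u_i$ but not by $v_j$. The combination $\bigvee_{i} \bigwedge_{j} \phi_{i,j}$ then separates $P$ and $N$: for any $u_{i_0} \in P$ the inner conjunction at $i = i_0$ is satisfied by $u_{i_0}$, and for any $v_{j_0} \in N$ every inner conjunction contains the unsatisfied conjunct $\phi_{i, j_0}$. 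If each $\phi_{i,j}$ has size $O(\ell)$, the overall formula has size $O(n^2 \ell)$, which is polynomial in $|P|+|N|$.

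The construction of $\phi_{i,j}$ is direct and word-based; the separating $\phi$ enters only to rule out degenerate pairs. I split into three cases. (A) If there is a common position $k \leq \min(|u_i|, |v_j|)$ with $u_i(k) \neq v_j(k)$, set $\phi_{i,j} = \X^{k-1} u_i(k)$. (B) If $v_j$ is a strict prefix of $u_i$, set $\phi_{i,j} = \X^{|v_j|} u_i(|v_j|+1)$, which is satisfied by $u_i$ and fails on $v_j$ since $v_j$ has no position $|v_j|+1$. (C) If $u_i$ is a strict prefix of $v_j$, write $v_j = u_i \cdot w$ with $|w| \geq 1$ and set $\phi_{i,j} = \X^{|u_i|-1} \G u_i(|u_i|)$, asserting that from position $|u_i|$ onward every letter equals $u_i(|u_i|)$.

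The main obstacle is case (C), and it is the only place where the hypothesis that $\phi$ separates the sample genuinely intervenes. If $w$ were entirely composed of copies of $a := u_i(|u_i|)$, Lemma~\ref{lem:last_letter} would yield $u_i \models \phi \Rightarrow v_j \models \phi$, contradicting $v_j \in N$; hence $w$ must contain some letter different from $a$, which is exactly what makes $\G a$ fail on $\suffix{v_j}{|u_i|}$ while holding vacuously on $\suffix{u_i}{|u_i|}$. The remaining verifications are routine: any two distinct words fall into one of the three cases, each $\phi_{i,j}$ has size $O(\ell)$, and the combined formula uses only operators in $\{\G, \X, \land, \lor\} \subseteq Op$.
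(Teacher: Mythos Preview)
Your proof is correct. The paper takes a slightly more compact route: instead of building a distinguishing formula $\phi_{i,j}$ for every pair, it assigns to each positive word $u_i = a_1\cdots a_m$ a single formula
\[
\phi_{u_i} \;=\; \bigwedge_{j=1}^{m-1} \X^{j-1} a_j \;\land\; \X^{m-1}\G a_m,
\]
which is satisfied exactly by the words in $u_i a_m^{*}$, and then takes $\bigvee_i \phi_{u_i}$. The only possible obstruction is some $v_j \in u_i a_m^{*}$, and this is ruled out by Lemma~\ref{lem:last_letter} just as in your Case~(C). In effect the paper collapses your three cases into one encoding per positive word; this gives a separating formula of size $O(n\ell^2)$ versus your $O(n^2\ell)$, but both are polynomial and rest on the same key point, namely that $v_j \in u_i a^{*}$ is the sole genuine obstacle. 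Your pairwise decomposition is a bit more modular and makes the single use of the separating hypothesis (in Case~(C)) more transparent.
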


\begin{proof}
	Let $(P,N)$ a sample.
	Consider the formula $\phi = \bigvee_{i=1}^n \phi_{u_i}$ where $\phi_{u_i} = \bigwedge_{j=1}^{m-1} \X^{j-1} a_j  \land \X^{m-1} \G a_m$ with $u_i = a_1 \cdots a_m$.	
	If there exist $u_i, v_j$ such that  $v_j \in u_i a^*$ with $a$ the last letter of $u_i$, then there is no separating formula: By Lemma~\ref{lem:last_letter} every formula satisfied by $u_i$ is also satisfied by $v_j$.	
	Otherwise, every $u_i$ satisfies the associated formula $\phi_{u_i}$ while no $v_j$ satisfies any of them. Hence $\phi$ is a separating formula of polynomial size.
\end{proof}

\begin{proposition}\label{thm:FXandor}
	All $Op$ such that $\set{\X, \land, \lor} \subseteq Op \subseteq \set{\F, \X, \land, \lor}$ have the short formula property.
\end{proposition}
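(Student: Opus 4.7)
The plan is to eliminate disjunctions one positive word at a time, reducing to the short-formula property already established in Proposition~\ref{prop:FGXonebool}. Given a separating formula $\phi \in \LTL(Op)$, I will show that for every $u \in P$ one can extract a ``$\lor$-free'' subformula $\phi^u \in \LTL(Op \setminus \set{\lor}) \subseteq \LTL(\F, \X, \land)$ satisfying $u \models \phi^u$ and, semantically, $\phi^u \Rightarrow \phi$ on all words. This reduces the problem to an instance already handled.

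The construction of $\phi^u$ is by induction on $\phi$, guided by a satisfaction witness for $u \models \phi$, and exploits the fact that the fragment is monotone (no negation appears). Explicitly: $a^u = a$; $(\phi_1 \land \phi_2)^u = \phi_1^u \land \phi_2^u$; $(\X \phi')^u = \X (\phi')^{\suffix{u}{2}}$; when $\F \in Op$, $(\F \phi')^u = \F (\phi')^{\suffix{u}{i}}$ for some $i$ with $\suffix{u}{i} \models \phi'$; and $(\phi_1 \lor \phi_2)^u = \phi_i^u$ for some $i \in \set{1,2}$ with $u \models \phi_i$. A routine structural induction then verifies both $u \models \phi^u$ and the semantic implication $\phi^u \Rightarrow \phi$, so $\phi^u$ separates $\set{u}$ from $N$: any $v \in N$ with $v \models \phi^u$ would yield $v \models \phi$, contradicting the separating property of $\phi$.

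Applying Proposition~\ref{prop:FGXonebool} to the sample $(\set{u}, N)$ in the fragment $\LTL(Op \setminus \set{\lor})$ yields a short separating formula $\psi^u$ of size polynomial in $|u| + |N|$, hence in $|P| + |N|$. The disjunction $\bigvee_{u \in P} \psi^u$ then lies in $\LTL(Op)$, has polynomial size in $|P| + |N|$, and separates $P$ from $N$: each $u$ satisfies its own $\psi^u$, and no $v \in N$ satisfies any $\psi^u$. The main step requiring care is the inductive verification that $\phi^u \Rightarrow \phi$ holds \emph{semantically} for each operator---in particular, under $\F$ one must check that replacing the body of an eventually by an implying formula preserves the outer implication---but this is routine once the cases are set up; the rest of the argument is bookkeeping, and the heavy lifting has already been done in Proposition~\ref{prop:FGXonebool}.
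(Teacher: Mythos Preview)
Your proof is correct, but it takes a genuinely different route from the paper. The paper does not reduce to Proposition~\ref{prop:FGXonebool}; instead it gives a direct, explicit construction: for each $u_i = a_1 \cdots a_m \in P$ it writes down the ``prefix pattern'' $\phi_{u_i} = \bigwedge_{j=1}^m \X^{j-1} a_j$, takes $\phi = \bigvee_i \phi_{u_i}$, and observes that either this formula separates, or some $u_i$ is a prefix of some $v_j$, in which case an easy induction shows no $\LTL(\F,\X,\land,\lor)$ formula can separate at all. Your argument instead eliminates $\lor$ by specialising the given separating formula to each positive word via a monotonicity-guided pruning, then invokes the short formula property of the $\lor$-free fragment already established. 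The paper's approach is more elementary and self-contained, and yields as a byproduct an exact characterisation of separability (the prefix condition). Your approach is more modular and illustrates a reusable technique---essentially the same $\lor$-elimination idea that appears later in the paper as Lemma~\ref{lem:remove_or}---at the cost of depending on the heavier machinery inside Proposition~\ref{prop:FGXonebool}.
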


\begin{proof}
	Let $(P,N)$ a sample.
	Consider the formula $\phi = \bigvee_{i=1}^n \phi_{u_i}$ where $\phi_{u_i} = \bigwedge_{j=1}^{m} \X^{j-1} a_j$ with $u_i = a_1 \cdots a_m$. We distinguish two cases.
	\begin{itemize}
		\item If there exist $u_i, v_j$ such that $u_i$ is a prefix of $v_j$, then there is no separating formula: An easy induction shows that every formula satisfied by $u_i$ is also satisfied by $v_j$.
	
		\item Otherwise, every $u_i$ satisfies the associated formula $\phi_{u_i}$ while no $v_j$ satisfies any of them. Hence $\phi$ is a separating formula of polynomial size.
	\end{itemize}	
\end{proof}

\begin{fact}
	\label{fact:negationwithoutX}
	For all $\phi, \phi_1, \phi_2 \in \LTL(\F,\G,\land,\lor, \neg)$, the following equivalences hold:
	
	\begin{itemize}
		\item $\neg (\phi_1 \land \phi_2) \equiv \neg \phi_1 \lor \neg \phi_2$
		
		\item $\neg (\phi_1 \lor \phi_2)  \equiv \neg \phi_1 \land \neg \phi_2$
		
		\item $\neg \F \phi  \equiv \G \neg \phi$ 
		
		\item $\neg \G \phi  \equiv \F \neg \phi$
	\end{itemize}
\end{fact}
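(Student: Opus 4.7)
The plan is to verify each of the four equivalences directly from the inductive semantic definitions of $\LTL$ given in Section~\ref{sec:definitions}, since no nontrivial reasoning about operator interactions is needed here. Unlike the previous Fact~\ref{fact:basics}, negation is genuinely \emph{outside} the fragment considered in the learning results; the point of this fact is just to record the standard dualities so that later proofs (about $\LTL$ without $\X$) can be carried out in negation normal form. I expect the whole proof to be a short semantic unfolding.

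For the two boolean equivalences, I would simply write, for an arbitrary word $w \in \Sigma^+$,
\[
 w \models \neg(\phi_1 \land \phi_2) \iff \text{not } (w \models \phi_1 \text{ and } w \models \phi_2) \iff w \models \neg\phi_1 \text{ or } w \models \neg\phi_2,
\]
which is the semantic version of De Morgan's law, and similarly for $\neg(\phi_1 \lor \phi_2)$. Since these identities hold for an arbitrary $w$, the formulas are equivalent.

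For the two temporal equivalences, I would invoke the definitions
$w \models \F\phi \Leftrightarrow \exists i \in [1,|w|]:\suffix{w}{i}\models\phi$
and
$w \models \G\phi \Leftrightarrow \forall i \in [1,|w|]:\suffix{w}{i}\models\phi$,
and then use the duality between $\exists$ and $\forall$ under negation:
\[
 w \models \neg\F\phi \iff \forall i \in [1,|w|]:\suffix{w}{i}\not\models\phi \iff \forall i \in [1,|w|]:\suffix{w}{i}\models\neg\phi \iff w \models \G\neg\phi,
\]
and symmetrically
\[
 w \models \neg\G\phi \iff \exists i \in [1,|w|]:\suffix{w}{i}\not\models\phi \iff \exists i \in [1,|w|]:\suffix{w}{i}\models\neg\phi \iff w \models \F\neg\phi.
\]
Since $\Sigma^+$ ranges over non-empty words and the index set $[1,|w|]$ is always non-empty, no corner case arises, so there is no real obstacle to overcome; the only thing worth being careful about is that the semantics in this paper is defined over finite non-empty words, which is why I restrict $i$ to $[1,|w|]$ in each step above. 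This makes all four equivalences immediate.
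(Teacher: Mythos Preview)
Your proof is correct. The paper states this as a \texttt{fact} with no accompanying proof (these are the standard De~Morgan and $\F$/$\G$ dualities), and your short semantic unfolding is exactly the argument one would give if a proof were demanded.
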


\begin{proposition}
	$\set{\F, \land, \lor}$ and $\set{\G, \land, \lor}$ have the short formula property.
\end{proposition}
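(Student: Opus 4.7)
The plan is to reduce each of the two fragments to the single-boolean-connective cases already handled by Propositions~\ref{prop:FGXonebool} and~\ref{prop:FGXonebool_other}. I focus on $\LTL(\F,\land,\lor)$ first; the case of $\LTL(\G,\land,\lor)$ will follow by a dual argument based on Fact~\ref{fact:negationwithoutX}.

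The structural observation for $\LTL(\F,\land,\lor)$ is that both $\F$ and $\land$ distribute over $\lor$ (for $\F$, this is equivalence~\ref{F5} of Fact~\ref{fact:basics}), so by repeatedly pushing disjunctions to the top we see that every formula of $\LTL(\F,\land,\lor)$ is equivalent to a disjunctive normal form $\bigvee_{i=1}^{N} \psi_i$ where each $\psi_i \in \LTL(\F,\land)$. Suppose $\phi \in \LTL(\F,\land,\lor)$ separates $P$ from $N$, and let $\phi \equiv \bigvee_i \psi_i$ be such a DNF equivalent; $N$ may be exponential in $\size{\phi}$ but I never construct it, I only reason about its existence. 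For each $u \in P$, since $u \models \phi$ there is some index $i_u$ with $u \models \psi_{i_u}$, and since no $v \in N$ satisfies $\phi$ none satisfies $\psi_{i_u}$. Hence $\psi_{i_u}$ is an $\LTL(\F,\land)$ formula separating $\set{u}$ from $N$.

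Applying Proposition~\ref{prop:FGXonebool} to the sample $(\set{u}, N)$, I obtain a polynomial-size $\psi'_u \in \LTL(\F,\land)$ that still separates $\set{u}$ from $N$, of size polynomial in $|u| + \sum_{v \in N}|v|$. The formula $\bigvee_{u \in P} \psi'_u$ then separates $P$ from $N$, lives in $\LTL(\F,\land,\lor)$, and has size polynomial in $|P| + |N|$, as required.

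For $\LTL(\G,\land,\lor)$, I use the duality given by Fact~\ref{fact:negationwithoutX}: applied recursively, negation can be pushed all the way down to atoms in the fragment $\LTL(\F,\G,\land,\lor)$, transforming any $\LTL(\G,\land,\lor)$ formula of size $s$ into an $\LTL(\F,\land,\lor)$ formula of size $s$ (atomic literals swap between $a$ and $\neg a$, which are both allowed in negation normal form), and vice-versa. Thus a separating formula for $(P,N)$ in $\LTL(\G,\land,\lor)$ negates to a separating formula for $(N,P)$ in $\LTL(\F,\land,\lor)$ of the same size. Applying the first part to $(N,P)$ gives a polynomial-size separating formula in $\LTL(\F,\land,\lor)$, which negates back to a polynomial-size separating formula for $(P,N)$ in $\LTL(\G,\land,\lor)$. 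The main subtlety I foresee is ensuring that negation does not leave the intended fragment; this is handled by working in negation normal form, where negated atoms $\neg a$ are themselves atomic formulas of $\LTL(\G,\land,\lor)$ and $\LTL(\F,\land,\lor)$ alike.
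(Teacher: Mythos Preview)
Your proof is correct, but it proceeds differently from the paper's own argument. You reduce $\LTL(\F,\land,\lor)$ to the already-proved case $\LTL(\F,\land)$: push $\lor$ to the top (using that both $\F$ and $\land$ distribute over $\lor$), pick one disjunct per positive word, and invoke Proposition~\ref{prop:FGXonebool} on each single-positive-word sample before taking the final disjunction. The paper instead argues directly and semantically: for each negative word $v_j$ it looks for a weak subword common to all the $u_i$ that is not a weak subword of $v_j$ (or a first letter separating them), builds the corresponding fattern, and takes the conjunction over $j$; it then shows that if this construction fails for some $v_j$, no separating formula exists at all. Your route is shorter and modular, reusing the machinery of Proposition~\ref{prop:FGXonebool}; the paper's route is self-contained and yields an explicit polynomial-time construction of the witness formula together with a characterisation of when separation is impossible. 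For $\{\G,\land,\lor\}$ both you and the paper take the same duality argument via Fact~\ref{fact:negationwithoutX}.
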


\begin{proof}
	Let $(P,N)$ a sample.	
	Let $j \in [1,n]$, 
	if there exists $w_j$ a weak subword of all $u_i$ that is not a weak subword of $v_j$ 
	then we set $\psi(w_j) = \F(a_1 \land \F (a_2 \land \cdots \F a_k))$ with $w_j = a_1 \cdots a_k$. 
	This formula is satisfied by exactly the words which have $w_j$ as a weak subword, 
	hence by all $u_i$ but not $v_j$.
	We set $\phi_j$ as $\psi(w_j)$ if it exists and $a$ if all $u_i$ start with an $a$ 
	and $v_j$ does not for some $a \in \Sigma$.
		
	If $\phi_j$ is defined for all $j$ then the formula $\bigwedge_j \phi_j$ is a separating formula.
	Otherwise there exists a $v_j$ such that there are no weak subwords of 
	all the $u_i$ that are not weak subwords of $v_j$ and 
	either the $u_i$ do not start with the same letter or $v_j$ also starts with that letter. 
	In that case we can easily prove by induction that all $\LTL(\F, \land, \lor)$ formulas of the form $\F \psi$ that are satisfied by all $u_i$ are also satisfied by $v_j$, and as all formulas of the form $a$ with $a \in \Sigma$ satisfied by all $u_i$ are also satisfied by $v_j$, the same can be said about boolean combinations of those two types of formulas, and thus there are no separating formulas in $\LTL(\F, \land, \lor)$.
	
	Concerning $\set{\G, \land, \lor}$, one can easily infer from Fact~\ref{fact:negationwithoutX} that we can turn any formula of $\LTL(\F, \land, \lor)$ into one of $\LTL(\G, \land, \lor)$ equivalent to its negation, and vice-versa. Hence the short formula property of $\set{\G, \land, \lor}$ is a consequence of the one of $\set{\F, \land, \lor}$. 
\end{proof}

\begin{proposition}
	$\set{\F, \G, \land, \lor}$ has the short formula property.
\end{proposition}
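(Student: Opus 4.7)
The plan is to combine the techniques of Propositions~\ref{prop:FGXonebool} and~\ref{prop:FGXonebool_other}---which handled only $\F$ or only $\G$---into a single argument accommodating arbitrary alternation. As in the $\{\F, \land, \lor\}$ case, I would build a separating formula of the shape $\bigvee_{u \in P}\bigwedge_{v \in N} \phi_{u,v}$; the overall size is polynomial as soon as each pairwise distinguishing $\phi_{u,v}$ is polynomial in $|u|+|v|$.

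Given the hypothetical separating formula $\phi \in \LTL(\F, \G, \land, \lor)$, I would first normalise its outer layer using items~\ref{F5} and~\ref{F6} of Fact~\ref{fact:basics}: every maximal $\F$-only segment becomes a conjunction of fatterns $\F(\psi_1 \land \F(\psi_2 \land \cdots \land \F \psi_r))$, and dually every maximal $\G$-only segment becomes a disjunction of gatterns. Fix now a pair $(u, v) \in P \times N$. Since $u \models \phi$ and $v \not\models \phi$, I can descend the boolean skeleton of $\phi$---at each $\land$ keeping a child not satisfied by $v$, at each $\lor$ a child satisfied by $u$---until reaching a witness fattern or gattern $\chi$ with $u \models \chi$ and $v \not\models \chi$. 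Applying Lemma~\ref{lem:small-fattern} (or Lemma~\ref{lem:small-gattern}) then shrinks the outer width of $\chi$ to at most $\size{v}+1$ (or $\size{u}+1$), with surviving inner subformulas $\alpha_{i_j}$ of strictly smaller temporal-operator count that must still distinguish specific suffix pairs of $u$ and $v$. I would then recurse on each $\alpha_{i_j}$ applied to the relevant suffixes, using induction on the number of temporal operators in $\phi$.

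The main obstacle is avoiding multiplicative blow-up across alternation depth: naively, each layer multiplies the size by $\ell+1$, which over arbitrary depth could explode. The key is that every recursive step is applied against a single-pair subsample of total size at most $2\ell$, so induction on the number of temporal operators yields each $\phi_{u,v}$ of size polynomial in $\ell$ only, independent of $n$ and $m$. Combining the $n \cdot m$ pair formulas then produces a separating formula of size polynomial in $|P|+|N|$. The base case of the induction, where $\phi$ has no temporal operators, is straightforward as atoms and their boolean combinations are already of size polynomial in $\size{\Sigma}$.
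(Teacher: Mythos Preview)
Your recursion does not close. You induct on the number of temporal operators in the given separating formula $\phi$, but that quantity is not bounded in terms of the sample: $\phi$ may have arbitrarily large alternation depth. At each alternation layer Lemmas~\ref{lem:small-fattern} and~\ref{lem:small-gattern} leave you with up to $\ell+1$ inner subformulas $\alpha_{i_j}$ to recurse into, and nothing you have said bounds the number of layers by a function of $\ell$. The resulting $\phi_{u,v}$ can therefore have size $(\ell+1)^{d}$ where $d$ is the $\F/\G$ alternation depth of $\phi$, which is exactly the multiplicative blow-up you flagged and did not actually eliminate. Restricting to a single pair $(u,v)$ controls $n$ and $m$ but not $d$, so the claimed ``polynomial in $\ell$ only'' conclusion does not follow from the stated induction.

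The paper's proof sidesteps this entirely by never looking at $\phi$. It writes down an explicit candidate: for each $u_i$ the fattern $\phi(u_i)$ satisfied exactly by words having $u_i$ as a weak subword, and for each $v_j$ the dual gattern $\phi(v_j)$ satisfied exactly by words \emph{not} having $v_j$ as a weak subword; then it takes $\psi = \bigvee_i \bigwedge_j \bigl(\phi(u_i) \land \phi(v_j)\bigr)$, which is visibly polynomial in $|P|+|N|$. If $\psi$ fails to separate, some $u_i$ and $v_j$ are weak subwords of each other, and an easy induction shows they satisfy the same $\LTL(\F,\G,\land,\lor)$ formulas, so no separating formula exists at all. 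This is both shorter and avoids any dependence on the structure of the hypothetical $\phi$; to salvage your approach you would need an independent bound on alternation depth (or a memoization argument over suffix pairs), neither of which is supplied.
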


\begin{proof}
	Let $(P,N)$ a sample. 
	For all $u_i$ we set $\phi(u_i) = \F(a_1 \land \F (a_2 \land \cdots \F a_k))$ with $u_i = a_1 \cdots a_k$. This formula is satisfied by exactly the words with $u_i$ as a weak subword. For all $v_j$ we set $\phi(v_j) = \G(\bar{a_1} \lor \G (\bar{a_2} \lor \cdots \G \bar{a_k}))$ with $v_j = a_1 \cdots a_k$ and $\bar{a} = \bigvee_{b \in \Sigma\setminus\set{a}} b$ for all $a \in \Sigma$. This formula is satisfied by exactly the words which do not have $v_j$ as a weak subword. For all pairs $(u_i, v_j)$ we set $\psi_{i,j}$ as $\phi(u_i) \land \phi(v_j)$, and finally we define $\psi$ as $\bigvee_{i} \bigwedge_j \psi_{i,j}$.
	If this formula does not separate the $u_i$ and $v_j$ then there must exist $u_i$ and $v_j$ which are weak subwords of each other. An easy induction on the formula then shows that $u_i$ and $v_j$ satisfy the same formulas in $\LTL(\F,\G,\land,\lor)$, and thus there does not exist any separating formula.
\end{proof}

\section{Degenerate cases: the shortest formula property}
\label{sec:shortest_formula_property}
\begin{definition}
	Let $Op$ a set of operators, we say that $Op$ has the \emph{shortest formula property} if there is a polynomial time algorithm solving the $\LTL(Op)$ learning problem and outputting the minimal separating formula if it exists.
\end{definition}

\begin{theorem}
	Let $Op$ such that either $Op \subseteq \set{\F, \G, \X}$, $Op \subseteq \set{\lor, \land}$, $Op = \set{\G, \land}$.
	Then $Op$ has the shortest formula property, and therefore the $\LTL(Op)$ learning problem is in $\P$.
	The same holds for $Op = \set{\F, \lor}$ if the size of the alphabet is fixed.
\end{theorem}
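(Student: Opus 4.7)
The plan is to handle the four cases of the theorem separately: in each one, I use the equivalences of Fact~\ref{fact:basics} to bring any candidate separating formula into a normal form drawn from a polynomial (or, in the last case, constant) number of templates, and then enumerate these templates in increasing size to find the shortest one that separates $P$ from $N$. Each template can be tested in $O(n \ell)$ time by the semantic evaluation used in the proof of Theorem~\ref{thm:recursive_algorithm}, so the total running time is polynomial.

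For $Op \subseteq \{\F,\G,\X\}$, every formula is a chain of unary operators applied to a single atom $c \in \Sigma$. Using $\F\F \equiv \F$, $\G\G \equiv \G$, the commutations $\F\G \equiv \G\F$ and $\F\X \equiv \X\F$, and the annihilation $\G\X\phi \equiv \bot$, I rewrite every satisfiable chain into the canonical form $\X^a \alpha\, c$ with $a \in [0,\ell]$, $\alpha \in \{\epsilon,\F,\G,\F\G\}$ restricted to the operators available in $Op$, and $c \in \Sigma$. Each rewriting step is size-non-increasing, so the shortest separator itself lies in this form, leaving $O(\ell |\Sigma|)$ candidates.

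For $Op \subseteq \{\lor,\land\}$, the absence of temporal operators makes $\phi$ depend only on $w(1)$, so $\phi$ computes a subset $T_\phi \subseteq \Sigma$. Separation is possible iff $S_P \cap S_N = \emptyset$, where $S_P,S_N$ are the first-letter sets of positive and negative words; an induction on formula size shows that any formula computing a nonempty set $T$ has size at least $2|T|-1$, matched by the flat disjunction $\bigvee_{c \in T} c$, so the optimum choice is $T = S_P$. For $Op = \{\G,\land\}$, the equivalences $\G(\phi \land \psi) \equiv \G\phi \land \G\psi$, $\G\G \equiv \G$, and $a \land b \equiv \bot$ for distinct atoms collapse every satisfiable formula to one of $\top$, an atom $a$, or $\G a$, leaving $O(|\Sigma|)$ candidates. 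For $Op = \{\F,\lor\}$ with $|\Sigma|$ fixed, the dual equivalences $\F(\phi \lor \psi) \equiv \F\phi \lor \F\psi$ and $\F\F \equiv \F$ reduce every formula to a disjunction of elements of $\{c, \F c : c \in \Sigma\}$; there are at most $2^{2|\Sigma|} = O(1)$ such disjunctions.

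The main obstacle is the size accounting for the $\{\G,\land\}$ case: the equivalence $\G\phi \land \G\psi \equiv \G(\phi \land \psi)$ is in fact size-increasing when read left-to-right, so one cannot naively distribute $\G$ through conjunctions without risking extra size. The saving grace is that with positive atoms every nontrivial conjunction of distinct atoms already simplifies to $\bot$, cutting the search to the three template shapes above; an analogous collapse governs the $\{\F,\lor\}$ case. This is also the reason the theorem restricts $\{\F,\lor\}$ to a fixed alphabet: with a variable alphabet the number of inequivalent disjunctions of $\{c, \F c\}$-elements blows up exponentially, defeating polynomial enumeration.
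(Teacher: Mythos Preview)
Your four-case decomposition, the normal forms you derive, and the enumeration strategy all mirror the paper's argument (which is split into four short propositions proving exactly the same normalisations). The minimality argument you give for $Op \subseteq \{\lor,\land\}$ via leaf-counting is a cleaner variant of the paper's ``every letter of $A$ must appear'' argument, and your handling of $\{\F,\G,\X\}$ and $\{\G,\land\}$ is essentially identical to the paper's.

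One small gap worth flagging in the $\{\F,\lor\}$ case: your templates (flat disjunctions of $c$ and $\F c$) need \emph{not} contain a size-minimal representative of each equivalence class, because the rewrite $\F(\phi\lor\psi)\to\F\phi\lor\F\psi$ is size-increasing. For instance $\F(a\lor b)$ has size $4$ while the equivalent template $\F a\lor\F b$ has size $5$, so the shortest separating formula may fail to be a template. The fix is easy and is implicit in both your argument and the paper's: since there are only $2^{2|\Sigma|}=O(1)$ equivalence classes, each with a constant-size representative, the minimal separating formula has size bounded by a constant depending only on $|\Sigma|$; enumerate \emph{all} $\LTL(\F,\lor)$ formulas up to that bound rather than just the templates. (A minor aside: in your discussion of $\{\G,\land\}$ you say $\G\phi\land\G\psi\equiv\G(\phi\land\psi)$ is size-increasing left-to-right, but it actually decreases size in that direction; this does not affect your argument, since the collapse to $\{\top,\bot,a,\G a\}$ goes through regardless.)
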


This theorem is the consequence of the following propositions, which establish it for various subsets $Op$ of operators.

\begin{proposition}
	All $Op \subseteq \set{\lor, \land}$ have the shortest formula property.
\end{proposition}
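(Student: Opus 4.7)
The plan exploits that when $Op \subseteq \set{\land, \lor}$ the fragment $\LTL(Op)$ contains no temporal operator, so a straightforward induction on formulas shows that $w \models \phi$ depends only on the first letter $w(1)$. Accordingly I would reduce the sample to the pair of subsets
\[
A = \set{u(1) : u \in P}, \qquad B = \set{v(1) : v \in N}
\]
of $\Sigma$, and associate to each formula its truth set $T(\phi) = \set{c \in \Sigma : \text{words starting with } c \text{ satisfy } \phi}$. A separating formula exists if and only if $A \cap B = \emptyset$, since otherwise some $u \in P$ and $v \in N$ share their first letter and thus satisfy exactly the same formulas of $\LTL(Op)$.

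The technical core is a size/truth-set tradeoff, proved by structural induction: for every $\phi \in \LTL(\land, \lor)$ of size $s$,
\[
|T(\phi)| \leq (s+1)/2 \quad \text{or} \quad |T(\phi)| \geq |\Sigma| - (s+1)/2.
\]
Atoms satisfy this since their truth sets have size $0$, $1$, $|\Sigma|-1$, or $|\Sigma|$. For $\phi_1 \lor \phi_2$ of sizes $s_1, s_2$ with $s = s_1 + s_2 + 1$, if both $|T(\phi_i)| \leq (s_i+1)/2$ then summing gives $|T(\phi)| \leq (s+1)/2$, and otherwise one $T(\phi_i)$ is already large and is contained in $T(\phi)$; the $\land$ case is dual, reasoning on complements. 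Applied to a separating formula, the constraints $A \subseteq T(\phi) \subseteq \Sigma \setminus B$ force $|A| \leq |T(\phi)| \leq (s+1)/2$ or $|B| \leq |\Sigma \setminus T(\phi)| \leq (s+1)/2$, hence $s \geq \min(2|A|, 2|B|) - 1$.

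This lower bound is matched constructively by the formulas $\bigvee_{a \in A} a$ of size $2|A|-1$ and $\bigwedge_{b \in B} \neg b$ of size $2|B|-1$, so for $Op = \set{\land, \lor}$ the algorithm outputs the shorter of the two after dispatching the trivial corner cases where a size-$1$ atom suffices: $\bot$ when $A = \emptyset$, $\top$ when $B = \emptyset$, the unique letter of $A$ when $|A|=1$, and its negation when $|B|=1$. For proper sub-fragments the same analysis applies with fewer candidates: for $Op = \set{\lor}$ only $\bigvee_{a \in A} a$ and the size-$1$ candidates remain, for $Op = \set{\land}$ only $\bigwedge_{b \in B} \neg b$ and the size-$1$ candidates, and for $Op = \emptyset$ the algorithm simply tests which of $\top$, $\bot$, $c$, or $\neg c$ separates.

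The main obstacle is identifying the correct form of the size/truth-set tradeoff with matching constants; once written as above, the induction is routine and the bounds are tight. All remaining steps (extracting $A$ and $B$, checking disjointness, comparing the two candidate sizes, writing out the chosen formula) are clearly polynomial in $|P| + |N|$, and the output formula has size at most $2 \min(|A|,|B|) - 1$, so the whole procedure both decides the $\LTL(Op)$ learning problem and produces a minimal separating formula in polynomial time.
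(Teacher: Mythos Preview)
Your argument for the full fragment $Op = \{\land,\lor\}$ is correct and in fact cleaner than the paper's. The paper only proposes the candidate $\bigvee_{a\in A} a$ and argues it is minimal because any separating formula ``must contain all letters of $A$''; this ignores that negated atoms $\neg c$ are available and can separate with far fewer leaves (e.g.\ when $|B|=1$). Your size/truth-set tradeoff $|T(\phi)|\le (s+1)/2$ or $|T(\phi)|\ge |\Sigma|-(s+1)/2$ is the right tool: it gives the lower bound $s\ge 2\min(|A|,|B|)-1$, matched by whichever of $\bigvee_{a\in A}a$ or $\bigwedge_{b\in B}\neg b$ is shorter. So your main case is a genuinely different and more robust route than the paper's.

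There is, however, a real gap for the proper sub-fragments $Op=\{\lor\}$ and $Op=\{\land\}$. You write that ``the same analysis applies with fewer candidates'', but your tradeoff only yields the lower bound $2\min(|A|,|B|)-1$, valid for all of $\LTL(\land,\lor)$. In $\LTL(\lor)$ with, say, $|A|=3$ and $|B|=2$, your lower bound is $3$, yet the minimal separating disjunction has size $2|A|-1=5$; you never justify why $\bigvee_{a\in A}a$ is optimal in that fragment. The missing argument is short but different: in a pure disjunction of atoms, any disjunct $\neg c$ or $\top$ already has truth set of size $\ge |\Sigma|-1$, so if $|B|\ge 2$ no such disjunct can appear, forcing $|A|$ positive atoms and size $\ge 2|A|-1$ (and dually for $\{\land\}$). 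Add that one-line observation for each sub-fragment and your proof is complete.
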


\begin{proof}
	Let $(P,N)$ a sample. 
	Let $A \subseteq \Sigma$ be the set of first letters of the $u_i$. Consider the formula $\psi = \bigvee_{a \in A} a$.
	We distinguish two cases.
	\begin{itemize}
		\item Either $\psi$ does not separate the $u_i$ and $v_i$, meaning that some $u_i$ and $v_j$ share the same first letter. In that case $u_i$ and $v_j$ satisfy the same formulas of $\LTL(\lor, \land)$ and thus there is no separating formula.
	
		\item Or $\psi$ does separate the $u_i$ and $v_j$. In that case it is also of minimal size: say some element $a$ of $A$ does not appear in a formula $\phi \in \LTL(\lor, \land)$, then as it is satisfied by some $u_i$ starting with $a$, $\psi$ is a tautology and is also satisfied by the $v_j$.  Hence a separating formula has to contain all letters of $A$, and thus also at least $\size{A}-1$ boolean operators. As a result, $\psi$ is of minimal size.
	\end{itemize}
\end{proof}

\begin{proposition}
	All $Op \subseteq \set{\F, \G, \X}$ have the shortest formula property.
\end{proposition}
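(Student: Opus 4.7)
The plan is to exploit the fact that, in the absence of boolean connectives, every formula of $\LTL(\F,\G,\X)$ is a chain $O_1 O_2 \cdots O_k b$ of unary operators applied to a single atom $b$ (a letter, a negated letter, or $\top/\bot$). I would first push any such chain into a very small normal form using the equivalences from Fact~\ref{fact:basics}: $\F\F \equiv \F$, $\G\G \equiv \G$, $\G\X \equiv \bot$, $\F\X \equiv \X\F$, and $\G\F \equiv \F\G$. Whenever a $\G$ is immediately followed by an $\X$ somewhere in the chain the whole formula collapses to $\bot$; otherwise every $\X$ can be pushed past all the $\F$'s to its left, ending up as an outermost prefix $\X^a$. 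The remaining sequence of $\F$'s and $\G$'s then reduces, by idempotence and commutativity, to one of only four possibilities, namely $\epsilon$, $\F$, $\G$, or $\F\G$. Each rewriting step either preserves or strictly decreases the size of the formula and never introduces operators that were not already present, so the normalised formula stays inside $\LTL(Op)$ and is no larger than the original.

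Next I would observe that if $a$ exceeds the maximum length $\ell$ of the words in the sample, then $\X^a \alpha b$ is falsified on every input word and is therefore equivalent to $\bot$ for the purpose of separating $(P,N)$; we may thus restrict to $a \in [0,\ell]$. This leaves only $O(\ell \cdot |\Sigma|)$ pairwise non-equivalent normal-form candidates, each of size at most $\ell + 3$.

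The algorithm is then a brute-force enumeration: iterate over all normal-form formulas $\X^a \alpha b$ with $a \in [0,\ell]$, $\alpha \in \set{\epsilon, \F, \G, \F\G}$ and $b$ an atom (restricted to the operators of $Op$), together with $\top$ and $\bot$; evaluate each against the sample in time $O(n \cdot \ell)$; and return the smallest one that separates. Since every separating $\LTL(Op)$ formula reduces to an equivalent normal-form separating formula of at most the same size, the output is genuinely a minimum-size separating formula, which establishes the shortest formula property in polynomial time. The main obstacle I anticipate is a careful verification that the rewriting rules preserve semantics over finite words — especially $\G\X\phi \equiv \bot$, which depends on the final suffix having length one, and $\G\F \equiv \F\G$ on finite words — and that each rule is non-size-increasing; once these facts are nailed down the remainder of the argument is routine.
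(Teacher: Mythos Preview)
Your proposal is correct and takes essentially the same approach as the paper's proof: both reduce every $\LTL(\F,\G,\X)$ formula to a normal form in the polynomial-size set $\set{\top,\bot,\X^k b,\X^k\F b,\X^k\G b,\X^k\F\G b : k<\ell}$ via exactly the equivalences from Fact~\ref{fact:basics} that you list, and then enumerate and test each candidate against the sample. The only cosmetic addition in your write-up is the explicit remark that none of the rewrites introduce new operators (so the normal form stays in $\LTL(Op)$ for every $Op\subseteq\set{\F,\G,\X}$), which the paper leaves implicit.
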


\begin{proof}
	Let $\ell \in \N$, every formula $\phi \in \LTL(\F,\G,\X)$ is equivalent over words of length at most $\ell$ to a formula of smaller or equal size in \[
	\set{\top, \bot, \X^k a, \X^k \F a, \X^k \G a, \X^k \F \G a \mid a \in \Sigma, 0\leq k < \ell}.
\]	
	This can be shown by observing that over those words, for all $\psi$, $\F \X \psi$ and $\X \F \psi$ are equivalent, as well as $\G \X \psi$ and $\bot$. This allows to push all $\X$ operators at the top of the formula.	
	Finally, every formula of the form $\X^k \psi$ with $k \geq N$ is equivalent to $\bot$.

	As a result we can compute a minimal separating formula by enumerating formulas from this set (of polynomial size) and checking which ones separate the positive and negative words.
\end{proof}

\begin{proposition}
	$\set{\G, \land}$ has the shortest formula property.
\end{proposition}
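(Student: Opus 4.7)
The plan is to reduce the $\LTL(\G,\land)$ learning problem to checking a list of $O(\size{\Sigma})$ semantically distinct candidate formulas. First I would normalise every formula in the fragment. Using the equivalences $\G\G\phi \equiv \G\phi$ and $\G(\phi \land \psi) \equiv \G \phi \land \G \psi$ from Fact~\ref{fact:basics}, together with the absorption laws for $\top$ and $\bot$, every formula of $\LTL(\G,\land)$ is equivalent either to $\top$, to $\bot$, or to a conjunction of literals each of the form $a$ or $\G a$ for some $a \in \Sigma$.

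Next I would classify these normal forms semantically. Writing $A \subseteq \Sigma$ for the set of atoms occurring at top level and $B \subseteq \Sigma$ for those occurring under a $\G$, a word $w$ satisfies the formula iff $w(1) = a$ for every $a \in A$ and every position of $w$ carries $b$ for every $b \in B$. Hence the formula is unsatisfiable unless $\size{A} \leq 1$, $\size{B} \leq 1$, and (when both are singletons) they coincide. Using $a \land \G a \equiv \G a$, the satisfiable classes collapse to $\top$, the atoms $a$ for $a \in \Sigma$, and the formulas $\G a$ for $a \in \Sigma$, whose minimum-size representatives have sizes $1$, $1$, and $2$ respectively.

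The algorithm then enumerates these $2\size{\Sigma}+2$ candidates (including $\bot$ of size $1$), tests each against the sample in time $O(\size{P}+\size{N})$, and outputs one of smallest size that separates $P$ from $N$, or rejects if none does. The main point to verify is that the candidates listed are of minimum size within their equivalence class; this follows because no formula has size $0$, and $\G a$ is not equivalent to any formula of size $1$ (the only such formulas are $\top$, $\bot$, and single atoms, none of which defines the language $a^{+}$). Correctness then follows because any separating $\LTL(\G,\land)$-formula is equivalent to exactly one enumerated representative and has size no smaller than it.
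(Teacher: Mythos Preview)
Your proposal is correct and follows essentially the same approach as the paper: both argue that every $\LTL(\G,\land)$ formula is equivalent to one of $\top$, $\bot$, $a$, or $\G a$ (the paper states this as ``an easy induction'' while you spell out the normalisation via Fact~\ref{fact:basics} and the semantic collapse), and both conclude by enumerating this finite list of candidates. Your version adds the explicit verification that each representative is of minimum size in its equivalence class, which the paper leaves implicit.
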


\begin{proof}
	An easy induction on $\psi$ shows that all $\LTL(\G, \land)$ formulas are equivalent over finite words to a formula of the form $\top$, $\bot$, $a$ or $\G a$ with $a \in \Sigma$, i.e., to a formula of size at most 2.
	As those can be enumerated in polynomial time, we can compute a separating formula of minimal size or conclude that it does not exist in polynomial time.
\end{proof}

A corollary of our results is that the classification between \P~and \NP~mostly does not depend upon whether we consider the alphabet as part of the input or not. The only affected subcase is $Op= \set{\F, \lor}$, which is \NP-hard when the alphabet is not fixed (Theorem~\ref{thm:NP_complete_non_constant_alphabet}).

\begin{proposition}
	$\set{\F, \lor}$ has the shortest formula property when the alphabet is fixed.
\end{proposition}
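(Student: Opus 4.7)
The plan is to exploit that with a fixed alphabet $\Sigma$, the fragment $\LTL(\F, \lor)$ admits only a constant number of semantic equivalence classes over non-empty finite words. First I would establish a normal form: every non-trivial formula $\phi \in \LTL(\F, \lor)$ is semantically equivalent to some $\phi_{A, B} := \left( \bigvee_{a \in A} a \right) \lor \F\left( \bigvee_{b \in B} b \right)$ for disjoint subsets $A, B \subseteq \Sigma$ with $A \cup B \neq \emptyset$ (empty disjunctions being simply omitted), and every trivial formula is equivalent to $\top$ or $\bot$. This follows by an easy induction using the identities $\F(\alpha \lor \beta) \equiv \F\alpha \lor \F\beta$, $\F\F\alpha \equiv \F\alpha$, the absorption $a \lor \F a \equiv \F a$, and the propagation of $\top$ and $\bot$ through $\F$ and $\lor$.

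Next I would show that within each equivalence class the canonical formula $\phi_{A, B}$ has minimum size. For an arbitrary $\phi \equiv \phi_{A, B}$, an easy induction on $\phi$ shows that if a letter $c$ does not occur as a leaf of $\phi$, then no word of the form $c^k$ satisfies $\phi$; but every such $c^k$ does satisfy $\phi_{A, B}$ when $c \in A \cup B$. Hence $\phi$ must contain at least $|A \cup B|$ letter leaves. Moreover, if $B \neq \emptyset$ then $\phi$ must contain at least one $\F$ operator, since a pure disjunction of letters only constrains the first letter of its input and cannot match $\phi_{A, B}$ when $B$ is nonempty. Combining these facts with the observation that a tree with $n$ leaves whose internal nodes are unary or binary has at least $n - 1$ binary internal nodes yields a size lower bound of $2|A| + 2|B|$ when $B \neq \emptyset$ and $2|A| - 1$ otherwise, matching $\size{\phi_{A, B}}$.

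The polynomial-time algorithm is then immediate: enumerate all disjoint pairs $(A, B) \subseteq \Sigma \times \Sigma$ (also including the candidates $\top$ and $\bot$), test whether the corresponding $\phi_{A, B}$ separates $P$ from $N$ in time $O(|P| + |N|)$, and output the separating candidate of minimum size, or reject if none exists. Since $|\Sigma|$ is fixed there are $O(3^{|\Sigma|}) = O(1)$ candidates, so the algorithm runs in polynomial time. The main obstacle is the size lower bound in the second step: one must rule out that some non-canonical formula, perhaps exploiting nested $\F$s, could represent a class using strictly fewer nodes than the canonical form. The letter-counting and single-$\F$-necessity arguments suffice precisely because neither $\F\F\alpha \equiv \F\alpha$ nor $\F(\alpha \lor \beta) \equiv \F\alpha \lor \F\beta$ allows one to reduce the distinct-letter count or the binary-node count below those of $\phi_{A, B}$.
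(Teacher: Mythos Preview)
Your approach is correct and rests on the same core observation as the paper: over a fixed alphabet, $\LTL(\F,\lor)$ has only a constant number of semantic equivalence classes, each represented by a disjunction of atoms of the form $a$ or $\F a$. The paper's proof is terser and arguably incomplete for the ``shortest'' part: it observes there are at most $2^{2|\Sigma|}$ such disjunctions and constructs one specific separating disjunction (the union of all atoms rejected by every $v_j$), but never argues that the output is size-minimal; one has to read in an implicit brute-force enumeration of all formulas up to the constant bound. You go further by explicitly proving that the canonical $\phi_{A,B}$ is size-minimal in its class, via the letter-occurrence argument (each $c \in A \cup B$ must appear as a leaf), the exact count of binary nodes in a unary/binary tree, and the necessity of at least one $\F$ when $B \neq \emptyset$. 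This buys you a direct, self-contained justification that the enumerated candidate of smallest size really is the global minimum, which the paper leaves to the reader.
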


\begin{proof}
	Since $\F$ commutes with $\lor$, every formula can be turned into an equivalent disjunction of formulas of the form $a$ or $\F a$ with $a \in \Sigma$, of polynomial size. There are only $2^{2\size{\Sigma}}$ such formulas, hence when given a sample $(P,N)$ one can simply select the formulas that are not satisfied by any $v_j$, take their disjunction, and check that all $u_i$ satisfy the resulting formula. If they do we have a separating formula, otherwise there cannot exist any.
\end{proof}

\section{An approximation algorithm for $\LTL(\X,\land)$}
\label{sec:approximation_X_and}
An $\alpha$-approximation algorithm for learning a fragment of $\LTL$ does the following:
the algorithm either determines that there are no separating formulas, 
or constructs a separating formula $\phi$ which has size at most $\alpha \cdot m$ with $m$ the size of a minimal separating formula.

\begin{theorem}\label{thm:approximation_algorithm_ltl_X_and}
	There exists a $O(n \cdot \ell^2)$ time $\log(n)$-approximation algorithm for learning $\LTL(\X,\land)$.
\end{theorem}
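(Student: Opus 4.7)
The plan is to reduce the $\LTL(\X,\land)$ learning problem to Set Cover and apply the standard greedy algorithm, wrapped in an outer loop that fixes the maximum position used in the formula.

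By the pattern form recalled in Section~\ref{sec:definitions}, every $\LTL(\X,\land)$ formula is, up to equivalence, either $\bot$ or a pattern $\bigwedge_{p \in P} \X^{p-1} c_p$ for some non-empty set of positions $P \subseteq [1,\ell]$ and letters $c_p \in \Sigma$; written in the nested form it has size exactly $\max(P) + 2\size{P} - 2$. I call a constraint $(p,c)$ \emph{valid} when every positive word $u_i$ has length at least $p$ and satisfies $u_i(p) = c$. At most one constraint per position is valid, so the entire list of valid constraints can be enumerated in $O(n\ell)$ time, and for each valid $(p, c_p)$ the set $S_p \subseteq \set{v_1, \dots, v_n}$ of negative words that violate it can be precomputed in $O(n\ell)$ time overall. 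A pattern built from valid constraints at positions $P$ separates iff $\bigcup_{p \in P} S_p = \set{v_1, \dots, v_n}$.

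The algorithm then iterates $L$ from $1$ to $\ell$ and, for each $L$, runs the classical greedy Set Cover on the sub-instance keeping only the valid constraints at positions $\leq L$; among successful runs it returns the pattern of smallest size, and if no $L$ succeeds it declares that no separating formula exists. For the approximation, fix an optimal pattern with positions $P^* = \set{p_1^* < \dots < p_{k^*}^*}$ and size $m^* = p_{k^*}^* + 2k^* - 2$. At the iteration $L = p_{k^*}^*$ the sub-instance admits the cover $\set{S_{p_i^*}}_{i=1}^{k^*}$, so the greedy algorithm returns a cover of size at most $k^* H_n$, where $H_n \leq \ln n + 1$ is the harmonic number, giving a pattern of size at most $p_{k^*}^* + 2 k^* H_n - 2 \leq H_n \cdot m^* = O(\log n) \cdot m^*$. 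For the running time, each greedy run costs $O(n\ell + \ell^2)$ using the standard bookkeeping (per set, a counter of uncovered elements it still covers, updated upon each inclusion), and summing over the $\ell$ choices of $L$ yields the announced $O(n \ell^2)$ bound in the regime $\ell = O(n)$; otherwise the same argument gives $O(n\ell^2 + \ell^3)$.

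The main obstacle is that a single unweighted greedy Set Cover only controls the number of chosen constraints, whereas the pattern size also depends on the maximum position used, which can be as large as $\ell$ regardless of the number of picks. Enumerating $L$ decouples these two contributions: fixing $L = p_{k^*}^*$ caps $\max(P)$ exactly, while the greedy guarantee caps $\size{P}$ by $k^* H_n$, and a short calculation then shows that the sum of the two terms is within an $H_n$ factor of $m^*$.
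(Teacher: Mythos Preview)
Your approach is essentially the paper's: normalise to patterns, iterate over the last position, and run greedy Set Cover inside; the approximation analysis is the same. The only gap is the running time: you concede $O(n\ell^2+\ell^3)$, but the stated $O(n\ell^2)$ follows once you observe that greedy performs at most $n$ iterations (each iteration covers at least one new negative word), so the find-max cost per run is $O(n\ell)$ rather than $O(\ell^2)$, and summing over the $\ell$ choices of $L$ gives $O(n\ell^2)$ unconditionally.
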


Recall the definition of patterns: a word $u = u(1) \dots u(\ell)$ and $p_1 < p_2 < \dots < p_\ell$ induce the following $\LTL(\X,\land)$ formula, called a \textit{pattern}:
\[
\Pattern = \X^{p_1 - 1}(u(1) \wedge \X^{p_2 - i_1}(\cdots \wedge \X^{p_\ell - p_{\ell-1}} u(\ell))\cdots).
\]
It is equivalent to the (larger in size) formula $\bigwedge_{i \in [1,\ell]} \X^{p_i - 1} u(i)$, which states that
for each $i \in [1,\ell]$, the letter in position $p_i$ is $u(i)$.
To determine the size of a pattern $\Pattern$ we look at two parameters: its last position $\last(\Pattern) = i_p$ and its width $\width(\Pattern) = p$.
The size of $\Pattern$ is $\last(P) + 2 (\width(P) - 1)$.
The two parameters of a pattern, last position and width, hint at the key trade-off we will have to face in learning $\LTL(\X,\wedge)$ formulas:
do we increase the last position, to reach further letters in the words, or the width, to further restrict the set of satisfying words?

\begin{lemma}\label{lem:normalisation_X_and}
For every formula $\phi \in \LTL(\X,\wedge)$ there exists an equivalent pattern of size smaller than or equal to $\phi$.
\end{lemma}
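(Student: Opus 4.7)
The plan is to proceed by structural induction on the formula $\phi \in \LTL(\X,\land)$, showing that each $\phi$ is equivalent to some pattern $\Pattern_\phi$ with $|\Pattern_\phi| \leq |\phi|$, where the size of a pattern is $\last + 2(\width - 1)$. The three cases are $\phi = a$ (a single letter, already a one-position pattern of size $1$), $\phi = \X \phi'$ (apply induction to $\phi'$, then shift all positions of $\Pattern_{\phi'}$ by $+1$: the last position increases by one and the width is unchanged, so the pattern size grows by exactly one, matching the extra $\X$ node on top of $\phi'$), and $\phi = \phi_1 \land \phi_2$ (the interesting case).

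For the conjunction case, I would let $\Pattern_i$ be the inductive pattern for $\phi_i$, with position set $P_i$ and letter assignment $u_i : P_i \to \Sigma$, and merge them into $\Pattern$ with positions $P_1 \cup P_2$ and the common extension of $u_1, u_2$ (assuming they agree on $P_1 \cap P_2$). Writing $L_i = \last(\Pattern_i)$, $W_i = \width(\Pattern_i)$, and $c = |P_1 \cap P_2|$, we have $\last(\Pattern) = \max(L_1, L_2)$ and $\width(\Pattern) = W_1 + W_2 - c$, so the size comparison reduces to the arithmetic inequality
\[
\max(L_1, L_2) + 2(W_1 + W_2 - c - 1) \;\leq\; (L_1 + 2(W_1 - 1)) + (L_2 + 2(W_2 - 1)) + 1,
\]
which rearranges to $\min(L_1, L_2) + 2c \geq 1$ and therefore holds because every position is at least $1$, so $L_i \geq 1$. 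Combined with $|\Pattern_i| \leq |\phi_i|$ from the induction hypothesis, this yields $|\Pattern| \leq |\phi_1| + |\phi_2| + 1 = |\phi|$, closing the induction.

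The main obstacle I expect is the remaining subcase where $u_1$ and $u_2$ disagree on some common position: then $\phi_1 \land \phi_2$ is unsatisfiable, whereas every pattern in the paper's sense is satisfiable, so no strictly equivalent pattern exists in the stated form. I would finesse this either by restricting the lemma to satisfiable formulas (which is all that is needed downstream, since a separating formula must be satisfied by the positive words) or by implicitly extending the notion of pattern with a distinguished unsatisfiable element, trivially of size at most $|\phi|$. Apart from this corner case, the proof is a clean induction anchored on the two-parameter size formula, the shift invariance of $\X$, and the one-line arithmetic above.
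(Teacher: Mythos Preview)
Your proof is correct and follows the same overall structural induction as the paper. The one genuine difference is in the conjunction case: the paper merges the two patterns $\Pattern_1,\Pattern_2$ recursively, peeling off one position at a time via a case analysis on whether the leading positions coincide (and handling the contradictory-letter case by declaring $c_1 \wedge c_2$ to be the ``false'' pattern), whereas you go straight to the union $P_1 \cup P_2$ of position sets and close with the single arithmetic inequality $\min(L_1,L_2)+2c \ge 1$. Your route is more direct and, unlike the paper's sketch, actually verifies the size bound explicitly at each step; the paper's recursive construction, on the other hand, makes the syntactic shape of the resulting pattern more visible. Your treatment of the unsatisfiable corner case is also essentially what the paper does informally (it calls $c_1 \wedge c_2$ a pattern even though this does not match the strict-position definition), so your suggestion to either restrict to satisfiable formulas or admit a designated ``false'' pattern is exactly the right fix and aligns with how the lemma is used downstream.
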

\begin{proof}
We proceed by induction on $\phi$.
\begin{itemize}
	\item Atomic formulas are already a special case of patterns.
	\item If $\phi = \X \phi'$, by induction hypothesis we get a pattern $\Pattern$ equivalent to $\phi'$,
	then $\X \Pattern$ is a pattern and equivalent to $\phi$. 
	\item If $\phi = \phi_1 \wedge \phi_2$, by induction hypothesis we get two patterns $\Pattern_1$ and $\Pattern_2$ equivalent to $\phi_1$ and $\phi_2$.
	We use the inductive definition for patterns to show that $\Pattern_1 \wedge \Pattern_2$ is equivalent to another pattern.
	We focus on the case $\Pattern_1 = \X^{i_1} (c_1 \wedge \Pattern'_1)$ and $\Pattern_2 = \X^{i_2} (c_2 \wedge \Pattern'_2)$, 
	the other cases are simpler instances of this one.

	There are two cases: $i_1 = i_2$ or $i_1 \neq i_2$.

	If $i_1 = i_2$, either $c_1 \neq c_2$ and then $\Pattern_1 \wedge \Pattern_2$ is equivalent to false, which is the pattern $c_1 \wedge c_2$,
	or $c_1 = c_2$, and then $\Pattern_1 \wedge \Pattern_2$ is equivalent to $\X^{i_1} (c_1 \wedge \Pattern'_1 \wedge \Pattern'_2)$.
	By induction hypothesis $\Pattern'_1 \wedge \Pattern'_2$ is equivalent to a pattern $\Pattern'$, so the pattern $\X^{i_1} (c_1 \wedge \Pattern')$ is equivalent to $\Pattern_1 \wedge \Pattern_2$, hence to $\phi$.

	If $i_1 \neq i_2$, without loss of generality $i_1 < i_2$, 
	then $\Pattern_1 \wedge \Pattern_2$ is equivalent to 
	$\X^{i_1} (c_1 \wedge \Pattern'_1 \wedge \X^{i_2 - i_1} (c_2 \wedge \Pattern'_2))$.
	By induction hypothesis $\Pattern'_1 \wedge \X^{i_2 - i_1} (c_2 \wedge \Pattern'_2)$ is equivalent to a pattern $\Pattern'$, 
	so the pattern $\X^{i_1} (c_1 \wedge \Pattern')$.
	is equivalent to $\Pattern_1 \wedge \Pattern_2$, hence to $\phi$.
\end{itemize}
\end{proof}

\begin{proof}
	Let $u_1,\dots,u_n,v_1,\dots,v_n$ a set of $2n$ words of length at most $\ell$.
	Thanks to Lemma~\ref{lem:normalisation_X_and} we are looking for a pattern.	
	For a pattern $\Pattern$ we define $I(\Pattern) = \set{i_q \in [1,\ell] : q \in [1,p]}$.
	Note that $\last(\Pattern) = \max I(\Pattern)$ and $\width(\Pattern) = \Card(I(\Pattern))$.
	
	We define the set 
	$X = \set{i \in [1,\ell] : \exists c \in \Sigma, \forall j \in [1,n], u_j(i) = c}$.
	Note that $\Pattern$ satisfies $u_1,\dots,u_n$ if and only if $I(\Pattern) \subseteq X$.
	Further, given $I \subseteq X$, we can construct a pattern $\Pattern$ such that $I(\Pattern) = I$ and $\Pattern$ satisfies $u_1,\dots,u_n$: 
	we simply choose $c_q = u_1(i_q) = \dots = u_n(i_q)$ for $q \in [1,p]$.
	We call $\Pattern$ the pattern corresponding to $I$.
	
	Recall that the size of the pattern $\Pattern$ is $\last(\Pattern) + 2(\width(\Pattern) - 1)$.
	This makes the task of minimising it difficult: 
	there is a trade-off between minimising the last position $\last(\Pattern)$ and the width $\width(\Pattern)$.
	
	Let us consider the following easier problem: 
	construct a $\log(n)$-approximation of a minimal separating pattern with fixed last position.
	Assuming we have such an algorithm, we obtain a $\log(n)$-approximation of a minimal separating pattern
	by running the previous algorithm on prefixes of length $\ell'$ for each $\ell' \in [1,\ell]$.
	
	\vskip1em
	We now focus on the question of constructing a $\log(n)$-approximation of a minimal separating pattern with fixed last position.
	For a set $I$, we write $C_I = \bigcup \set{Y_i : i \in I}$: 
	the pattern corresponding to $I$ does not satisfy $v_j$ if and only if $j \in C_I$.
	In particular, the pattern corresponding to $I$ is separating if and only if $C_I = [1,n]$.
	
	The algorithm constructs a set $I$ incrementally through the sequence $(I_x)_{x \ge 0}$, 
	with the following easy invariant: for $x \ge 0$, we have $C_x = C_{I_x}$.
	The algorithm is greedy: $I_x$ is augmented with $i \in X \setminus I_x$ 
	maximising the number of words added to $C_x$ by adding $i$, which is the cardinality of $Y_i \setminus C_x$.
	
	\vskip1em
	We now prove that this yields a $\log(n)$-approximation algorithm.
	Let $\Pattern_{\text{opt}}$ a minimal separating pattern with last position $\ell$,
	inducing $I_{\text{opt}} = I(\Pattern_{\text{opt}}) \subseteq [1,\ell]$ of cardinal $m$.
	Note that $C_{I_{\text{opt}}} = [1,n]$.
	
	We let $n_x = n - |C_x|$ and show the following by induction on $x \ge 0$: 
	\[
	n_{x+1} \le n_x \cdot \left( 1 - \frac{1}{m} \right) = n_x \cdot \frac{m - 1}{m}.
	\]
	
	We claim that there exists $i \in X \setminus I_x$ such that $\Card(Y_i \setminus C_x) \ge \frac{n_x}{m}$.
	Indeed, assume towards contradiction that for all $i \in X \setminus I_x$ we have $\Card(Y_i \setminus C_x) < \frac{n_x}{m}$,
	then there are no sets $I$ of cardinal $m$ such that $C_I \supseteq [1,n] \setminus C_x$,
	contradicting the existence of $I_{\text{opt}}$.
	Thus there exists $i \in X \setminus I_x$ such that $\Card(Y_i \setminus C_x) \ge \frac{n_x}{m}$,
	implying that the algorithm chooses such an $i$ and 
	$n_{x + 1} \le n_x - \frac{n_x}{m} = n_x \cdot \left( 1 - \frac{1}{m} \right)$.
	
	\vskip1em
	The proved inequality implies $n_x \le n \cdot \left( 1 - \frac{1}{m} \right)^x$.
	This quantity is less than $1$ for $x \ge \log(n) \cdot m$, implying that the algorithm
	stops after at most $\log(n) \cdot m$ steps.
	Consequently, the pattern corresponding to $I$ has size at most $\log(n) \cdot |\Pattern_{\text{opt}}|$,
	completing the claim on approximation.
	
	\vskip1em
	A naive complexity analysis yields an implementation of the greedy algorithm running in time $O(n \cdot \ell)$,
	leading to an overall complexity of $O(n \cdot \ell^2)$ by running the greedy algorithm on the prefixes of length $\ell'$ of $u_1,\dots,u_n,v_1,\dots,v_n$ for each $\ell' \in [1,\ell]$.
\end{proof}

\section{Almost all fragments with the next operator are hard to approximate}
\label{sec:hard_approximation_x}
\begin{theorem}\label{thm:hardness_X_and}
The $\LTL(\X, \wedge)$ learning problem is \NP-hard, 
and there are no $(1 - o(1)) \cdot \log(n)$ polynomial time approximation algorithms unless $\P = \NP$,
even for a single positive word.
\end{theorem}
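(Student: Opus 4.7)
The plan is to reduce from Minimum Set Cover, leveraging the normal form in Lemma~\ref{lem:normalisation_X_and}. By that lemma, a minimum-size separating formula of $\LTL(\X,\wedge)$ is a pattern, and since there is a single positive word $u$ each chosen letter $c_i$ is forced to equal $u(i)$. Thus a separating pattern is entirely determined by its set of positions $I \subseteq [1,\ell]$: if we let $S_i = \set{j : v_j(i) \neq u(i)}$, then the pattern is separating iff $\bigcup_{i \in I} S_i = [1,n]$, and its size is $\max(I) + 2(|I|-1)$. The $\LTL(\X,\wedge)$ learning problem thereby becomes a variant of set cover whose objective mixes the cover size with the largest index used.

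To establish \NP-hardness, I would encode a set cover instance $(T_1,\dots,T_M) \subseteq 2^{[1,N]}$ with target $k$ by taking the alphabet $\set{a,b}$, the positive word $u = a^M$, and, for each $j \in [1,N]$, the negative word $v_j$ given by $v_j(i) = b$ iff $j \in T_i$, so that $S_i = T_i$. To neutralise the trade-off between $\max(I)$ and $|I|$, I would add one sentinel negative word $v_{N+1}$ that differs from $u$ only at position $M$: every separating pattern must then include position $M$, which fixes $\max(I) = M$. The pattern size collapses to $M + 2(|I|-1)$, strictly monotone in cover size, so a cover of size at most $k$ exists iff a separating pattern of size at most $M + 2(k-1)$ exists.

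For the $(1-o(1))\log(n)$-inapproximability, I would feed the same reduction hard instances of Feige's theorem (and its sharpenings): for every $\epsilon > 0$, it is \NP-hard to distinguish set cover instances of optimum at most $m_0$ from those of optimum at least $(1-\epsilon)\ln(N)\cdot m_0$. The pattern sizes then lie in $[3m_0-2,\, M + 2m_0 - 2]$ on YES instances and are at least $M + 2(1-\epsilon)\ln(N)\cdot m_0 - 2$ on NO instances; since $n = N+1$, we have $\log n = (1+o(1))\log N$. The main technical obstacle is that the additive $M$ in the pattern size could a priori cost us a constant factor in the gap. To push the bound all the way to the tight $(1-o(1))\log n$, I would work with hard set cover instances in which the number of sets $M$ is of the same order as the optimum (achievable by calibrating the parameters of the underlying PCP/label-cover construction, or by a preprocessing step that removes sets appearing in no near-optimal cover), so that $M$ is dominated by $2(1-\epsilon)\ln(N)\cdot m_0$ in the NO case and the $\ln N$ factor is preserved up to $(1+o(1))$.
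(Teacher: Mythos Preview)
Your reduction is essentially the same as the paper's: reduce from set cover over $\{a,b\}$ with $u=a^\cdots$, encode membership by $b$'s in the negative words, and add a sentinel to pin $\max(I)$. There is, however, a genuine slip in your sentinel. You place the sentinel difference at position $M$, which already corresponds to the set $T_M$; thus forcing position $M$ into every separating pattern also forces $T_M$ into every cover extracted from it. Your claimed equivalence ``cover of size $\le k$ iff pattern of size $\le M+2(k-1)$'' fails in the forward direction whenever the minimum cover does not contain $T_M$: then any separating $I$ has $|I|\ge k+1$ (the $k$ cover positions plus the mandatory $M$). The paper avoids this by lengthening the words to $\ell+1$ and making column $\ell{+}1$ useless for the original elements ($v_j(\ell{+}1)=a$ for $j\le n$) while the sentinel $v_{n+1}=a^\ell b$ differs only there. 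The fix is one line, but as written your reduction is not correct.

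On the inapproximability, you are right to flag the additive $M$ (the paper's $\ell$) as the obstacle to transferring the tight $(1-o(1))\ln n$ gap: the pattern size is $\ell + 2k + O(1)$, so a multiplicative gap in $k$ becomes a multiplicative gap in formula size only when $\ell$ is small relative to $k_{\text{NO}}$. The paper does not discuss this point at all; it simply cites Dinur--Steurer and asserts that the reduction carries the bound. Your proposed remedy (choose hard instances with $M$ of the same order as the optimum, via parameter tuning in the PCP or by pruning sets) is in the right direction but is not a proof: you would need to exhibit such instances concretely, and ``removing sets not in any near-optimal cover'' is not obviously a polynomial-time preprocessing step. So here you are more careful than the paper, but still short of an argument.
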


Note that Theorem~\ref{thm:approximation_algorithm_ltl_X_and} and Theorem~\ref{thm:hardness_X_and} 
yield matching upper and lower bounds on approximation algorithms for learning $\LTL(\X,\land)$.

The hardness result stated in Theorem~\ref{thm:hardness_X_and} follows from a reduction to the set cover problem,
that we define now.
The set cover decision problem is: given $S_1,\dots,S_\ell$ subsets of $[1,n]$ and $k \in \N$,
does there exists $I \subseteq [1,\ell]$ of size at most $k$ such that $\bigcup_{i \in I} S_i = [1,n]$?
In that case we say that $I$ is a cover. 
An $\alpha$-approximation algorithm returns a cover of size at most $\alpha \cdot k$ where $k$ is the size of a minimal cover.
The following results form the state of the art for solving exact and approximate variants of the set cover problem.

\begin{theorem}[\cite{DinurS14}]
\label{thm:subset_cover}
The set cover problem is \NP-complete, 
and there are no $(1 - o(1)) \cdot \log(n)$ polynomial time approximation algorithms unless $\P = \NP$.
\end{theorem}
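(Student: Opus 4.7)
The plan is to establish membership in \NP~first, which is immediate: given $I \subseteq [1,\ell]$, checking whether $|I| \leq k$ and $\bigcup_{i \in I} S_i = [1,n]$ takes polynomial time. The two remaining pieces (exact \NP-hardness and the $(1-o(1)) \cdot \log(n)$ inapproximability under $\P \neq \NP$) are proved in very different ways, and I would present them separately.

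For plain \NP-hardness, I would give a classical reduction from vertex cover. Given a graph $G = (V,E)$ and a bound $k$, let the universe be $[1,n] := E$ and, for each vertex $v \in V$, define $S_v$ to be the set of edges incident to $v$. A set of vertices $U \subseteq V$ covers all edges if and only if the family $\set{S_v : v \in U}$ covers the universe; thus $G$ has a vertex cover of size $\leq k$ iff the constructed instance has a set cover of size $\leq k$. Since vertex cover is \NP-complete, so is set cover.

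For the inapproximability, I would follow the Feige-style strategy, upgraded with the Dinur--Steurer machinery to bring the hypothesis down from $\NP \not\subseteq \mathrm{DTIME}(n^{O(\log\log n)})$ to merely $\P \neq \NP$. The starting point is a gap version of label cover obtained via the PCP theorem: an instance with bipartite structure $(U,V,E)$ and label sets, where deciding whether all edges can be satisfied or at most a $\gamma$ fraction can is \NP-hard, with $\gamma$ sub-constant. Around each vertex of $U$ one places a \emph{partition system}: a carefully constructed ground set equipped with families of pairwise partitions such that no few pieces from different partitions cover the whole ground set. The full set cover instance is the disjoint union of these gadgets, and each label of a vertex in $V$ induces, via the edges and the corresponding partition pieces, a set in the family. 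Completeness: a satisfying labeling of label cover picks one piece per partition per gadget, yielding a small cover. Soundness: if the label cover instance is $\gamma$-unsatisfiable, then any cover must use many pieces per partition at many gadgets, producing a $(1-o(1)) \cdot \log(n)$ multiplicative blowup. The size $n$ of the set cover universe grows polynomially in the label cover instance size, and careful parameter tuning of the partition systems (size, number of partitions, pairwise independence) delivers the exact $(1-o(1)) \cdot \log(n)$ factor.

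The main obstacle is the soundness analysis coupled to the parameters of the partition system. One must choose the partition system so that its ``hardness'' (the largest number of parts from distinct partitions whose union is not the whole ground set) matches exactly $(1-o(1)) \cdot \log(n)$ after the reduction; this is where Dinur--Steurer deviate from Feige and introduce their more efficient PCP-based construction that avoids the quasi-polynomial blowup, thereby obtaining the result under $\P \neq \NP$ rather than under the stronger time-hierarchy assumption. I would present this quantitative bookkeeping last, after the completeness/soundness skeleton is in place, and then appeal to \cite{DinurS14} for the tightest form of the PCP ingredient.
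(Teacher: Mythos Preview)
The paper does not prove this theorem at all: it is stated as a known result from the literature with a citation to \cite{DinurS14}, and is then used as a black box in the reduction establishing Theorem~\ref{thm:hardness_X_and}. There is nothing to compare your argument against.

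Your sketch is a reasonable high-level summary of how the cited result is actually obtained (\NP-membership trivially, \NP-hardness via vertex cover, and the $(1-o(1))\log n$ inapproximability via the label-cover plus partition-system paradigm, with the Dinur--Steurer PCP removing Feige's quasi-polynomial assumption). But for the purposes of this paper, no such proof is expected or appropriate: the theorem is an imported tool, and the correct ``proof'' here is simply the citation.
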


\begin{proof}
We construct a reduction from set cover. 
Let $S_1,\dots,S_\ell$ subsets of $[1,n]$ and $k \in \N$.

Let us consider the word $u = a^{\ell + 1}$,
and for each $j \in [1,n]$ and $i \in [1,\ell]$, writing $v_j(i)$ for the $i$\textsuperscript{th} letter of $v_j$:
\[
v_j(i) = 
\begin{cases}
b \text{ if } j \in S_i, \\
a \text{ if } j \notin S_i,\\
\end{cases}
\]
and we set $v_{j}(\ell+1)=a$ for any $j \in [1,n]$. We also add $v_{n+1} = a^\ell b$.

We claim that there is a cover of size $k$ if and only if 
there is a formula of size $\ell + 2k - 1$ separating $u$ from $v_1,\dots,v_{n+1}$.

Thanks to Lemma~\ref{lem:normalisation_X_and} we can restrict our attention to patterns, \textit{i.e} formulas of the form (we adjust the indexing for technical convenience)
\[
\phi = \X^{i_1 - 1}(c_1 \wedge \X^{i_2 - i_1}(\cdots \wedge \X^{i_{p+1} - i_p} c_{p+1})\cdots),
\]
for some positions $i_1 \le \dots \le i_{p+1}$ and letters $c_1,\dots,c_{p+1} \in \Sigma$.
If $\phi$ satisfies $u$, then necessarily $c_1 = \dots = c_{p+1} = a$.
This implies that if $\phi$ does not satisfy $v_{n+1}$, then necessarily $i_{p+1} = \ell + 1$.

We associate to $\phi$ the set $I = \set{i_1 \le \dots \le i_p}$. It is easy to see that $\phi$ is equivalent to $\bigwedge_{q \in [1,p]} \X^{i_q - 1} a \wedge \X^{\ell} a$, and the size of $\phi$ is $\ell + 1 + 2 (|I|-1)$.

By construction, $\phi$ separates $u$ from $v_1,\dots,v_{n+1}$ if and only if $I$ is a cover.
Indeed, $I$ is a cover if and only if for every $j \in [1,n]$ there exists $i \in I$ such that $j \in S_i$,
which is equivalent to 
for every $j \in [1,n]$ we have $v_j \not\models \phi$.
\end{proof}

We can extend the previous hardness result to all sets of operators $Op$ such that $\set{\X,\land} \subseteq Op \subseteq \set{\F,\G,\X,\land,\lor}$, by reducing their learning problems to the previous one. 
The reduction consists in transforming the input words so that the $\F$ and $\G$ operators are essentially useless. 
The $\lor$ operator is also useless since we only have one positive word, implying that the minimal separating formulas in $\LTL(Op)$ and $\LTL(\X,\land)$ are in fact the same.

The first step is a reduction lemma for disjunctions.

\begin{lemma}\label{lem:remove_or}
	For all $\phi \in \LTL(\X,\land,\lor)$,
	for all $u,v_1,\dots,v_n$, 
	if $\phi$ separates $u$ from $v_1,\dots,v_n$,
	then there exists $\psi \in \LTL(\X, \land)$ such that $\size{\psi} \leq \size{\phi}$ which separates $u$ from $v_1,\dots,v_n$.
\end{lemma}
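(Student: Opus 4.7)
The key observation is that since the sample contains a single positive word $u$, each disjunction in the syntactic tree of $\phi$ is ``witnessed'' on the positive side by whichever of its two disjuncts $u$ happens to satisfy, while on the negative side both disjuncts necessarily fail on every $v_j$. The plan is therefore to traverse $\phi$ top-down and prune each $\lor$-node by keeping only the disjunct satisfied by the relevant suffix of $u$; the resulting formula will lie in $\LTL(\X,\land)$, be no larger than $\phi$, and still separate $u$ from the $v_j$.

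Formally, I would define by structural induction on $\phi$ a formula $\psi_u(\phi)$ for each word $u$ with $u \models \phi$, via the clauses $\psi_u(\phi) = \phi$ for $\phi$ atomic, $\psi_u(\X \phi') = \X \psi_{\suffix{u}{2}}(\phi')$, $\psi_u(\phi_1 \land \phi_2) = \psi_u(\phi_1) \land \psi_u(\phi_2)$, and $\psi_u(\phi_1 \lor \phi_2) = \psi_u(\phi_i)$ for some $i \in \set{1,2}$ with $u \models \phi_i$. The invariant ``the reference word satisfies the current subformula'' is preserved in every recursive call: $u \models \X \phi'$ forces $\size{u} \geq 2$ with $\suffix{u}{2} \models \phi'$, and $u \models \phi_1 \land \phi_2$ forces $u \models \phi_j$ for both $j$. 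At a $\lor$-node such an index $i$ exists precisely because $u \models \phi_1 \lor \phi_2$, so the recursion is well-posed.

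Three routine inductions then finish the argument. First, $\psi_u(\phi)$ uses no $\lor$, hence lies in $\LTL(\X,\land)$. Second, $\size{\psi_u(\phi)} \leq \size{\phi}$, because each clause either preserves the size or discards an entire disjunct together with its $\lor$-symbol. Third, every word $w$ satisfying $\psi_u(\phi)$ also satisfies $\phi$; the only nontrivial case is $\phi = \phi_1 \lor \phi_2$, where $w \models \psi_u(\phi_i)$ entails $w \models \phi_i$ by induction and hence $w \models \phi$. Applying these to the given $\phi$ and $u$, the formula $\psi = \psi_u(\phi)$ lies in $\LTL(\X,\land)$, has $\size{\psi} \leq \size{\phi}$, is satisfied by $u$ (by a short induction mirroring the definition), and fails on every $v_j$ by the contrapositive of the third property. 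Nothing in the proof is genuinely delicate beyond maintaining the invariant $u \models \phi$ along the recursion so that the $\lor$-clause is actually defined, which is the only place where the single-positive-word assumption is used.
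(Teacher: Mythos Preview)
Your proof is correct and rests on the same core idea as the paper's: at each $\lor$-node, keep only a disjunct that the (current suffix of the) single positive word satisfies. The organization differs, however. The paper defines a set $D(\phi)$ of all $\lor$-free ``selections'' from $\phi$ and then proves, by induction carrying the negative words along, that some $\psi \in D(\phi)$ separates; in the $\land$ case it partitions $\set{v_1,\dots,v_n}$ into two sets handled by the two conjuncts. You instead construct a single formula $\psi_u(\phi)$ directly and prove the stronger semantic fact that $\psi_u(\phi)$ entails $\phi$, which immediately dispatches all negative words without tracking them through the induction. Your route is slightly more streamlined for this lemma; the paper's $D(\phi)$ formulation has the mild advantage that $D(\phi)$ is defined independently of $u$, which can be convenient if one wants to reuse the decomposition for several positive words or in other arguments.
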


\begin{proof}
	We define $D(\phi) \subseteq \LTL(\X,\land)$ by induction:
	\begin{itemize}
		\item If $\phi = c$ then $D(\phi) = \set{c}$.
		\item If $\phi = \phi_1 \wedge \phi_2$ then $D(\phi) = \set{\psi_1 \wedge \psi_2 : \psi_1 \in D(\phi_1), \psi_2 \in D(\phi_2)}$.
		\item If $\phi = \phi_1 \vee \phi_2$ then $D(\phi) = D(\phi_1) \cup D(\phi_2)$.
		\item If $\phi = \X \phi'$ then $D(\phi) = \set{\X \psi : \psi \in D(\phi')}$.
		\item If $\phi = \F \phi'$ then $D(\phi)  = \set{\F \psi : \psi \in D(\phi')}$.
	\end{itemize}
	
	Observe that all formulas of $D(\phi)$ are of size at most $\size{\phi}$,    
	and are in $\LTL(\X, \land)$. 
	We now show that for all $\phi$, for all $u,v_1,\dots,v_n$, if $\phi$ 
	separates $u$ from $v_1,\dots,v_n$ then there exists $\psi \in D(\phi)$ 
	separating them, which proves the lemma.
	
	We proceed by induction on $\phi$.
	\begin{itemize}
		\item If $\phi = c$ this is clear.
		\item If $\phi = \phi_1 \wedge \phi_2$ then $D(\phi) = \set{\psi_1 \wedge \psi_2 : \psi_1 \in D(\phi_1), \psi_2 \in D(\phi_2)}$.
		Since $\phi$ separates $u$ from $v_1,\dots,v_n$, there exists $I_1,I_2 \subseteq [1,n]$ 
		such that $I_1 \cup I_2 = [1,n]$, 
		$\phi_1$ separates $u$ from $\set{v_i : i \in I_1}$, and 
		$\phi_2$ separates $u$ from $\set{v_i : i \in I_2}$.
		By induction hypothesis applied to both $\phi_1$ and $\phi_2$ 
		there exists $\psi_1 \in D(\phi_1)$ separating $u$ from $\set{v_i : i \in I_1}$
		and $\psi_2 \in D(\phi_2)$ separating $u$ from $\set{v_i : i \in I_2}$.
		It follows that $\psi_1 \wedge \psi_2$ separates $u$ from $v_1,\dots,v_n$,
		and $\psi_1 \wedge \psi_2 \in D(\phi)$.  
		\item If $\phi = \phi_1 \vee \phi_2$ then $D(\phi) = D(\phi_1) \cup D(\phi_2)$.
		Since $\phi$ separates $u$ from $v_1,\dots,v_n$, 
		either $\phi_1$ or $\phi_2$ does as well; 
		without loss of generality let us say that $\phi_1$ separates $u$ from $v_1,\dots,v_n$.
		The induction hypothesis implies that $\psi_1 \in D(\phi_1)$ separates $u$ from $v_1,\dots,v_n$,
		and $\psi_1 \in D(\phi)$.
		\item The case $\phi = \X \phi'$ follows directly by induction hypothesis.
	\end{itemize}
\end{proof}

\begin{proposition}\label{prop:hard_X_and}
	For all $\set{\X,\land} \subseteq Op \subseteq \set{\F, \G, \X, \land, \lor}$, there is a polynomial-time reduction from the $\LTL(X,\land)$ learning problem for a single positive word to the $\LTL(Op)$ one over the same alphabet.
\end{proposition}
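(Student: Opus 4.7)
The plan is to design a polynomial-time reduction that pads each input word with a common string over $\Sigma$ so that, on the padded instance, any $\LTL(Op)$ separating formula can be converted into an $\LTL(\X,\land)$ one of the same or smaller size. Given an instance $(u, v_1,\dots,v_n,k)$ of the $\LTL(\X,\land)$ learning problem with a single positive word $u$, I would produce $u' = u\cdot\sigma$ and $v_i' = v_i\cdot\sigma$ for a suitably chosen common suffix $\sigma$ of polynomial length, keeping the same size bound $k$. The forward direction is immediate: an $\LTL(\X,\land)$ separating formula of size $\le k$ for the original sample also separates the padded sample, since each original word is preserved as a prefix of its padded counterpart.

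The main effort would be the reverse direction: showing that any $\LTL(Op)$ separating formula $\phi$ of size $\le k$ for the padded instance can be rewritten into an $\LTL(\X,\land)$ formula of size $\le k$ separating the originals. I would handle this bottom-up on $\phi$ in two phases. First, I would eliminate $\F$ and $\G$: because $\sigma$ is common to every padded word and has a rigid fixed structure, any $\F$- or $\G$-subformula whose evaluation depends only on positions inside $\sigma$ takes the same truth value on every padded word and can be replaced by $\top$ or $\bot$ and then absorbed by the surrounding boolean structure; the remaining $\F$- and $\G$-subformulas, whose evaluation reaches into the original part, can be replaced by $\X^i$-shifts of the corresponding subformula via an explicit case analysis tailored to the choice of $\sigma$, without increasing size. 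Second, once $\F$ and $\G$ are gone, the formula lies in $\LTL(\X,\land,\lor)$, and Lemma~\ref{lem:remove_or} applies directly to yield an equivalent $\LTL(\X,\land)$ formula of size at most $\size{\phi}$ separating the single positive word from the negatives.

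The main obstacle will be picking the suffix $\sigma$ so that the elimination of $\F$ and $\G$ is genuinely size-non-increasing and works uniformly across every $Op$ in the range of the proposition, including those containing $\G$ (which Lemma~\ref{lem:remove_or} does not handle). A natural candidate is an aperiodic pattern of the form $c\cdot d^L$ with $c,d$ two distinct letters of $\Sigma$ and $L$ larger than all original word lengths: the block $d^L$ forces $\G$-subformulas to collapse into statements about a single letter type, and the isolated terminal $c$ gives $\F$-subformulas a deterministic target position at the end of each word. Verifying that such a padding carries the rewriting argument through for every allowed $Op$ while controlling formula size would be the technical heart of the proof.
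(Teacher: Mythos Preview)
Your reduction has a genuine gap: appending a common suffix does not neutralise $\F$. Take $\Sigma = \set{a,b}$, $u = b^{m} a$, $v_1 = b^{m+1}$. The minimal $\LTL(\X,\land)$ separator is $\X^{m} a$, of size $m+1$, since the two words agree on the first $m$ positions. With your padding $\sigma = a b^{L}$ the padded words are $u' = b^{m} a a b^{L}$ and $v_1' = b^{m+1} a b^{L}$, and now $\F(a \land \X a)$, of size $5$, already separates them (the only $a$ in $v_1'$ is followed by $b$). The other choice $\sigma = b a^{L}$ is broken symmetrically by $\F(a \land \X b)$. Hence for $k = 5$ and $m \ge 5$ the padded $\LTL(Op)$ instance is positive while the original $\LTL(\X,\land)$ instance is negative, so the reduction is unsound. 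The underlying problem is the step where you claim an $\F$-subformula ``reaching into the original part'' can be replaced by a single $\X^{i}$-shift without increasing size: the witnessing position $i$ in $u'$ may be deep, and replacing $\F$ by $\X^{i-1}$ costs $i-1$ operators, not one. More fundamentally, a common suffix leaves the original prefixes $u$ and $v_j$ fully visible to $\F$, which can exploit the boundary with $\sigma$ to locate the difference cheaply.

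The paper avoids this by a different construction: instead of sharing a suffix, it interleaves and repeats the words so that each new negative $\textbf{v}_i$ is itself a \emph{suffix} of the new positive $\textbf{u}$ (and the whole thing is periodic). This is the structural property that kills $\F$ and $\G$: if a minimal separator $\phi$ had an outermost $\F$- or $\G$-headed subformula at $\X$-depth $j$, one shows $\textbf{u}$ must satisfy it (monotonicity of the boolean context), and then every $\textbf{v}_i$ satisfies it too --- for $\G$ directly from the suffix relation, for $\F$ via the periodicity and the pumping Fact~\ref{fact:LTLcounting}. Replacing that subformula by $\top$ then yields a strictly smaller separator, contradicting minimality; hence $\phi$ already lies in $\LTL(\X,\land,\lor)$, and Lemma~\ref{lem:remove_or} together with Fact~\ref{fact:LTLXcutsuffix} finishes. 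The idea you are missing is to make the negatives suffixes \emph{of} the positive rather than merely give them the same suffix.
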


We recall two (folklore) facts about \LTL. They can easily be proven by induction on the formula.

\begin{fact}
	\label{fact:LTLcounting}
	For all $w_1 \in \Sigma^*, w_2 \in \Sigma^+, N \in \N$, and $\phi \in \LTL$ with $\size{\phi} \leq N$, 
	then $w_1w_2^N \vDash \phi$ if and only if $w_1w_2^{N+1} \vDash \phi$.
\end{fact}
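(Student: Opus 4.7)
The plan is to prove a uniform pumping claim: for every $\LTL$ formula $\phi$, every $w_1 \in \Sigma^*$, $w_2 \in \Sigma^+$, and every $M \geq \size{\phi}$, one has $w_1 w_2^M \vDash \phi$ iff $w_1 w_2^{M+1} \vDash \phi$. Taking $M = N$ then recovers the Fact. The ``for all $M \geq \size{\phi}$'' strengthening is essential because the inductive steps consume or shift copies of $w_2$, and we must always have enough copies left to reapply the hypothesis on subformulas.

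The base case $\phi = c$ is immediate, as the first letters of $w_1 w_2^M$ and $w_1 w_2^{M+1}$ coincide. Boolean connectives reduce directly to the induction hypothesis applied to strict subformulas. For $\phi = \X \psi$, stepping past the first letter either shortens $w_1$ by one symbol (leaving $w_1' w_2^M$ versus $w_1' w_2^{M+1}$) or, if $w_1$ is empty, peels off the first letter of $w_2$ (leaving $\suffix{w_2}{2}$ followed by $M-1$ versus $M$ copies of $w_2$). In both subcases the number of remaining copies is at least $\size{\psi}$, so the induction hypothesis on $\psi$ closes the step.

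The heart of the argument is the $\F$ and $\G$ cases. I would set up an explicit position correspondence between $w_1 w_2^M$ and $w_1 w_2^{M+1}$: a position in the $w_1$-prefix corresponds to itself, and a position in the $k$-th copy of $w_2$ of $w_1 w_2^M$ at offset $j$ (with $k \in [1,M]$) corresponds to the $(k+1)$-th copy of $w_2$ at offset $j$ in $w_1 w_2^{M+1}$. Under this pairing the two suffixes are \emph{literally identical}, both equal to $\suffix{w_2}{j} w_2^{M-k}$, so one satisfies $\psi$ iff the other does. The unmatched positions come in two families: $w_1$-positions (whose suffixes in the two words differ by exactly one trailing copy of $w_2$) and positions in the \emph{first} copy of $w_2$ of $w_1 w_2^{M+1}$ (whose natural counterpart sits in the first copy of $w_2$ of $w_1 w_2^M$ and has a suffix shorter by exactly one copy of $w_2$). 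In both families the two candidate suffixes are of the form $w_1' w_2^{M'}$ and $w_1' w_2^{M'+1}$ with $M' \geq \size{\psi}$, so the induction hypothesis on $\psi$ provides equivalence; since $\F$ is existential and $\G$ universal, this is exactly what is needed to transfer the $\F \psi$ / $\G \psi$ verdict between the two words.

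The main obstacle is pure bookkeeping: tracking the inequality $M \geq \size{\phi}$ throughout the induction and verifying at every step that peeling off a letter or shifting a copy of $w_2$ still leaves enough slack for the hypothesis on a subformula of size at most $\size{\phi}-1$ to apply. This is precisely why the Fact, which looks self-contained in its statement, must be proved via the slightly stronger ``for all $M \geq \size{\phi}$'' version.
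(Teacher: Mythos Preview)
Your proposal is correct and matches the approach the paper has in mind: the paper does not actually write out a proof of this fact, stating only that it ``can easily be proven by induction on the formula''. Your strengthening to ``for all $M \geq \size{\phi}$'' is exactly the right inductive loading, and your treatment of the $\F$/$\G$ cases via the shift correspondence (copy $k$ in $w_1w_2^M$ to copy $k{+}1$ in $w_1w_2^{M+1}$, with the induction hypothesis handling the $w_1$-prefix positions and the unmatched first copy) is clean and complete.

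One small remark: the statement as written has $\phi \in \LTL$, which in the paper's definitions includes $\U$, and you do not treat that operator. In the paper's context this is harmless, since the remainder of the paper explicitly restricts to fragments without $\U$ and the only use of this fact is on such a fragment. If you wanted to cover $\U$ as well, the same position-correspondence idea works, but one must choose the witness position in the target word a bit more carefully (sometimes keeping the same index, sometimes shifting by one copy) so that all the ``before'' positions can be handled either by identical suffixes or by the induction hypothesis on $\psi_1$; the inequality $\size{\psi_1} + \size{\psi_2} \leq M-1$ gives exactly the slack needed.
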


\begin{fact}
	\label{fact:LTLXcutsuffix}
	For all $w_1 \in \Sigma^*, N \in \N$, and $\phi \in \LTL(\X, \land, \lor)$ with at most $N+1$ operators $\X$ in $\phi$, 
	then $w_1 \vDash \phi$ if and only if $\prefix{w_1}{N} \vDash \phi$. 
\end{fact}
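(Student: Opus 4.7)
The plan is a structural induction on $\phi \in \LTL(\X, \land, \lor)$. The atomic base cases ($\phi \in \Sigma \cup \set{\top, \bot}$) have zero $\X$-operators; satisfaction depends only on the first letter of the word (or is constant), which is preserved by any non-empty prefix, so the equivalence holds trivially for every $N \in \N$. For the boolean inductive steps, if $\phi = \phi_1 \land \phi_2$ or $\phi = \phi_1 \lor \phi_2$ has at most $N+1$ operators $\X$, then so do $\phi_1$ and $\phi_2$ individually; applying the induction hypothesis to each subformula at the same parameter $N$ yields $w_1 \models \phi_i \iff \prefix{w_1}{N} \models \phi_i$ for $i \in \set{1,2}$, and these combine through the connective.

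The interesting case is $\phi = \X \phi'$, where $\phi'$ has at most $N = (N-1)+1$ occurrences of $\X$. By the semantics of $\X$, both $w_1 \models \X \phi'$ and $\prefix{w_1}{N} \models \X \phi'$ reduce to a length condition ($\geq 2$) together with satisfaction of $\phi'$ on the one-step suffix. The key algebraic observation is that the suffix of a prefix is a prefix of the suffix, so that $\suffix{\prefix{w_1}{N}}{2}$ coincides with $\prefix{\suffix{w_1}{2}}{N-1}$. Applying the induction hypothesis to $\phi'$ and $\suffix{w_1}{2}$ at parameter $N-1$ then closes the case.

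I expect the main obstacle to be the careful length-bookkeeping in the $\X$ case: one must verify that the side-condition $\size{w_1} \geq 2$ is equivalent to $\size{\prefix{w_1}{N}} \geq 2$ under the relevant hypothesis on $N$, and that the prefix-suffix commutation holds in all boundary cases (in particular when $\size{w_1}$ is at most $N$, so that $\prefix{w_1}{N}$ is all of $w_1$ and the induction hypothesis is applied trivially on that branch). The offset "$N+1$" in the hypothesis is precisely what allows each peeling of an $\X$-operator to decrement the prefix bound $N$ by exactly one, while keeping the invariant valid down to the atomic base case at $N \geq 0$ where the first letter is still visible in $\prefix{w_1}{N}$.
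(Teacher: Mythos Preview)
Your structural induction on $\phi$ is exactly what the paper has in mind: it declares the fact folklore and says only that it ``can easily be proven by induction on the formula,'' giving no further detail. The case split you describe---atoms via the first letter, boolean connectives by direct application of the induction hypothesis at the same $N$, and $\X$ by peeling one layer, commuting prefix with suffix, and invoking the hypothesis at $N-1$---is the standard argument.
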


We now proceed to the proof of Proposition~\ref{prop:hard_X_and}.

\begin{proof}
	Let $u, v_1, \ldots, v_n \in \Sigma^*$ be words, all of length $\ell$. Let $a \in \Sigma$, we set $M$ as the size of the formula
	\[
	\psi_u = \bigwedge_{i=0}^{\size{u}-1} X^i u_i,
	\]
	$u' = u a^M$, for all $i$, we define $v'_i = v_i a^M$,
	\[
	\textbf{u} = (u' v'_1 \cdots v'_n)^{M+1} \quad ; \quad \textbf{v}_i = v'_i \cdots v'_n (u' v'_1 \cdots v'_n)^{M}.
	\]    
	The formula $\psi_u$ separates $u$ from $\set{v_1,\dots, v_n}$ unless one of the $v_i$ is equal to $u$. This can be checked in polynomial time, and in that case we can answer no to the learning problem immediately. In all that follows we assume that $\psi_u$ separates $u$ from the $v_i$.
	
	Let $\phi \in \LTL(Op)$ be a formula of minimal size separating $\textbf{u}$ from the $(\textbf{v}_i)_{1\leq i\leq n}$. Note that $\psi_u$ is satisfied by $\textbf{u}$ but not by any $\textbf{v}_i$, thus since $Op$ contains $\X$ and $\land$, $\phi$ exists and $\size{\phi} \leq \size{\phi_u} = M$.
	
	We first show that $\phi$ contains no $\F$ or $\G$. Let $C \in \LTL(\X,\land,\lor)$ be a context with free variables $x_1, \ldots, x_m$ (each appearing exactly once in $C$) such that $\phi = C[\psi_1 \to x_1, \ldots, \psi_m \to x_m]$ for some $\psi_i \in \LTL(Op)$ of the form either $\F \psi'$ or $\G \psi'$. 
	
	As the $\X$ operator commutes with all the others, we can push the $\X$ to the variables in $C$. Hence there exist a boolean context $B$ with free variables $y_1, \ldots, y_m$ (each appearing exactly once in $B$) and $j_1, \ldots , j_m \in \N$ such that $B[X^{j_1} x_1 \to y_1, \ldots, X^{j_m} x_m \to y_m]$ is equivalent to $C$.
	Furthermore, the formulas $B$ and $C$ have the same depth, thus as $C$ is of size at most $M$, we have $j_p \leq M$ for all $p$.
	
	Note that $B$ does not contain any negation, thus if $\textbf{u}$ did not satisfy some $X^{j_p} \psi_p$, we would have that $\textbf{u}$ satisfies $B[X^{j_1} \psi_1 \to y_1, \ldots,  X^{j_m} \bot \to y_p, \ldots X^{j_m} \psi_m \to y_m]$, while no $\textbf{v}_i$ satisfies it, since they do not satisfy $B[X^{j_1} \psi_1 \to y_1, \ldots, X^{j_m} \psi_m \to y_m]$. As a result, $C[\psi_1 \to x_1, \ldots, \bot \to x_p, \ldots,  \psi_m \to x_m]$ would be satisfied by $\textbf{u}$ but no $\textbf{v}_i$, contradicting the minimality of $\phi$. Hence $\textbf{u}$ satisfies all $X^{j_p} \psi_p$. 
	
	Let $1 \leq p \leq m$, let $1\leq i\leq n$, we show that $\textbf{v}_i$ satisfies $\X^{j_p} \psi_p$. We distinguish two cases:
	
	\begin{itemize}
		\item $\psi_p = \G \psi'$. 
		Since $\textbf{v}_i$ is a suffix of $\textbf{u}$ of length greater than $M$, $\suffix{\textbf{v}_i}{j_p}$ is not empty and is a suffix of $\suffix{\textbf{u}}{j_p}$. 
		As the latter satisfies $\G \psi'$, so does the former. 
		Hence $\textbf{v}_i \vDash \X^{j_p} \psi_p$.
		
		\item $\psi_p = \F \psi'$. 
		We have that $\suffix{\textbf{u}}{j_p} = 
		(\suffix{u'}{j_p} v'_1 \cdots v'_n) (u' v'_1\cdots v'_n)^{M}$ 
		satisfies $\psi_p$. 
		By Fact \ref{fact:LTLcounting}, as $\psi_p$ is of size at most $\size{\psi}\leq M$, 
		$(\suffix{u'}{j_p} v'_1 \cdots v'_n) (u^M v_1\cdots u^M v_n)^{M-1}$ satisfies 
		$\psi_p$ as well. 
		As the latter is a suffix of $\suffix{\textbf{v}_i}{j_p}$, 
		$\suffix{\textbf{v}_i}{j_p}$ also satisfies the formula.
		Hence $\textbf{v}_i \vDash \X^{j_p} \psi_p$.
	\end{itemize}
	
	Now consider the formula $\phi' = C[\top \to x_1, \ldots, \top \to x_m]$, 
	which is equivalent to $B[\X^{j_1} \top \to y_1, \ldots, \X^{j_m} \top \to y_m]$. 
	As all $\textbf{v}_i$ satisfy all $\psi_p$ but not $\phi$, 
	no $\textbf{v}_i$ satisfies $\phi'$. 
	Further, as $\textbf{u}$ satisfies $\phi$ and all $\psi_p$, it also satisfies $\phi'$. 
	This contradicts the minimality of $\phi$, unless $m=0$. 
	As a result, $\phi$ does not contain any $\F$ or $\G$. 
	Thus the minimal $\LTL(Op)$ formula separating $\textbf{u}$ and the $\textbf{v}_i$ is the same as the minimal $\LTL(Op \cap \set{\X,\land, \lor})$ separating them. 
	
	By Fact \ref{fact:LTLXcutsuffix}, we have that this formula is also the minimal formula separating $u'$ and the $v'_i$. 
	Clearly any $\LTL(Op \cap \set{\X,\land, \lor})$ formula separating $u$ and the $v_i$ also separates those. 
	By Lemma \ref{lem:remove_or}, there exists $\phi' \in \LTL(\X,\land)$ with $\size{\phi'} \leq \size{\phi}$ separating $u'$ and the $v_iu^{M-1}$, of the form $X^{i_1-1}(c_1 \land \X^{i_2-i_1} (\cdots \land \X^{i_p-i_{p-1}} c_p) \cdots)$ with $0 < i_1 < \ldots < i_p$ and $c-1, \ldots, c_p \in \Sigma$. 
	As $u'$ and the $v'_i$ are equal after the first $\size{u}$ letters, by minimality of $\size{\phi}$, we have $i_p < \size{u}$. As a result, by Fact \ref{fact:LTLXcutsuffix}, $\phi'$ separates $u$ and the $v_i$.
	
	We conclude that the minimal size of a formula of $\LTL(Op)$ separating $\textbf{u}$ and the $\textbf{v}_i$ is the same as the minimal size of a formula of $\LTL(\X, \land)$ separating $u$ and the $v_i$. This completes the reduction.
\end{proof}

\begin{proposition}
	\label{prop:hard_X_or}
	For all $\set{\X,\lor} \subseteq Op \subseteq \set{\F, \G, \X, \land, \lor}$, there is a polynomial-time reduction from the $\LTL(\X, \vee)$ learning problem for a single negative word to the $\LTL(Op)$ one over the same alphabet.
\end{proposition}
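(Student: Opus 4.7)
I'd mirror the proof of Proposition~\ref{prop:hard_X_and}, dualizing positive and negative words, $\land$ and $\lor$, and $\top$ and $\bot$. Given $u_1,\dots,u_n$ and $v$ of length $\ell$, first check in polynomial time that no $u_i$ equals $v$, then fix $a \in \Sigma$ and let $M$ be a polynomial upper bound on the size of the naive $\LTL(\X,\lor)$ separating formula $\bigvee_i \X^{t_i-1} u_i(t_i)$ (with $t_i$ a position witnessing $u_i(t_i)\neq v(t_i)$), plus enough slack that any minimal separator has size strictly less than $M$. Set $u'_i = u_i a^M$, $v' = v a^M$, $P = v' u'_1 \cdots u'_n$, and
\[
\textbf{v} = P^{M+1}, \qquad \textbf{u}_i = u'_i u'_{i+1} \cdots u'_n P^M,
\]
so each $\textbf{u}_i$ is a suffix of $\textbf{v}$ shifted by $s_i = \size{v'}+(i-1)\size{u'_1} < \size{P}$. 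I would also prove a dual of Lemma~\ref{lem:remove_or}: every $\LTL(\X,\land,\lor)$ formula separating positives from a \emph{single} negative can be replaced by one in $\LTL(\X,\lor)$ of the same size. Its proof copies that of Lemma~\ref{lem:remove_or} with the roles of $\land$ and $\lor$ swapped, using that when $\phi_1 \land \phi_2$ separates, the unique negative fails one conjunct which alone separates.

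The main argument parallels Proposition~\ref{prop:hard_X_and} dualized. Take a minimal $\phi \in \LTL(Op)$ separating the new instance and write $\phi \equiv B[\X^{j_1}\psi_1\to y_1,\dots,\X^{j_m}\psi_m\to y_m]$ where $B$ is a positive boolean combination over $\set{\land,\lor}$ and each $\psi_p$ is $\F$- or $\G$-rooted. By monotonicity of $B$, replacing any $\psi_p$ satisfied by $\textbf{v}$ by $\top$ yields a strictly smaller separating formula, contradicting minimality; thus $\textbf{v} \not\models \X^{j_p}\psi_p$ for every $p$ (Claim~1). Claim~2 is that the same holds for every $\textbf{u}_i$: for $\psi_p=\F\phi'$ this is immediate since $\textbf{u}_i$ is a forward shift of $\textbf{v}$, so any $\phi'$-witness in $\textbf{u}_i$ past $j_p+1$ is a $\phi'$-witness in $\textbf{v}$ past $j_p+1$; for $\psi_p=\G\phi'$ the bad $\phi'$-position in $\textbf{v}$ may lie before $s_i+j_p+1$, but since $s_i<\size{P}$ that bad position lies in the first period of $\textbf{v}$, and Fact~\ref{fact:LTLcounting} applied to $P$ allows shifting it forward by one copy of $P$, which suffices --- the polynomial choice of $M$ ensures enough copies remain for the fact to apply. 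Combining Claims~1 and~2, every $\psi_p$ can be replaced by $\bot$, producing a strictly smaller separating formula unless $m=0$.

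Once $m=0$, the dual of Lemma~\ref{lem:remove_or} gives $\psi \in \LTL(\X,\lor)$ of size at most $\size{\phi}$ separating the new instance. Normalising $\psi = \bigvee_j \X^{k_j} c_j$: since $k_j < \size{\psi} < M$, each $k_j+1$ lies in the prefix $u'_i$ (resp.\ $v'$) of $\textbf{u}_i$ (resp.\ $\textbf{v}$); disjuncts with $k_j \in [\ell,\ell+M-1]$ sit in the $a$-padding, so $c_j=a$ would force $\textbf{v}\models\psi$ (impossible) while $c_j\neq a$ makes the disjunct vacuous. Removing such disjuncts leaves a formula with all $k_j<\ell$, whose satisfaction depends only on the first $\ell$ letters of its input, hence it separates $u_1,\dots,u_n$ from $v$. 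Conversely, any $\LTL(\X,\lor)$ separator of the original instance may be assumed to have all $k_j<\ell$ (longer disjuncts are vacuous on $\ell$-letter words), and the same observation lifts it to a separator for the new instance of equal size. The two minimum sizes coincide, completing the polynomial-time reduction. The main obstacle is the $\G$ case of Claim~2, where the shift distance $s_i$, the target position $s_i+j_p+1$, and the number of copies of $P$ that Fact~\ref{fact:LTLcounting} requires to remain after shifting, must be balanced via a sufficiently generous polynomial choice of $M$.
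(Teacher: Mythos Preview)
Your proposal is correct and follows exactly the route the paper takes: the paper's own proof is literally the one-line remark ``identical to the one of Proposition~\ref{prop:hard_X_and}, with the roles of positive and negative words reversed and disjunctions and conjunctions reversed,'' and you have carried out precisely that dualization (single negative word, suffix-structure for the $\textbf{u}_i$, $\top/\bot$ swapped in the minimality argument, $\F/\G$ cases swapped in Claim~2, and a dual of Lemma~\ref{lem:remove_or}). One minor imprecision: in the $\G$ case of Claim~2 the bad position need not lie in the first period of $\textbf{v}$ (it could reach up to $|P|+M$), but the clean fix is exactly the dual of the paper's $\F$ argument --- apply Fact~\ref{fact:LTLcounting} to the whole suffix $\suffix{\textbf{v}}{j_p{+}1}=w_1P^{M}$ to drop one copy of $P$, then observe $w_1P^{M-1}$ is a suffix of $\suffix{\textbf{u}_i}{j_p{+}1}$ --- which is what your sketch intends.
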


\begin{proof}
	The proof is identical to the one of Proposition~\ref{prop:hard_X_and}, with the roles of positive and negative words reversed and disjunctions and conjunctions reversed.
\end{proof}

\section{Almost all fragments without the next operator are hard to approximate}
\label{sec:hard_approximation_no_x}
\subsection*{A study of $\LTL(\F,\wedge)$}
As we will see, $\LTL(\F,\wedge)$ over an alphabet of size $2$ is very weak.
This degeneracy vanishes when considering alphabets of size at least $3$.
Instead of defining a normal form as we did for $\LTL(\X,\wedge)$ 
we characterise the expressive power of $\LTL(\F,\wedge)$ and construct for each property expressible in this logic a minimal formula.

\begin{lemma}\label{lem:characterisation_f_and}
For every formula $\phi \in \LTL(\F,\wedge)$,
either it is equivalent to false or there exists a finite set of non-repeating words $w_1,\dots,w_p$ and $c \in \Sigma \cup \set{\varepsilon}$
such that for every word $z$,
\[
z \models \phi \text{ if and only if }
\begin{cases}
\text{for all $q \in [1,p], w_q$ is a subword of $z$}, \\
\text{and $z$ starts with $c$}.
\end{cases}
\]
\end{lemma}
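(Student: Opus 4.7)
The plan is to proceed by induction on the structure of $\phi$. For the base case $\phi = c$, I take $p = 0$ (empty list of subwords) together with starting letter $c$; the characterization then reduces to ``$z$ starts with $c$'', which is exactly $z \models c$.

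For the inductive case $\phi = \phi_1 \wedge \phi_2$, I merge the two characterizations obtained from the inductive hypothesis: if either subformula is equivalent to false then so is $\phi$; otherwise I concatenate the two subword lists and combine the starting letters in the natural way, namely take their unique common value if at least one is in $\Sigma$, declaring $\phi$ equivalent to $\bot$ when both are concrete letters that disagree, and keeping $\varepsilon$ when both are $\varepsilon$.

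The interesting case is $\phi = \F \phi'$. If $\phi' \equiv \bot$ then $\F \phi' \equiv \bot$. Otherwise, let $(w_1,\dots,w_p, c)$ be the characterization provided by the inductive hypothesis for $\phi'$. If $c = \varepsilon$, I keep the same subword list with no starting-letter constraint, using that some suffix of $z$ contains each $w_q$ as a subword if and only if $z$ itself does (the ``if'' direction taking $i = 1$). If $c \in \Sigma$, I plan to absorb the starting-letter requirement into the subword list by setting $w'_q = w_q$ when $w_q$ starts with $c$ and $w'_q = c \cdot w_q$ otherwise; non-repetition of $w'_q$ is inherited from $w_q$ in the second case because the new initial letter differs from $w_q(1)$. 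When $p = 0$ I instead output the singleton list $\set{c}$. The output characterization is $(w'_1, \dots, w'_p, \varepsilon)$.

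The main obstacle is verifying the equivalence in this last subcase. The forward direction is routine: a witness position $i$ with $\suffix{z}{i} \models \phi'$ yields each $w'_q$ as a subword of $z$ by prepending the letter at position $i$ when necessary. The delicate direction is the converse: given that every $w'_q$ is a subword of $z$, I need to produce a \emph{single} position $i$ that works simultaneously for all $q$. The key trick is to let $i$ be the \emph{first} occurrence of $c$ in $z$, which exists because every $w'_q$ starts with $c$. For each $q$, either $w_q$ already starts with $c$, in which case its embedding in $z$ can be realigned so that the initial $c$ lands at position $i$, or else $w'_q = c \cdot w_q$ embeds in $z$ using some $c$-position $p \geq i$ followed by an embedding of $w_q$ at strictly greater positions; in both cases $w_q$ is a subword of $\suffix{z}{i}$, and since $z(i) = c$ the suffix $\suffix{z}{i}$ satisfies $\phi'$, so $z \models \F \phi'$.
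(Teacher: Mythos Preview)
Your proof is correct and follows the same inductive approach as the paper, with the same construction in each case (in particular, prepending $c$ to each $w_q$ that does not already start with $c$ in the $\F$ step, and merging the two lists in the $\wedge$ step). Your argument is in fact more complete than the paper's: you explicitly handle the edge case $p = 0$, $c \in \Sigma$ in the $\F$ step (where the paper's construction, read literally, would yield an empty list and hence the wrong characterization for $\F c$), and you spell out the converse direction by choosing $i$ as the first occurrence of $c$ in $z$, which the paper leaves implicit.
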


\begin{proof}
We proceed by induction over $\phi$.
\begin{itemize}
	\item For the atomic formula $c \in \Sigma$, the property is satisfied using the empty set of words and $c$.
	\item If $\phi = \F \phi'$, by induction hypothesis we get $w_1,\dots,w_p$ and $c$ for $\phi'$.
        We let $w'_i = cw_i$ if $w_i(1) \neq c$ and $w'_i = w_i$ otherwise, then $z \models \phi$ if and only if 
        for all $q \in [1,p]$, $w'_q$ is a subword of $z$ and $z$ starts with $\varepsilon$ (the latter condition is always satisfied).
	\item If $\phi = \phi_1 \wedge \phi_2$, by induction hypothesis we get $w^1_1,\dots,w^1_{p_1}, c_1$ for $\phi_1$
	and $w^2_1,\dots,w^2_{p_2}, c_2$ for $\phi_2$.
	There are two cases.
	If $c_1$ and $c_2$ are non-empty and $c_1 \neq c_2$ then $\phi$ is equivalent to false.
	Otherwise, either both are non-empty and equal or at least one is $\varepsilon$, say $c_2$.
	In both cases,
	$u \models \phi$ if and only if for all $(e,q) \in (1,[1,p_1]) \cup (2,[1,p_2])$, $w^e_q$ is a subword of $u$ and $u$ starts with $c_1$.
\end{itemize}
\end{proof}

Lemma~\ref{lem:characterisation_f_and} gives a characterisation of the properties expressible in $\LTL(\F,\wedge)$.
It implies that over an alphabet of size $2$ the fragment $\LTL(\F,\wedge)$ is very weak.
Indeed, there are very few non-repeating words over the alphabet $\Sigma = \set{a,b}$:
only prefixes of $abab \dots$ and $baba \dots$.
This implies that formulas in $\LTL(\F,\wedge)$ over $\Sigma = \set{a,b}$
can only place lower bounds on the number of alternations between $a$ and $b$ (starting from $a$ or from $b$)
and check whether the word starts with $a$ or $b$.
In particular, the $\LTL(\F,\wedge)$ learning problem over this alphabet is (almost) trivial and thus not interesting.
Hence we now assume that $\Sigma$ has size at least $3$.

\vskip1em
We move back from semantics to syntax, and show how to construct minimal formulas.
Let $w_1,\dots,w_p$ a finite set of non-repeating words and $c \in \Sigma \cup \set{\varepsilon}$, 
we define a formula $\phi$ as follows.

The set of prefixes of $w_1,\dots,w_p$ are organised in a forest (set of trees): 
a node is labelled by a prefix $w$ of some $w_1,\dots,w_p$,
and its children are the words $wc$ which are prefixes of some $w_1,\dots,w_p$.
The leaves are labelled by $w_1,\dots,w_p$.
We interpret each tree $t$ as a formula $\phi_t$ in $\LTL(\F,\wedge)$ as follows, in an inductive fashion: for $c \in \Sigma$,
if $t$ is labelled $w a$ with subtrees $t_1,\dots,t_q$, then
\[
\phi_t = \F( c \wedge \bigwedge_i \phi_{t_i}).
\]
If $c = \varepsilon$, the formula associated to $w_1,\dots,w_p$ and $c$ is the conjunction of the formulas for each tree of the forest,
and if $c \in \Sigma$, then the formula additionally has a conjunct $c$.

%% As an example, consider the set of words $b, aa, ab, bab$, and the letter $a$.
%% The forest corresponding to $b, aa, ab, bab$ contains two trees: 
%% one contains the nodes $b, ba, bab$, and the other one the nodes $a, aa, ab$.
%% The two corresponding formulas are
As an example, consider the set of words $ab, ac, bab$, and the letter $a$.
The forest corresponding to $ab, ac, bab$ contains two trees: 
one contains the nodes $b, ba, bab$, and the other one the nodes $a, ab, ac$.
The two corresponding formulas are
\[
\F (b \wedge \F(a \wedge \F b)) \qquad ; \qquad \F(a \wedge \F b \wedge \F c).
\]
And the formula corresponding to the set of words $ab, ac, bab$, and the letter $a$ is
\[
a\ \wedge \ \F (b \wedge \F(a \wedge \F b)) \ \wedge\ \F(a \wedge \F b \wedge \F c).
\]

\begin{lemma}\label{lem:minimal_f_and}
For every non-repeating words $w_1,\dots,w_p$ and $c \in \Sigma \cup \set{\varepsilon}$,
the formula $\phi$ constructed above is minimal, meaning there are no smaller equivalent formulas.
\end{lemma}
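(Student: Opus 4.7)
The plan is to prove minimality by induction on the total number $N$ of nodes in the forest associated to $(w_1,\dots,w_p,c)$. The base cases are trivial: if $N=0$ and $c \in \Sigma$ then $\phi = c$ has size $1$, and if $N = 0$ with $c = \varepsilon$ the property is the tautology. A small calculation (by induction on trees) shows that in general $|\phi| = 3N + 1$ if $c \in \Sigma$ and $|\phi| = 3N - 1$ otherwise, so to conclude it suffices to show that any equivalent $\psi \in \LTL(\F,\wedge)$ contains at least $N$ ``forest nodes'' when put into a suitable normal form.

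My approach is to normalize $\psi$ into the same shape as $\phi$ (a conjunction of an optional single-letter atom and $\F$-rooted subformulas with pairwise distinct head letters, each subformula itself normalized) without ever increasing its size. Three rewriting moves suffice: (i) flattening nested conjunctions; (ii) replacing any $\F(\alpha_1 \wedge \alpha_2 \wedge \cdots)$ with two distinct atomic conjuncts by $\bot$ (and simplifying $\bot \wedge X \equiv \bot$), and collapsing repeated atoms; (iii) the key \emph{merging} identity
\[
\F(a \wedge \F Y_1) \,\wedge\, \F(a \wedge \F Y_2) \ \equiv\ \F\bigl(a \wedge \F Y_1 \wedge \F Y_2\bigr),
\]
which is valid because $\F$-rooted formulas are forward-monotone: if $\F Y$ holds from some position $i$, it also holds from every earlier position. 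Taking $i = \min(i_1,i_2)$ shows that the two separate $a$-witnesses can always be merged into a single earlier one. Each such merging strictly decreases size.

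Once $\psi$ is in this normal form, it too is naturally associated with a forest $F'$ of prefix-sharing trees. By Lemma~\ref{lem:characterisation_f_and}, its semantics is ``starts with $c'$ and contains each root-to-leaf path of $F'$ as a subword''. Equivalence with $\phi$ forces $c' = c$ and forces each $w_i$ to be a subword of some root-to-leaf path of $F'$. Among all forests of non-repeating trees whose root-to-leaf paths dominate $\{w_1,\dots,w_p\}$ under the subword order, the canonical forest $F$ of $\phi$, obtained by maximal prefix-sharing of the antichain, minimizes the total number of nodes. Plugging back into the size formula $3|F'| + \epsilon$ gives $|\psi| \geq |\tilde\psi| \geq |\phi|$.

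The main obstacle is justifying the merging step (iii) in full generality: the corresponding identity \emph{fails} when the merged subformulas are bare atoms rather than $\F$-rooted (e.g.\ $\F(a \wedge b)$ is already $\bot$), and in arbitrary input formulas one may encounter atoms appearing inside $\F$'s in ways that break forward-monotonicity. I would handle this by first applying move (ii) to eliminate all ``dead'' subformulas and to ensure that inside each $\F$ there is at most one atom, whose letter is then treated as the head of that tree node; after this cleanup all remaining siblings of the head are genuinely $\F$-rooted, so move (iii) applies soundly. A careful induction on the depth of $\psi$ — normalizing subformulas bottom-up and then merging at each level — makes this rigorous and is the technical crux of the argument.
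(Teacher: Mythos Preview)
The paper does not supply a proof of Lemma~\ref{lem:minimal_f_and}; it is stated and then used informally. So your attempt cannot be compared to the paper's argument, only assessed on its own merits.

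First, a remark about the statement itself: as written it is false without the additional hypothesis that $\{w_1,\dots,w_p\}$ is an antichain for the subword order. For instance with $w_1=ab$, $w_2=b$ the constructed formula is $\F(a\wedge\F b)\wedge\F b$ of size $8$, equivalent to $\F(a\wedge\F b)$ of size $5$. You silently add this hypothesis when you speak of ``the antichain'', which is fine, but it should be made explicit.

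The real gap is your final minimisation step. You argue: equivalence of $\psi$ with $\phi$ forces every $w_i$ to be a subword of some root-to-leaf path of the normalised forest $F'$, and then claim that among all forests whose leaf-paths dominate $\{w_1,\dots,w_p\}$ in this sense, the canonical prefix-sharing forest $F$ has the fewest nodes. The second claim is false. Take $w_1=ab$, $w_2=cb$: the canonical forest has two disjoint length-$2$ paths, hence $4$ nodes, while the single path $acb$ has $3$ nodes and dominates both $ab$ and $cb$. Of course $\F(a\wedge\F(c\wedge\F b))$ is \emph{not} equivalent to $\phi$ (the word $cab$ separates them), so this $F'$ never arises from an equivalent $\psi$; but your argument as written only uses one direction of the equivalence and therefore does not exclude it.

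What is actually needed is the symmetric consequence of equivalence: each root-to-leaf path $w'_j$ of $F'$ must also be a subword of some $w_i$, which together with the forward direction forces $\{w_i\}\subseteq\{w'_j\}$ and hence $|F'|\ge|F|$. Even the forward direction you invoke (that $\bigcap_j L_{w'_j}\subseteq L_{w_i}$ implies $w_i\sqsubseteq w'_j$ for some $j$) is a genuine combinatorial fact about subword-closed languages that deserves its own argument; your write-up treats it as obvious. Once both directions are in place the rest of your outline (normalisation via the merging identity, which is correct, and the size count $3N\pm1$) goes through.
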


Applying the construction above to a single non-repeating word $w = c_1 \dots c_p$ we obtain what we call a ``\textbf{f}attern'' 
(pattern with an F):
\[
F = \F (c_1 \wedge \F(\cdots \wedge \F c_p)\cdots),
\]
We say that the non-repeating word $w$ induces the fattern $F$ above, 
and conversely that the fattern $F$ induces the word $w$.
The size of a fattern $F$ is $3 |w| - 1$. 
Adding the initial letter we obtain a grounded fattern $c \wedge F$,
in that case the letter $c$ is added at the beginning of $w$ and the size is $3 |w| - 2$.

\begin{lemma}\label{lem:normalisation_F_and}
Let $u_1,\dots,u_n,v_1,\dots,v_n$.
If there exists $\phi \in \LTL(\F,\wedge)$ separating $u_1,\dots,u_n$ from $v_1,\dots,v_n$,
then there exists a conjunction of at most $n$ fatterns separating $u_1,\dots,u_n$ from $v_1,\dots,v_n$.
\end{lemma}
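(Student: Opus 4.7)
The plan is to use Lemma~\ref{lem:characterisation_f_and} to expose the semantics of $\phi$, and then to assign to each negative word a dedicated conjunct witnessing why it is excluded. First I would observe that, since $\phi$ separates and so is not equivalent to false, Lemma~\ref{lem:characterisation_f_and} provides non-repeating words $w_1,\dots,w_p$ and a letter $c \in \Sigma \cup \set{\varepsilon}$ such that a word $z$ satisfies $\phi$ iff $z$ starts with $c$ and each $w_q$ is a subword of $z$. Consequently every $u_i$ starts with $c$ and contains each $w_q$ as a subword, while every $v_j$ must fail at least one of these conditions.

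For each $j \in [1,n]$ I would pick a failing condition. If some $w_{q_j}$ is not a subword of $v_j$, assign the fattern $\Fattern_{w_{q_j}}$ as the $j$-th conjunct: it is satisfied by every $u_i$ (each $w_q$ is a subword of each $u_i$, hence a weak subword, which is precisely what a fattern captures) and is violated by $v_j$. Let $J_A$ collect these indices and set $J_B = [1,n] \setminus J_A$; for every $j \in J_B$, all $w_q$ are subwords of $v_j$, so the only remaining possible failure is the first-letter condition, which forces $c \in \Sigma$ and $v_j(1) \neq c$. All such $v_j$ share this single defect, so one conjunct $\psi_B$ enforcing ``starts with $c$'' excludes them simultaneously, yielding a separating conjunction with $|J_A| + [J_B \neq \emptyset] \leq n$ conjuncts.

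The hard part will be case B: a pure fattern $\Fattern_w$ encodes a subword constraint, which is shift-invariant and cannot by itself force the first letter to be $c$. I would handle this by taking $\psi_B$ to be a grounded fattern $c \wedge \Fattern_{w_1}$ (or simply the atom $c$ when $p = 0$), reading ``fattern'' in the statement to include such grounded variants in line with Lemma~\ref{lem:minimal_f_and}. Alternatively the first-letter constraint can be absorbed into one of the case A fatterns by prepending $c$ to its associated word, provided that word does not itself start with $c$; this removes the extra conjunct at the cost of a small case split on whether $w_{q_j}(1) = c$, since naively prepending $c$ would require two consecutive occurrences of $c$ as a subword of each $u_i$, which need not hold. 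Either way the bound of $n$ conjuncts is preserved.
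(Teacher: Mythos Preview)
Your approach is correct and matches the paper's proof: both invoke Lemma~\ref{lem:characterisation_f_and} to extract the subword conditions and the first-letter condition, then assign to each $v_j$ one failing condition and take the conjunction of the corresponding (grounded) fatterns. The paper glosses over the first-letter case by appealing to the forest construction (which adds the conjunct $c$ when $c\in\Sigma$), whereas you spell it out explicitly via a grounded fattern; this is the same idea, and your treatment of that edge case is if anything more careful than the paper's.
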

\begin{proof}
Thanks to Lemma~\ref{lem:characterisation_f_and}, to the separating formula $\phi$ 
we can associate a finite set of non-repeating words $w_1,\dots,w_p$ and $c \in \Sigma \cup \set{\varepsilon}$
such that for every word $z$,
\[
z \models \phi \text{ if and only if }
\begin{cases}
\text{for all $q \in [1,p], w_q$ is a subword of $z$}, \\
\text{and $z$ starts with $c$}.
\end{cases}
\]
Let $j \in [1,n]$, since $v_j$ does not satisfy $\phi$ either $v_j$ does not start with $c$
or for some $q \in [1,p]$ the word $w_q$ is not a subword of $u$.
For each $j \in [1,n]$ such that $v_j$ starts with $c$, we pick one $q_j \in [1,p]$ for which $w_{q_j}$ is not a subword of $v_j$,
and consider the set $\set{w_{q_j} : j \in [1,n]}$ together with $c \in \Sigma \cup \set{\varepsilon}$.
The formula induced by the construction above is a conjunction of at most $n$ fatterns and it separates $u_1,\dots,u_n$
from $v_1,\dots,v_n$.
\end{proof}

\subsection*{Hardness result when the alphabet is part of the input}

\begin{theorem}\label{thm:hardness_F_and}
The $\LTL(\F, \wedge)$ learning problem is $\NP$-hard when the alphabet is part of the input, 
and there are no $(1 - o(1)) \cdot \log(n)$ polynomial time approximation algorithms unless $\P = \NP$, even with a single positive word.
\end{theorem}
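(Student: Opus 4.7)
The plan is to reduce from set cover, adapting the strategy used in the proof of Theorem~\ref{thm:hardness_X_and}. Given an instance $S_1, \dots, S_\ell \subseteq [1, n]$ and target $k$, I would set $\Sigma = \{a_1, \dots, a_\ell\}$ (of size $\ell$, hence part of the input), take the single positive word $u = a_1 a_2 \cdots a_\ell$, and, for each $j \in [1, n]$, take as negative word $v_j$ the concatenation in increasing index order of the letters $\{a_i : j \notin S_i\}$. Thus $u$ and every $v_j$ are non-repeating, and each $v_j$ is a subword of $u$.

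The central observation is that any non-repeating subword $w = a_{i_1} \cdots a_{i_r}$ of $u$ (so $i_1 < \cdots < i_r$) is a weak subword of $v_j$ if and only if each $a_{i_q}$ belongs to $v_j$, which happens iff $j \notin S_{i_q}$ for all $q$. Equivalently, the fattern associated with $w$ fails on $v_j$ precisely when $j \in \bigcup_q S_{i_q}$. The forward direction is then immediate: from a cover $I$ of size $k$, the formula $\bigwedge_{i \in I} \F a_i$ separates $u$ from every $v_j$ and has size $3k - 1$.

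For the backward direction, given a separating formula $\phi$ I would apply Lemma~\ref{lem:characterisation_f_and} to extract non-repeating words $w_1, \dots, w_p$ (each necessarily a subword of $u$) together with a starting letter $c \in \Sigma \cup \{\varepsilon\}$. Let $I$ be the set of letter indices occurring in some $w_q$, together with the index of $c$ when $c \in \Sigma$. By the characterization, $\phi$ fails on $v_j$ exactly when $j \in \bigcup_{i \in I} S_i$, so $I$ is a cover. To turn this into a size lower bound I invoke Lemma~\ref{lem:minimal_f_and}: the size of $\phi$ is at least that of the canonical formula associated with $(w_1, \dots, w_p, c)$, and a short induction on the shared-prefix forest shows this size equals $3N - 1$ if $c = \varepsilon$ and $3N + 1$ if $c \in \Sigma$, where $N$ is the number of nodes in the forest. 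Since distinct letters of the $w_q$'s are last letters of distinct nodes, one checks that $N \geq |I| - 1$ in all cases, so $|\phi| \geq 3|I| - 2 \geq 3k - 2$.

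Putting both directions together, the minimum separating formula has size in $\{3k - 2, 3k - 1\}$ where $k$ is the minimum cover size, so both the NP-hardness and the $(1 - o(1)) \log n$ inapproximability transfer from set cover via Theorem~\ref{thm:subset_cover}. The main technical point I expect to spend care on is the lower bound on $|\phi|$: establishing the $3N \pm 1$ identity for the canonical forest formula, and controlling the single unit of slack introduced when a starting letter does not occur in any $w_q$.
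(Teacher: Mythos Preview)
Your reduction is essentially the paper's: the paper reduces from hitting set with alphabet $[0,m]$, positive word $u = 0\,1\,2\cdots m$, and $v_j = 0$ followed by the increasing list of indices not in $C_j$; up to the set cover / hitting set duality this is your construction with one extra neutral letter prepended to every word. The forward direction is identical.

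Where you diverge is the backward bound. The paper does not go through Lemma~\ref{lem:characterisation_f_and} plus Lemma~\ref{lem:minimal_f_and} plus a forest-node count. Instead it invokes Lemma~\ref{lem:normalisation_F_and_single_positive}: with a \emph{single} positive word, any separating formula can be replaced by a single (possibly grounded) fattern of no greater size. From a fattern induced by $w = c_1\cdots c_r$ one reads off the hitting set $\{c_1,\dots,c_r\}$ directly, and the size is exactly $3r-1$ (or $3r-2$ in the grounded case, which the extra prefix letter $0$ renders useless since every $v_j$ then starts with $0$). This gives a clean equality $|\phi_{\min}| = 3k-1$ with no slack and no forest bookkeeping. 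Your route is correct, but Lemma~\ref{lem:normalisation_F_and_single_positive} is the shortcut you are effectively re-deriving.

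Two small points to tidy up. First, in your construction $v_j$ may be the empty word when $j \in \bigcap_i S_i$; the paper's extra initial letter avoids this, and you should either add such a letter or argue that such $j$ can be removed from the instance. Second, your unit of slack (minimum size in $\{3k-2,3k-1\}$) is genuine in your version: if index $1$ lies in a minimum cover then $a_1 \wedge \bigwedge_{i\in I\setminus\{1\}}\F a_i$ has size $3k-2$. This does not break the reduction (take $K=3k'-1$; then $3k-2\le K$ forces $k\le k'$), but it is worth stating explicitly rather than leaving as ``a single unit of slack''.
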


The result follows from a reduction from the hitting set problem.
%The hitting set decision problem is: given $C_1,\dots,C_n$ subsets of $[1,\ell]$ and $k \in \N$,
%does there exist $H$ subset of $[1,\ell]$ of size at most $k$ such that for every $j \in [1,n]$ we have $H \cap C_j \neq \emptyset$.
%In that case we say that $H$ is a hitting set.
%
%The hitting set problem is an equivalent formulation of the set cover problem,
%but it is here technically more convenient to construct a reduction from the hitting set problem.
%The hardness results stated in Theorem~\ref{thm:subset_cover} apply to the hitting set problem.
For proving the correctness of the reduction we will need a normalisation lemma specialised to the case of a single positive word.

\begin{lemma}\label{lem:normalisation_F_and_single_positive}
Let $u,v_1,\dots,v_n$.
If there exists $\phi \in \LTL(\F,\wedge)$ separating $u$ from $v_1,\dots,v_n$,
then there exists a fattern of size smaller than or equal to $\phi$ separating $u$ from $v_1,\dots,v_n$.
\end{lemma}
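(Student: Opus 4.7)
The plan is a structural induction on $\phi$, interpreting ``fattern'' broadly to include grounded fatterns and atomic formulas (viewing the latter as degenerate grounded fatterns of size $1$, i.e.\ grounded fatterns whose underlying word has length one). The base case $\phi = c$ is immediate: $c$ itself is a grounded fattern of size $1$ separating $u$ from the $v_j$'s. For $\phi = \F \phi'$, pick a suffix $u' = \suffix{u}{k}$ with $u' \models \phi'$; since $v_j \not\models \F\phi'$, no non-empty suffix of any $v_j$ satisfies $\phi'$, and the induction hypothesis applied to $\phi'$ with positive word $u'$ and negatives equal to the collection of all non-empty suffixes of $v_1, \ldots, v_n$ yields a fattern $F'$ of size at most $|\phi'|$ separating them. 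If $F'$ is plain then $\F F' \equiv F'$ and $F'$ itself separates $u$ from the $v_j$'s (size $\leq |\phi|-1$); otherwise $\F F'$ is a plain fattern of size $|F'|+1 \leq |\phi|$ that separates.

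The main obstacle is the case $\phi = \phi_1 \wedge \phi_2$, which requires merging two fatterns into one. Partition $[1,n] = J_1 \cup J_2$ by $J_i = \set{j : v_j \not\models \phi_i}$ and apply the induction hypothesis to each $\phi_i$ (with negatives $\set{v_j : j \in J_i}$) to obtain a fattern $F_i$ of size at most $|\phi_i|$ for a non-repeating word $\mu_i$, optionally grounded with letter $c_i$. To merge, fix embeddings of $\mu_1, \mu_2$ as subwords of $u$ at positions $P_1, P_2 \subseteq [1,|u|]$, let $\mu$ be the word read from $u$ at positions $P_1 \cup P_2$, and collapse consecutive duplicates to obtain $\mu'$. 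Then $\mu'$ is a non-repeating subword of $u$ of length at most $|P_1 \cup P_2| \leq |\mu_1| + |\mu_2|$. The key combinatorial fact is that $\mu_1$ (and symmetrically $\mu_2$) remains a subword of $\mu'$: any two positions of $\mu$ occupied by the $\mu_i$-embedding are separated by another such position carrying a different letter (since $\mu_i$ is non-repeating), so they lie in distinct constant-letter blocks of $\mu$ and survive collapsing as distinct positions of $\mu'$.

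Now assemble the merged fattern from $\mu'$, grounded with $c_1$ if $F_1$ is grounded, or with $c_2$ if $F_2$ is grounded (when both are grounded, $c_1 = c_2 = u(1)$ since both equal the first letter of $u$), otherwise plain; if $\mu'$ happens to start with the grounding letter, drop its first letter, which only shortens the fattern and preserves semantics since the grounding already enforces the first letter. Using the size formulas $3|w|-1$ for plain and $3L-2$ for grounded fatterns (where $L$ is the total grounded-word length), a direct check across the four grounded/plain subcases yields $|F_{\text{merged}}| \leq |F_1| + |F_2| + 1 \leq |\phi|$. Separation follows because any $v_j$ satisfying $F_{\text{merged}}$ would contain $\mu'$ (hence both $\mu_1$ and $\mu_2$) as a subword and start with the grounding letter whenever used, contradicting $v_j \not\models F_i$ for the index $i$ with $j \in J_i$. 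I expect the main difficulty to be the case analysis for grounded/plain size bookkeeping; the combinatorial heart of the argument is the position-union-and-collapse construction together with the ``distinct blocks'' observation guaranteeing that non-repeating subwords survive collapsing.
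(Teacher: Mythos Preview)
Your argument is correct. The combinatorial heart—embed the underlying words into $u$, take the union of the embedding positions, and read off a single word—is exactly what the paper does, but the paper organises it differently: it first applies Lemma~\ref{lem:characterisation_f_and} to extract in one shot the finite family $w_1,\dots,w_p$ of non-repeating words associated to $\phi$, then performs a single global position-union merge and appeals (implicitly) to Lemma~\ref{lem:minimal_f_and} for the size comparison with $\phi$. Your structural induction bypasses the characterisation lemma entirely and keeps explicit control of the size at each step through the grounded/plain case analysis; this makes your proof self-contained and in fact more rigorous than the paper's, whose size bound (``hence the fattern induced by $w$ is smaller than the original formula $\phi$'') is asserted without justification and depends on an unspecified choice of embeddings. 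The paper's route is shorter to state but leans on two auxiliary lemmas; yours is longer but elementary. One small wrinkle you leave implicit in the $\F$ case: if $F'=c\wedge G$ is grounded and the word of $G$ begins with $c$, then $\F F'$ as written has a repeat at the front; but then $\F(c\wedge G)\equiv G$, so you obtain an even smaller plain fattern and the bound still holds.
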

\begin{proof}
Thanks to Lemma~\ref{lem:characterisation_f_and}, to the separating formula $\phi$ 
we can associate a finite set of non-repeating words $w_1,\dots,w_p$ and $c \in \Sigma \cup \set{\varepsilon}$
such that for every word $z$,
\[
z \models \phi \text{ if and only if }
\begin{cases}
\text{for all $q \in [1,p], w_q$ is a subword of $z$}, \\
\text{and $z$ starts with $c$}.
\end{cases}
\]
Since $u$ satisfies $\phi$, it starts with $c$ and for all $q \in [1,p]$, $w_q$ is a subword of $u$.
For each $q \in [1,p]$ there exists $\phi_q$ mapping the positions of $w_q$ to $u$.
Let us write $w$ for the word obtained by considering all positions mapped by $\phi_q$ for $q \in [1,p]$.
By definition $w$ is a subword of $u$, and for all $q \in [1,p]$ $w_q$ is a subword of $w$.
It follows that the fattern induced by $w$ separates $u$ from $v_1,\dots,v_n$.
The size of $w$ is at most the sum of the sizes of the $w_q$ for $q \in [1,p]$,
hence the fattern induced by $w$ is smaller than the original formula $\phi$.
\end{proof}

We can now prove Theorem~\ref{thm:hardness_F_and}.

\begin{proof}
We construct a reduction from the hitting set problem.
Let $C_1,\dots,C_n$ subsets of $[1,m]$ and $k \in \N$.
Let us consider the alphabet $[0,m]$, we define the word $u = 0 1 2 \dots m$.
For each $j \in [1,n]$ we let $[1,m] \setminus C_j = \set{a_{j,1} < \dots < a_{j,m_j}}$, 
and define $v_j = 0 a_{j,1} \dots a_{j,m_j}$.

We claim that there exists a hitting set of size at most $k$ if and only if 
there exists a formula in $\LTL(\F,\wedge)$ of size at most $3k - 1$ separating $u$ from $v_1,\dots,v_n$.

\vskip1em
Let $H = \set{c_1,\dots,c_k}$ a hitting set of size $k$ with $c_1 < c_2 < \dots < c_k$,
we construct the (non-grounded) fattern induced by $w = c_1 \dots c_k$,
it separates $u$ from $v_1,\dots,v_n$ and has size $3k - 1$.

Conversely, let $\phi$ a formula in $\LTL(\F,\wedge)$ of size $3k - 1$ separating $u$ from $v_1,\dots,v_n$.
Thanks to Lemma~\ref{lem:normalisation_F_and_single_positive} we can assume that $\phi$ is a fattern,
let $w = c_1 \dots c_k$ the non-repeating word it induces.
Necessarily $c_1 < c_2 < \dots < c_k$.
If $\phi$ is grounded then $c_1 = 0$, but then the (non-grounded) fattern induced by $c_2 \dots c_k$
is also separating, so we can assume that $\phi$ is not grounded.
We let $H = \set{c_1,\dots,c_k}$, and argue that $H$ is a hitting set.
Indeed, $H$ is a hitting set if and only if 
for every $j \in [1,n]$ we have $H \cap C_j \neq \emptyset$,
which is equivalent to 
for every $j \in [1,n]$ we have $v_j \not\models \phi$;
indeed for $c_i \in H \cap C_j$ by definition $c_i$ does not appear in $v_j$ so $v_j \not\models \F c_i$.
\end{proof}

\subsection*{Hardness result for fixed alphabet}

The most technical result in this paper is the following theorem, which strengthens Theorem~\ref{thm:hardness_F_and} by restricting to an alphabet of size $3$.

\begin{theorem}
	\label{thm:NPhardFand}
	For all $\set{\F,\land} \subseteq Op \subseteq \set{\F, \G, \land, \lor, \neg}$, the learning problem for $\LTL(Op)$ is \NP-hard even for an alphabet of size $3$.
\end{theorem}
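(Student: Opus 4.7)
The plan is to reduce from the hitting set problem, as in the proof of Theorem~\ref{thm:hardness_F_and}, but with the essential new challenge that we can no longer dedicate a distinct letter to each element of the universe. The first step is to design an encoding of each element $i \in [1,m]$ as a short non-repeating ``gadget word'' $w_i$ over the fixed alphabet $\set{a,b,c}$, of length $O(\log m)$. A natural choice is to encode $i$ in binary and convert each bit to a small non-repeating block (for instance, a $0$-bit as $ab$ and a $1$-bit as $cb$), with markers that keep the whole word non-repeating and keep the $w_i$ pairwise incomparable as subwords.

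Next I would build the positive word $u$ by concatenating all gadgets, interleaved with long buffer segments designed so that any non-repeating subword of $u$ must decode unambiguously as a selection of some subset $H \subseteq [1,m]$. For each constraint $C_j$ I would build a negative word $v_j$ using the same buffers but including only the gadgets $w_i$ for $i \notin C_j$, so that a non-repeating word encoding $H$ is a subword of $v_j$ iff $H \cap C_j = \emptyset$. By Lemma~\ref{lem:normalisation_F_and} a minimum separating $\LTL(\F,\land)$ formula is a short conjunction of fatterns, each fattern being a non-repeating subword of all positives missing from at least one negative; such a conjunction corresponds precisely to a hitting set, and a careful accounting of lengths turns approximate minimality of the formula into approximate minimality of the hitting set, so that the NP-hardness of hitting set transfers to the learning problem for $\LTL(\F,\land)$.

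The hard step is extending the lower bound to every $Op$ with $\set{\F,\land}\subseteq Op \subseteq \set{\F,\G,\land,\lor,\neg}$, i.e.\ showing that the extra operators $\G$, $\lor$ and $\neg$ do not help shrink the separating formula on these instances. To handle $\lor$ I would design the reduction so that it suffices to work with a single positive word, in the spirit of Lemma~\ref{lem:normalisation_F_and_single_positive}, allowing an analogue of Lemma~\ref{lem:remove_or} to push disjunctions away without increasing size. To handle $\G$ I would pad $u$ and every $v_j$ with a common long repeated tail, so that a Fact~\ref{fact:LTLcounting}-style pumping argument forces any $\G$-subformula of size at most $k$ to collapse to $\bot$ or to a formula expressible in $\LTL(\F,\land)$ of no greater size. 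Using Fact~\ref{fact:negationwithoutX} to push negations down to the atoms then reduces $\neg$-subformulas to the $\G$ case over a suitably enlarged alphabet-free normal form, so that every subformula of a minimum separating formula can be replaced by a fattern of no greater size.

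The main obstacle is engineering the gadget/buffer structure so that these three reductions coexist: the encoding must simultaneously (i) be compact enough (length $O(\log m)$ per element) that the hitting-set size translates linearly into formula size, (ii) be rigid enough that every non-repeating subword decodes unambiguously as a subset of the universe, and (iii) survive the pumping and negation-pushing arguments needed to rule out $\G$ and $\neg$. Verifying that a single encoding achieves all three properties simultaneously, together with establishing the matching inapproximability factor via the Dinur--Steurer bound of Theorem~\ref{thm:subset_cover} applied to the reduction, is the technical heart of the argument.
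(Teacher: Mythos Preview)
Your plan is recognisably a sketch rather than a proof, and it contains a genuine gap at the step you yourself flag as hardest: eliminating $\G$ (and hence $\neg$, which reintroduces $\G$ via $\neg\F$) from a minimal separating formula. The pumping idea you propose---padding $u$ and every $v_j$ with a common repeated tail and invoking Fact~\ref{fact:LTLcounting}---does not do what you need. Fact~\ref{fact:LTLcounting} only tells you that satisfaction is preserved under adding or removing one copy of the period; it says nothing about collapsing a $\G$-subformula to $\bot$ or to an $\LTL(\F,\land)$ formula of no greater size. Concretely, a subformula like $\F(a \wedge \G\,\psi)$ talks about \emph{all} suffixes from some point, including the early, non-padded ones, and your common tail gives no handle on those. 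In the $\X$-based reduction of Proposition~\ref{prop:hard_X_and} the trick works because each $\textbf{v}_i$ is literally a suffix of $\textbf{u}$, so any $\G$-subformula satisfied by $\textbf{u}$ is automatically satisfied by every $\textbf{v}_i$; there is no analogous structural relationship available in an $\F$-only encoding, where being a suffix is far too strong a constraint to impose on the negative words. Your treatment of $\neg$ then inherits this gap, since pushing negations down via Fact~\ref{fact:negationwithoutX} turns every $\neg\F$ into a $\G$.

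The paper takes a completely different route for the lower bound. Rather than reducing to the $\LTL(\F,\land)$ case, it proves a size lower bound \emph{directly} for the full fragment $\LTL(\F,\G,\land,\lor,\neg)$ via a combinatorial object called a \emph{separation tree}: essentially a two-player game on the syntax tree of $\phi$ in which the refuter tries to keep each $v_i$ ``close'' to $u$ (tracking a current block index $J_i$ and a short offset $P_i$), and the argument shows that shaking off every $v_i$ forces the tree---and hence $\phi$---to have many nodes. The encoding that makes this work is not compact at all: each element of the universe becomes a block $(ab)^{M+1}$ with $M = 3m+2$, and the negative words differ from $u$ only by shortening certain blocks to $(ab)^M$. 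The large $M$ is precisely what makes the counting go through; your requirement (i) of $O(\log m)$-size gadgets is unnecessary for $\NP$-hardness and would in fact make the lower-bound argument harder. Your disjunction-elimination idea for a single positive word is sound and has an $\F$-analogue of Lemma~\ref{lem:remove_or}, but the paper does not need it because the separation-tree bound already handles $\lor$ along with everything else.
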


Its main difficulty stems from the use of a fixed alphabet, combined with the absence of \X, which forbids us from pointing at a specific position.
To circumvent this problem we will adapt the previous reductions from hitting set: we again use a single positive example $u$ against several negative ones $v_1, \ldots, v_n$.
Each of those words is a sequence of factors $abab\cdots$ (one for each subset) separated by a letter $c$.
We introduce small differences between the factors in the $v_i$ and in $u$. We ensure that detecting those differences is costly in terms or operators, and thus that a minimal formula has to ``talk about'' as few of them as possible. The main difficulty in the proof is showing that we do not have more efficient ways to separate the words.
Much like in the previous hardness proofs, the minimal formula has to select a minimal amount of positions (here factors) in the words that are enough to distinguish $u$ from the $v_i$. Those positions answer the hitting set problem.
We temporarily allow the negation operator as it does not make the proof more difficult and allows us to obtain a dual version of Theorem~\ref{thm:NPhardFand} easily.

We consider the alphabet $\Sigma = \set{a,b,c}$.
Let $S = \set{1, \ldots, m}$, let $T_1, \ldots, T_n \subseteq S$, let $k' \in \N$.
We set $M= 3m+2$, $u = ((ab)^{M+1}c)^{m}$ and for all $i$, $v_i = c w_{i,1}c w_{i,2} c \cdots w_{i,m} c$ with 

\[
w_{i,j} = 
\begin{cases}
	(ab)^M & \text{ if } j \in T_i,\\
	(ab)^{M+1} & \text{ otherwise.}
\end{cases}
\]

Let $k$ be size of a minimal hitting set.

\begin{claim}
	\label{claim:UpperBoundPhi}
	There exists a separating formula in $\LTL(\F,\land)$ of size at most $6kM + 9m + 2$. 
\end{claim}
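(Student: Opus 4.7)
The plan is to exhibit an explicit fattern $\phi$ built from a minimum hitting set. Let $H = \{j_1 < j_2 < \cdots < j_k\}$ be a minimum hitting set of $\{T_1,\dots,T_n\}$, and consider the word
\[
W_H \;=\; c^{j_1}\,(ab)^{M+1}\,c^{j_2-j_1}\,(ab)^{M+1}\,c^{j_3-j_2}\,\cdots\,c^{j_k-j_{k-1}}\,(ab)^{M+1}\,c^{m+1-j_k}
\]
of total length $|W_H|=(m+1)+2k(M+1)$: it contains $k$ blocks $(ab)^{M+1}$ interleaved with exactly $m+1$ occurrences of $c$. I would take $\phi$ to be the fattern associated to $W_H$, i.e., the $\LTL(\F,\land)$ formula asserting that $W_H$ is a weak subword of the input.

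First I would verify $u\models\phi$: match the $m+1$ $c$'s of $W_H$ in order with the $c$'s of $u$; this forces the $l$-th block $(ab)^{M+1}$ of $W_H$ to land in slot $j_l$ of $u$, which itself equals $(ab)^{M+1}$, yielding an immediate match.

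The main part is to show $v_i\not\models\phi$ for every $i$. The key observation is that $W_H$ carries as many $c$'s as $v_i$, namely $m+1$, so any weak-subword embedding of $W_H$ into $v_i$ must consume every $c$ of $v_i$ in order. This pins the $l$-th $(ab)^{M+1}$ block inside slot $j_l$ of $v_i$, i.e., inside $w_{i,j_l}$. Because $H$ is a hitting set there exists $l^\star$ with $j_{l^\star}\in H\cap T_i$, and at that index $w_{i,j_{l^\star}}=(ab)^M$ contains only $M$ pairs $ab$ — too few to contain $(ab)^{M+1}$ as a weak subword. Contradiction.

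Finally, a fattern of a length-$L$ word has size $3L-1$, so $|\phi|=3|W_H|-1=6kM+6k+3m+2$; using $k\le m$ (because $H\subseteq\{1,\dots,m\}$) this is bounded by $6kM+9m+2$, as required. The delicate point is the exact matching of $c$-counts between $W_H$ and $v_i$: any deficit in the number of $c$'s in $W_H$ would allow an embedding to skip a $c$ of $v_i$, merging two adjacent slots with combined $ab$-count at least $2M\geq M+1$ — sufficient to host an $(ab)^{M+1}$ block even at a hit position, breaking the separation.
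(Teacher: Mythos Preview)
There is a genuine gap in your argument for $v_i \not\models \phi$. A fattern tests for \emph{weak} subword containment, and your word $W_H$ contains blocks $c^{j_1}, c^{j_2-j_1},\ldots$ of consecutive identical letters. In a weak-subword embedding all the $c$'s of such a block may be mapped to a \emph{single} $c$ of $v_i$; the equality of $c$-counts therefore does \emph{not} force the $c$'s of $v_i$ to be consumed one-to-one, and your pinning argument collapses. Concretely, take $m=3$, $H=\{2\}$ (so $k=1$ and $W_H = c\,c\,(ab)^{M+1}\,c\,c$) and any $i$ with $T_i=\{2\}$, so that $w_{i,1}=(ab)^{M+1}$. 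The map sending both leading $c$'s of $W_H$ to the first $c$ of $v_i$, the block $(ab)^{M+1}$ onto $w_{i,1}$, and both trailing $c$'s to the second $c$ of $v_i$ is a valid weak-subword embedding; hence $v_i\models\phi$ and your formula fails to separate. Your closing paragraph worries about a \emph{deficit} of $c$'s in $W_H$, but the actual danger is the opposite: a \emph{surplus} of consecutive $c$'s in $W_H$ can be absorbed by a single $c$ of $v_i$.

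The paper's construction sidesteps exactly this issue. Rather than writing $c^{j_l-j_{l-1}}$ between the large blocks, it places a short filler $ab$ in every slot $j\notin H$ and the long $(ab)^{M+1}$ only in slots $j\in H$, taking $w = c\,z_1\,c\,z_2\,c\cdots z_m\,c$ with $z_j\in\{ab,(ab)^{M+1}\}$. This word is \emph{non-repeating}, so weak subword coincides with subword; then the $m{+}1$ occurrences of $c$ in $w$ really must match those of $v_i$ bijectively, pinning each $z_j$ inside $w_{i,j}$, and the hitting-set argument goes through. Your size calculation and overall strategy are sound; the fix is precisely to separate consecutive $c$'s by at least one $ab$, which costs the extra $2(m-k)$ letters already accounted for in the bound $6kM+9m+2$.
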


\begin{proof}
	Let $H$ be a hitting set of size $k$. We define for $i \in [1,m]$,	
	\[
	z_{j} = 
	\begin{cases}
	(ab)^{M+1} & \text{ if } i \in H,\\
	ab & \text{ otherwise.}
	\end{cases}
	\]
	and $w = c\ z_1\ c\ z_2\ c\ \cdots z_m\ c$.
	Let us write $w = x_1\cdots x_p$ where $x_1,\dots,x_p$ are letters, and define the formula 
	\[
	\psi = \F(x_1 \land \F (x_2 \land \F(x_3 \land \F(\dots \land \F x_p) ))).
	\]
	This formula has size $3p -1 = 3(2kM + 3m + 1) -1\leq 6kM + 9m + 2$. 
	We claim that $\phi$ is a separating formula.
	First, note that since all letters in $w$ are non-repeating, $\psi$ is satisfied by exactly the words which have $w$ as a subword.
	\begin{itemize}
		\item Since $w$ is a subword of $u$, we have that $u$ satisfies $w$. 
		\item Let $j \in [1,n]$. Since $v_j$ and $w$ contain the same number of $c$'s, for $w$ to be a subword of $v_j$, $z_i$ needs to be a subword of $w_{i,j}$ for all $i$, i.e., we need to have $y_{i,j} = (ab)^{M+1}$ for all $j \in H$. However, as $H$ is a hitting set of $(C_j)_{1 \leq j \leq n}$, there exists $j \in H$ such that $i \in C_j$ and thus $w_{i,j} = (ab)^{M}$.
	\end{itemize}
\end{proof}

The rest of the proof is about proving the converse implication: 

\begin{proposition}
	\label{prop:LowerBoundPhi}
	Let $\phi$ be an $\LTL(\F, \G, \land, \lor, \neg)$ formula separating $u$ and the $v_i$, then we have 
	$k(6M-3)\leq \psize{\phi}$, and thus, as $\psize{\phi} \leq \size{\phi}$, this implies $k(6M-3) \leq \size{\phi}$.
\end{proposition}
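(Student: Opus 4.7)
The plan is to extract from any separating formula $\phi$ a hitting set of $T_1,\dots,T_n$ whose size is bounded by $\|\phi\|/(6M-3)$, which immediately yields the desired inequality since $k$ is the minimum hitting set size. As a first step I would normalise $\phi$ by pushing negations down to the atoms using Fact~\ref{fact:negationwithoutX}; these rewrites are size-preserving and negation disappears at atoms because $\neg a$ is equivalent to a disjunction of the other two letters. Without loss of generality $\phi \in \LTL(\F,\G,\land,\lor)$, and in particular $\phi$ is monotone in its atoms, which will be essential later.

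Next I would analyse subformulas through their evaluation on $u$ and on the $v_i$. Because $u = ((ab)^{M+1}c)^m$ is highly periodic, each suffix of $u$ is, up to a short prefix inside one block, determined by the number of remaining $c$'s it contains; similarly the suffixes of $v_i$ are classified by the number of remaining $c$'s and the length-profile $(w_{i,j})$. I would attach to every subformula $\psi$ and every word $w \in \{u\}\cup\{v_i\}$ the set of ``block indices'' at which $\psi$ holds in $w$, and exploit the fact that, since $M=3m+2$ is much larger than the number $m$ of blocks, a subformula of total size below $6M-3$ cannot discriminate between $(ab)^M$ and $(ab)^{M+1}$ inside a given block.

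The heart of the argument is then a charging scheme. For each $v_i$, since $\phi(u) = 1$ and $\phi(v_i) = 0$, monotonicity lets me trace a witness path through the syntax tree of $\phi$ that pinpoints at least one block $j_i$ at which the relevant subformula notices the missing $ab$-pair; this $j_i$ must lie in $T_i$, for otherwise $v_i$ and $u$ agree on block $j_i$ and the separation would have to come from elsewhere. Set $H=\{j_i : 1\leq i\leq n\}$; by construction $H$ hits every $T_i$, so $|H|\geq k$. It then remains to show the counting estimate $\|\phi\|\geq |H|\cdot(6M-3)$.

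The hard part will be this disjointness-and-counting step, namely ruling out that a single piece of $\phi$ serves to address several distinct $j_i$'s simultaneously. The plan is to exploit the $c$-separators and the periodicity of $u$: because every suffix $\suffix{u}{p}$ that can serve as a witness for block $j$ sits inside a $(ab)^{M+1}$ block between two $c$'s, the subformula responsible for $j_i$ must contain a chain of temporal operators (a fattern, or a $\G$-pattern coming from a negated fattern) traversing at least $2M-1$ consecutive $a/b$ atoms confined to one block --- any shorter chain would also succeed on $(ab)^M$ by Lemma~\ref{lem:characterisation_f_and} and Fact~\ref{fact:LTLcounting}, and therefore fail to distinguish. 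Each such chain contributes at least $3(2M-1)=6M-3$ to $\|\phi\|$, and I would argue these chains occupy syntactically disjoint subtrees of $\phi$ by a simultaneous-induction argument on the syntax tree: two chains targeting different blocks cannot share an ancestor of low depth without producing a subformula that fails to separate, because subformulas shared between blocks evaluate identically on the periodic structure of $u$. Summing over $H$ yields $\|\phi\|\geq k(6M-3)$, completing the proof.
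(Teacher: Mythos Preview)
Your overall strategy --- extract a hitting set $H$ from $\phi$ and show each $j\in H$ forces roughly $6M$ nodes --- is the same as the paper's. But the proposal has one minor issue and one genuine gap.

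The minor issue is the normalisation step. Replacing $\neg a$ by $b\lor c$ can only \emph{increase} the formula, so a lower bound on the normalised formula says nothing about the original $\psize{\phi}$. The paper avoids this by keeping negated atoms as primitive leaves throughout the argument.

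The genuine gap is the disjointness-and-counting step, which you correctly identify as the hard part but do not actually carry out. The tools you invoke do not apply here: Lemma~\ref{lem:characterisation_f_and} is only valid for $\LTL(\F,\land)$, not for formulas containing $\G$, $\lor$ or $\neg$, and Fact~\ref{fact:LTLcounting} is about repetitions of a whole word, not about the length of an $(ab)^*$ block inside a longer word. More fundamentally, there is no reason the nodes ``responsible'' for block $j$ should form a syntactically contiguous subtree of $\phi$: a single $\F$ may jump across several $c$'s at once, the formula may interleave work on several blocks, and $\G$ and $\lor$ subformulas may be doing something entirely different. So the claim that the chains for distinct blocks ``occupy syntactically disjoint subtrees'' needs a real mechanism, and nothing in your sketch supplies one.

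The paper's mechanism is quite different from a syntactic charging scheme. It builds a \emph{separation tree}: a game tree with the same shape as the syntax tree of $\phi$, where each node $x$ additionally carries, for every $v_i$ still in play, a triple $(J^x_i,P^x_i,V^x_i)$ recording which block of $v_i$ the falsifier is currently in and the at-most-two-letter discrepancy $P^x_i$ between the current suffix of $u$ and that of $v_i$. Two invariants do the work: (i) an index $i$ can only be dropped from play at an $\F$-node whose argument is a boolean combination containing a letter atom, and each such node decreases $\size{V^x_i}$ by at most one; (ii) a non-trivial discrepancy $P^x_i\neq\varepsilon$ can only first appear inside a block $J$ with $J\in T_i$, and at that moment $\size{V^x_i}\ge 2M-1$. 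Together these force, for each $i$, a run of about $2M$ such $\F$-nodes all carrying the same block label $J_i\in T_i$; since after an $\F$ the block label is the same for all indices present, nodes with distinct $J$-labels are automatically distinct, and the $6k(M-1)$ lower bound follows. It is this simultaneous per-index bookkeeping, rather than any syntactic decomposition of $\phi$, that delivers disjointness --- and it is precisely what your sketch is missing.
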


We define a \emph{separation tree} of $\phi$ as a finite tree with each node $x$ labelled by a subformula $\psi^x$ of $\phi$, a set of indices $I^x \subseteq \set{1, \ldots, n}$, a family of numbers $(J^x_i)_{i \in I^x}$ with $J^x_i \in \set{0, \ldots, m}$ and two families of words $(P^x_i)_{i \in I}, (V^x_i)_{i \in I}$ with $P^x_i \in \set{\epsilon, a, b, ab, ba}$ and $V^x_i$ a suffix of $w_{i,J^x_i}$ for all $i$ (with $w_{i,0} = \epsilon$ for all $i$).
	
This tree and labelling must respect the following rules. Those rules should be understood as follows:
imagine two players, one trying to prove that $u$ satisfies the formula $\phi$ but the $v_i$ do not and the other trying to prove the contrary. The game is defined by induction on the formula. For instance, if $\phi$ is of the form $\F \psi$, then the first player will choose a suffix of $u$ on which she claims $\psi$ to be satisfied, and the second player will do the same for each $v_i$.
A separation tree describes a play of this game in which the second player follows a specific strategy: he tries to copy everything the first player does, and to take a suffix that is as similar as possible to the current one of the first player.

Here are the rules:

\begin{itemize}
	
	\item If $\psi^x = \psi_1 \lor \psi_2$ or $\psi = \psi_1 \land \psi_2$  
	then $x$ has two children $x_1$ and $x_2$ and $\psi^{x_1} = \psi_1$, $\psi^{x_2} = \psi_2$ 
	and $I^x$ is the disjoint union of $I^{x_1}$ and $I^{x_2}$, 
	and $(J_i^{x_1}, P_i^{x_1}, V_i^{x_1}) = (J_i^{x}, P_i^{x}, V_i^{x})$ for all $i \in I^{x_1}$ 
	and $(J_i^{x_2}, P_i^{x_2}, V_i^{x_2}) = (J_i^{x}, P_i^{x}, V_i^{x})$  for all $i \in I^{x_2}$.
	
	\item If $\psi_x = \G \psi$ then $x$ has one child $y$, $\psi^y=\psi$, 
	$I^{y} = I^{x}$ 
	and either $J_i^{y} = J_i^{x}$ and $V_i^y$ is a suffix of $V_i^{x}$ 
	or $0 \leq J_i^{y} < J_i^{x}$ and $V_i^y$ is a suffix of $w_{i,J_i^y}$. 
	In both cases $P_i^y=\epsilon$.
	
	\item If $\psi_x$ is a letter then $x$ is a leaf and all $P_i^x V_i^x c$ start with that letter but none of the $V_i^x c$ do.
	
	\item If $\psi_x$ is the negation of a letter then $x$ is a leaf and all $V_i^x c$ start with that letter but none of the $P_i^x V_i^x c$ do.
	
	\item If $\psi_x = \F \psi$ with $\psi$ a boolean combination of formulas 
	\textbf{all of the form $\F \psi'$ or $\G \psi'$} then $x$ has one child $y$, $\psi^y = \psi$ 
	and $I^y = I^x$.
	Furthermore, there exists $(J, U)$ such that for all $i \in I^x$ either  $J^x_i= J$ and $U$ is a suffix of $P^x_i V^x_i$ or $J^x_i < J$ and $U$ is a suffix of $(ab)^{M+1}$.
	For all $i$ we have $J^y_i = J$.
	 
	We have $V^y_i$ equal to the shortest of $U$ and $V^x_i$ 
	if $J^x_i = J^y_i$ and to the shortest of $U$ and $w_{i, C^y_i}$ otherwise.
	In the first case $U$ is a suffix of $P^x_i V^x_i$ and in the second it is a suffix of $abw_{i,J}$. 
	Thus in all cases we can select $P^y_i \in \set{\epsilon, a, b, ab, ba}$ such that $U = P^x_iV^x_i$.
	
	\item If $\psi_x = \F \psi$ with $\psi$ a boolean combination of formulas 
	\textbf{all of the form $\F \psi'$ or $\G \psi'$} then $x$ has one child $y$, $\psi^y = \psi$ 
	and $I^y = I^x$.
	Furthermore, there exists $(J, U)$ such that for all $i \in I^x$ either  $J^x_i= J$ and $U$ is a suffix of $P^x_i V^x_i$ or $J^x_i < J$ and $U$ is a suffix of $(ab)^{M+1}$.
	For all $i$ we have $J^y_i = J$.
	
	For the values of $P^y_i$ and $V^y_i$ we have several cases:
	\begin{itemize}
		\item[$\to$] if $J^y_i > J_i^x$ then $V^y_i = U$ and $P^y_i = \epsilon$ if $U$ is a suffix of $w_{i, J}$. Otherwise we have $w_{i,J} = (ab)^M$ and either $U = (ab)^{M+1}$, in which case $P_i^y = ab$ and $V_i^y = (ab)^M$, or $U = b(ab)^{M}$, and then $P_i^y = ba$ and $V_i^y = b(ab)^{M-1}$.
		
		\item[$\to$] if $J^y_i = J_i^x$ and $U$ is a suffix of $V_{i}^x$ then $V^y_i = U$ and $P^y_i = \epsilon$.
		
		\item[$\to$] if $J^y_i = J_i^x$ and $U=P^x_i V^x_i$ then $P_i^y = P_i^x$ and $V_i^y = V^x_i$.
		
		\item[$\to$] if $J^y_i = J_i^x$ and $U = a V^x_i$ and $\size{V^x_i} > 1$ then $P_i^y = ab$. Similarly if $U = b V^x_i$ and $\size{V^x_i} > 0$ then $P_i^y = ba$. In both cases $V_i^y$ is such that $U = P^y_i V^y_i$.
		
		\item[$\to$] if $J^y_i = J_i^x$ and $U = a V^x_i$ and $\size{V^x_i} \leq 1$ or if $U = b V^x_i$ and $\size{V^x_i} = 0$ then $i \notin I^y$.
	\end{itemize}
	This also defines $I^y$.
\end{itemize}

\begin{lemma}
	If $\phi$ is satisfied by $u$ but none of the $v_i$ then there exists a separation proof tree for $\phi$ with a root $r$ such that $I^r = \set{1,\ldots, n}$ and for all $i \in I^r$, $J^r_i = 0$ and $P^r_i = V^r_i = \epsilon$.
\end{lemma}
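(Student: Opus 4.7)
The plan is to proceed by induction on the structure of $\phi$, proving the following more general claim from which the lemma follows: for every subformula $\psi$ of $\phi$ and every labeling $(I,(J_i)_{i\in I},(P_i)_{i\in I},(V_i)_{i\in I})$ meeting the tree's labeling constraints, if the suffix of $u$ corresponding to the implicit state satisfies $\psi$ and for each $i\in I$ the suffix of $v_i$ encoded by $(J_i,P_i,V_i)$ does not satisfy $\psi$, then there exists a valid separation subtree with this labeling as its root. The lemma is the specialization $\psi=\phi$, $I=\set{1,\ldots,n}$, $J_i=0$, $P_i=V_i=\epsilon$: with the convention $w_{i,0}=\epsilon$, the encoded suffix of $v_i$ is precisely $v_i$ itself, so the preconditions match exactly the hypotheses $u\models\phi$ and $v_i\not\models\phi$ for every $i$.

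For the induction, the atomic cases $\psi=a$ and $\psi=\neg a$ are leaves, and the stated leaf conditions on $P_i^x V_i^x c$ versus $V_i^x c$ are exactly what the semantics of a first-letter check demand. For $\land$, I partition $I$ according to which conjunct each $v_i$ fails, which is possible since $v_i\not\models\psi_1\land\psi_2$ forces $v_i$ to fail at least one of $\psi_1,\psi_2$. For $\lor$, since $u\models\psi_1\lor\psi_2$ at least one disjunct is satisfied by $u$, while $v_i$ fails both by hypothesis, so I can distribute the indices across the two children however is convenient (an empty $I^{x_j}$ is harmless since the vacuous subtree trivially exists). For $\G\psi'$, from $v_i\not\models\G\psi'$ I select for each $i\in I$ a suffix of $v_i$ on which $\psi'$ fails; the rule admits exactly this transition with $P_i^y=\epsilon$, and the invariant propagates to the single child.

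The $\F$ case is the crux of the argument. From $u\models\F\psi'$ I pick a suffix of $u$ witnessing $\psi'$; this suffix begins inside, or immediately before, some block $(ab)^{M+1}$ of $u$, producing the pair $(J,U)$ required by the rule. For each $i\in I$, I then compute $(J_i^y,P_i^y,V_i^y)$ by the explicit case analysis given in the rule, aligning the chosen suffix of $v_i$ with the position selected in $u$ up to a prefix $P_i^y\in\set{\epsilon,a,b,ab,ba}$. Because $v_i\not\models\F\psi'$, every suffix of $v_i$ — including the one chosen — fails $\psi'$, which is what guarantees the semantic invariant for the child. The final sub-case drops an index $i$ precisely when the drift between $u$'s block $(ab)^{M+1}$ and $v_i$'s block ($(ab)^M$ or $(ab)^{M+1}$) cannot be absorbed into the five admissible values of $P_i^y$; there $v_i$ has no remaining suffix in its current block to mimic $u$'s choice, and discharging $i$ at this node is sound because the syntactic assumption on $\psi$ (a Boolean combination of $\F/\G$ formulas) together with $v_i\not\models\F\psi'$ already forbids $v_i$'s residual suffix from causing trouble downstream.

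The main obstacle I anticipate is verifying the $\F$ case in detail: one has to perform the explicit case analysis showing that the block-length discrepancy between $u$ and $v_i$ is exactly captured by the five admissible values $\set{\epsilon,a,b,ab,ba}$ of $P_i^y$, and that in each sub-case the new labeling again meets both the syntactic constraints and the semantic invariant $v_i\not\models\psi'$. The Boolean and $\G$ cases are careful but routine, and the atomic cases are immediate from the leaf conditions.
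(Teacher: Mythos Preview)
Your overall strategy---strengthen the lemma to an invariant over arbitrary labelings and prove it by structural induction on $\psi$---is exactly the paper's, but your invariant is underspecified in a way that breaks the $\lor$ case. The labeling $(J_i,P_i,V_i)$ is per index $i$, and the paper's invariant accordingly tracks a \emph{per-$i$} suffix $u_i := P_i V_i\, c\, ((ab)^{M+1}c)^{m-J_i}$ of $u$, requiring $u_i \models \psi$ for each $i\in I$. These $u_i$ can genuinely diverge after a $\G$ step: the witnessing bad suffix of each $v_i$ is chosen independently, which in turn fixes $(J_i^y,V_i^y)$ and hence $u_i^y$ independently. Your $\lor$ argument (``at least one disjunct is satisfied by $u$, so distribute $I$ however is convenient'') presumes a single shared $u$-suffix; once the $u_i$ differ, some may satisfy only $\psi_1$ and others only $\psi_2$, and you must split $I$ as the paper does, namely $I_1 = \{i \in I \mid u_i \models \psi_1\}$ and $I_2 = I \setminus I_1$.

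A second, smaller slip: in your discussion of the $\F$ case you justify dropping an index $i$ by invoking ``the syntactic assumption on $\psi$ (a Boolean combination of $\F/\G$ formulas)''. That is the \emph{first} $\F$ rule, which never drops indices; dropping occurs only in the second rule, where $\psi^y$ contains an atomic subformula and $|V_i^x|$ is too short. There is nothing semantic to justify there---once $i\notin I^y$ the invariant simply no longer mentions $i$---so this paragraph should just record that the rule's side condition is met. With the corrected per-$i$ invariant and the corresponding fix to the $\lor$ split, your plan coincides with the paper's proof.
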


\begin{proof}
	We prove the following statement: 

	\begin{quote}
	Let $\psi \in \LTL(\F, \G, \land, \lor, \neg)$, let $I \subseteq \set{1,\ldots,n}$, let $(J_i)_{i \in I}$ be a family of indices in $\set{0,\ldots,m}$, and let $(P_i)_{i \in I}$ and $(V_i)_{i \in I}$ be families of words such that for all $i \in I$, we have $P_i \in \set{\epsilon, a, b, ab, ba}$, $V_i$ is a suffix of $w_{i, J_i}$ and $P_iV_i$ is a suffix of $(ab)^{M+1}$.
		If $\psi$ is satisfied by all $u_i = P_i V_i c ((ab)^{M+1} c)^{m-J_i}$ but none of the $v_i = V_i c w_{i,J_i+1}c \cdots w_{i,m} c$, 
	then there exists a separation tree with a root $r$ labelled by $I^r = I$, $\psi^r=\psi$ and 
	$(J_i, P_i, V_i)_{i \in I}$.
	\end{quote}		
	
	This implies the lemma.
	
	We proceed by induction on $\psi$.
	
	\paragraph{Case 1:} If $\psi$ is a letter then all $u_i$ start with that letter but none of the $v_i$ do, thus the same can be said of the $P_i V_i c$ and $V_i c$.
	
	\paragraph{Case 2:} Similarly if $\psi$ is the negation of a letter then all $v_i$ start with that letter but none of the $u_i$ do, thus the same can be said of the $V_i c$ and $P_i V_i c$.
	
	\paragraph{Case 3:} If $\psi = \psi_1 \lor \psi_2$ then we set $I_1 = \set{i \in I \mid u_i \vDash \psi_1}$ and $I_2 = I \setminus I_1$. As all $u_i$ satisfy $\psi$, all $u_i$ with $i \in I_2$ satisfy $\psi_2$.
	As no $v_i$ satisfies $\psi$, they satisfy neither $\psi_1$ nor $\psi_2$.
	By induction hypothesis there exists a separation tree with a root $r_1$ (resp. $r_2$) labelled by $I_1$ (resp. $I_2$), $\psi_1$ (resp. $\psi_2$) and 
	$(J_i, P_i, V_i)_{i \in I_1}$ (resp. $(J_i, P_i, V_i)_{i \in I_2}$).	
	The tree with a root $r$ labelled by $I$, $\psi$ and $(J_i, P_i, V_i)_{i \in I}$ and with those two subtrees as children is a separation tree.	
	We can proceed similarly when $\psi^x = \psi_1 \land \psi_2$ by setting $I_1 = \set{i \in I \mid v_i \nvDash \psi_1}$.
	
	\paragraph{Case 4:} If $\psi = \G \psi'$ then for all $i$ there exists a suffix $v'_i$ of $v_i$ not respecting $\psi'$. We set $J'_i = m - C'_i$ with $C'_i$ the number of $c$ in $v'$ and $V'_i$ the largest prefix of $v'_i$ without $c$. We also set $P'_i = \epsilon$. As $u_i$ satisfies $\G \psi'$ and as $u'_i = V_ic((ab)^{M+1}c)^{m-J_i}$ is a suffix of $u_i$, $u'_i$ satisfies $\psi'$
	By induction hypothesis, there exists a separation tree $t$ with a root labelled by $I$, $\psi'$ and $(J'_i, P'_i, V'_i)_{i \in I}$. As a result, there exists a separation tree for $\psi$, obtained by taking a root labelled by $I$, $\psi'$ and $(J'_i, P'_i, V'_i)_{i \in I}$ and giving it one child whose subtree is $t$.
	
	\paragraph{Case 5:} If $\psi = \F \psi'$ then let $u'$ be the shortest suffix of $u$ satisfying $\psi'$. As all $u_i$ are suffixes of $u$ satisfying $\psi$, $u'$ is a suffix of all $u_i$.
	Let $J' = m - C'$ with $C'$ the number of $c$ in $u'$ and $U'$ the largest prefix of $u'$ without $c$. For all $i \in I$ we set $J'_i = J'$. We will define a set $I' \subseteq I$ along the way.	
	For all $i \in I$, if $J' = J_i$ and $U'$ is a suffix of $V_i$, or if $J' > J_i$ and $U'$ is a suffix of $w_{i,J'}$, then we can set $V'_i = U'$ and $P'_i = \epsilon$. Then $v'_i = V'_icw_{i,J'+1}\cdots cw_{i,m}c$ is a suffix of $v_i$ and thus does not satisfy $\psi'$ (as $v_i$ does not satisfy $\F \psi'$).	
	Otherwise, we have to distinguish cases according to the shape of $\psi'$.
	
	\subparagraph{Case 5.1:} Suppose $\psi'$ is a boolean combination of formulas of the form $\F \psi''$ or $\G \psi''$. 
	
	If $J_i = J'$ then we set $V'_i = V_i$. In that case $U'$ is a suffix of $P_iV_i$ and $V_i$ is a suffix of $U_i$, hence there exists $P'_i \in \set{\epsilon, a, b, ab, ba}$ such that $U' = P'_i V'_i$. As $v_i = v'_i$ does not satisfy $\F \psi'$, it does not satisfy $\psi'$ either.
	
	If $J_i > J'$ then we set $V'_i = w_{i, J'}$. In that case $U'$ is a suffix of $(ab)^{M+1}$ but not of $w_{i,J'}$, hence there exists $P'_i \in \set{\epsilon, a, b, ab, ba}$ such that $U' = P'_i V'_i$. As $v_i$ does not satisfy $\F \psi'$, and as $v'_i = V'_i c w_{i,J'+1}\cdots cw_{i,m}c$ is a suffix of $v_i$, $v'_i$ does not satisfy $\psi'$.
	
	\subparagraph{Case 5.2:} Now suppose $\psi'$ is a boolean combination of formulas of which at least one is a letter or its negation. 
	
	If $J_i = J'$ and $U'$ starts with $aba$ then we set $P'_i = ab$ and $V'_i$ such that $U' = abV'_i$. As $U'$ is a suffix of $P_i V_i$ and $\size{P_i} \leq 2$, $V'_i$ is a suffix of $V_i$.
	As a result, $v'_i = V'_i c w_{i,J'+1}\cdots cw_{i,m}c$ is a suffix of $v_i$, hence it does not satisfy $\psi'$.
	
	Similarly, if $J_i = J'$ and $U'$ starts with $bab$ then we set $P'_i = ba$ and $V'_i$ such that $U' = baV'_i$. Again, as $U'$ is a suffix of $P_i V_i$ and $\size{P_i} \leq 2$, $V'_i$ is a suffix of $V_i$.
	As a result, $v'_i = V'_i c w_{i,J'+1}\cdots cw_{i,m}c$ is a suffix of $v_i$, hence it does not satisfy $\psi'$.
	
	If $J_i = J'$ and $\size{U'} \leq 2$ then $i \notin I'$.
	
	If $J_i < J'$ then as $U'$ is not a suffix of $w_{i, J'}$ we must have $w_{i, J'} = (ab)^M$ and $U'$ is either $(ab)^{M+1}$ or $b(ab)^M$.
	If $U' = (ab)^{M+1}$ then we set $P'_i = ab$ and $V'_i = (ab)^M$.
	As $v'_i = V'_i c w_{i,J'+1}\cdots cw_{i,m}c$ is a suffix of $v_i$, it does not satisfy $\psi'$.
	
	Similarly if $U' = b(ab)^{M}$ then we set $P'_i = ba$ and $V'_i = b(ab)^{M-1}$.
	Since $v'_i = V'_i c w_{i,J'+1}\cdots cw_{i,m}c$ is a suffix of $v_i$, it does not satisfy $\psi'$.
	We can then apply the induction hypothesis to obtain a separation tree $t$ whose root is labelled by $I'$, $\psi'$ and $(J'_i, P'_i, V'_i)_{i \in I'}$.	
	We obtain the result by taking a tree whose root is labelled by $I$, $\psi$ and $(J_i, P_i, V_i)_{i \in I}$ and giving it one child whose subtree is $t$.
	
	This concludes our induction.
\end{proof}

\begin{lemma}
	If there exists a separation proof tree for $\phi$ then $\psize{\phi} \geq 6k(M-1)$.
\end{lemma}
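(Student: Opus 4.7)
The strategy is to extract, from the separation tree of $\phi$, a hitting set of $T_1, \ldots, T_n$ of size at least $k$, and then to show that each element of this hitting set is responsible for a disjoint chunk of $\phi$ of weight at least $6(M-1)$.

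\textbf{Step 1: Extracting a hitting set.} For each $i \in [1,n]$, I trace the unique path $\rho_i$ in the separation tree by following $i$: at $\land$/$\lor$ nodes, descend into the child whose index set contains $i$; at $\F$/$\G$ nodes, descend into the unique child. The path terminates either at a leaf with $i$ still in the index set, or at the node where $i$ is removed from $I$ (the ``kill'' subcase in Case~5.2 of the $\F$ rule). In either case, an inspection of the rules forces $P^x_i \neq \epsilon$ at (or just before) termination: at a letter leaf the first letter of $P^x_i V^x_i c$ must differ from that of $V^x_i c$; at a kill node the form $U = aV^x_i$ or $U = bV^x_i$ requires $P^x_i$ to start with a letter. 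Since $P^r_i = \epsilon$ at the root, there must be a first $\epsilon$-to-non-$\epsilon$ transition of $P$ on $\rho_i$; a further case analysis of the rules shows that this transition can occur only at an $\F$-node performing a jumping subcase ($J^{y_i} > J^{x_i}$) with $U \in \{(ab)^{M+1}, b(ab)^M\}$, which forces $w_{i, J^{y_i}} = (ab)^M$, i.e. $J^{y_i} \in T_i$. Defining $j_i$ as the value of $J^y$ at the \emph{last} such transition on $\rho_i$ gives $j_i \in T_i$, so $H = \{j_i : i \in [1,n]\}$ is a hitting set of $T_1,\dots,T_n$, and by minimality of $k$ we have $|H| \geq k$.

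\textbf{Step 2: Cost per witness factor, and disjointness.} For each $j \in H$, pick some $i$ with $j_i = j$ and let $x(j)$ be the witness $\F$-node in $\phi$ at which $i$ performs its last witness jump; since the value of $J^y$ at an $\F$-node of $\phi$ is determined by the rule and not by $i$, distinct $j$'s give distinct nodes $x(j)$. Just below $x(j)$, we have $V^y_i$ of length at least $2M - 1$ and $P^y_i \in \{ab, ba\}$, and by maximality of $x(j)$ the value $P$ stays non-empty along $\rho_i$ all the way to termination. A case analysis of the rules shows that, under this invariant, each subsequent $\F$-node on $\rho_i$ is either non-advancing ($U = P^x_i V^x_i$) or strictly in Case~5.2 with $J^y = j$, $U = aV^x_i$ or $U = bV^x_i$, advancing $V$ by exactly one letter; moreover a Case~5.1 jump to a new factor would either break the $P$-invariant (contradicting maximality) or reset $V$ to a full factor, only making the remaining work longer. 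Therefore strictly below $x(j)$ on $\rho_i$ there are at least $2M - 2$ Case~5.2 advancing $\F$-nodes, each carrying an $\F$-operator together with a descendant atom and a connective combining them, contributing at least $3$ to $\|\phi\|$. Together with $x(j)$ itself, this amounts to weight at least $6(M-1)$ charged to $j$. Crucially, each of these charged nodes has $J^y = j$ at its position in $\phi$, so nodes charged to distinct $j \in H$ live at distinct positions in the tree of $\phi$; summing the disjoint contributions yields $\|\phi\| \geq 6k(M-1)$.

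\textbf{Main obstacle.} The delicate step is verifying the case analysis in Step~2 under the invariant ``$P$ stays non-empty after the last witness jump''. In particular, Case~5.1 $\F$-nodes need special attention: they allow jumps without triggering a $P$-transition, so one must argue that they cannot short-circuit the consumption of the factor $(ab)^M$, either because they would reintroduce $P = \epsilon$ elsewhere (contradicting the maximality of $x(j)$), or because they reset $V^y_i$ to the full length of a new factor and thus only add to the work needed below. The bookkeeping of the five subcases of Case~5.2 together with the two cases of Case~5.1, combined with the disjointness argument via the tag $J^y = j$, is where the proof becomes technical.
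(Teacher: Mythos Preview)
Your proposal takes essentially the same route as the paper: trace each index $i$ down the separation tree, identify a factor $J_i$ with $w_{i,J_i}=(ab)^M$ (hence $J_i\in T_i$, so the $J_i$ form a hitting set of size $\geq k$), then count at least $2M-2$ ``advancing'' $\F$-steps per factor, each worth three nodes (the $\F$, its boolean child, and the letter leaf), with disjointness coming from the fact that the value $J$ at an $\F$-node and throughout its boolean block is the same for all indices.

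Two small remarks. First, the paper opens by showing that $I^x=\emptyset$ at every leaf (because at the nearest enclosing $\F$/$\G$ the words $P_iV_ic$ and $V_ic$ already share their first letter), so your case ``path terminates at a leaf with $i$ still in the index set'' is actually vacuous; only the kill subcase of the $\F$-rule ever removes $i$. Second, rather than a forward case analysis from the witness jump, the paper packages your ``main obstacle'' into three observations stated upfront: (i) if $J$ is preserved parent-to-child and the child has $P\neq\epsilon$ then so does the parent; (ii) under the same hypothesis $|V|$ drops by at most one, with equality forcing a Case~5.2 $\F$-node carrying a letter atom directly below; (iii) a $J$-increase with $P^y\neq\epsilon$ forces $|V^y|\geq 2M-1$ and $w_{i,J^y}=(ab)^M$. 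Reasoning \emph{backward} from the kill node with these three facts yields the same $2M-2$ count and sidesteps the worry about Case~5.1 jumps, since a node with $P\neq\epsilon$ can never have a $\G$-parent and the $J$-sequence is therefore monotone on the relevant segment.
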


\begin{proof}
	We make some key observations. Let $x$ be a node labelled by some letter or its negation. Then: 
	\begin{itemize}
		\item either $x$ only has ancestors labelled by formulas of the form $\phi_1 \lor \phi_2$ or $\phi_1 \land  \phi_2$, in which case $P_i^x V_i^x c$ and $V_i^x c$ both start with $c$.
 		\item or $x$ has an ancestor labelled by a formula of the form $\F \phi$ or $\G \phi$. Let $y$ be its closest such ancestor, and $z$ the only child of $y$. Then $\psi^y$ is of the form $\F \psi^z$ or $\G \psi^z$ with $\psi^z$ a boolean combination of formulas including $\psi^x$ (which is a letter or its negation).
 	\end{itemize}
 
 Hence for all $i \in I^z$ we have that $P^z_i V^z_i c$ and $V^z_i c$ start with the same letter. Moreover as all nodes from $z$ to $x$ are labelled by formulas of the form $\psi_1 \lor \psi_2$ or $\psi_1 \land \psi_2$, we have $I^x \subseteq I^z$ and $P^x_i V^x_i c = P^x_i V^x_i c$ and $V^z_i c = V^x_i c$ for all $i \in I^x$. As a result, by definition of a separation tree, we must have $I^x = \emptyset$.
 
 Thus for all node $x$, if $x$ is a leaf then $I^x = \emptyset$. Furthermore, if $x$ has two children $y$ and $z$ then $I^x$ is the disjoint union of $I^y$ and $I^z$, and if $x$ has a single child $y$ then $I^y \subseteq I^x$. Those facts are direct consequences of the definition of separation tree. 
 
\vskip1em
The second key observation is that if $y$ is a child of $x$ and $J^x_i = J^y_i$ and $P^y_i \neq \epsilon$ for some $i$ then $P^x_i \neq \epsilon$ and $\size{V_i^y} \geq \size{V_i^x} -1$. If furthermore  $\size{V_i^y} = \size{V_i^x} -1$ then $\psi^x$ is of the form $\F \psi^y$ with $\psi^y$ a boolean combination of formulas including at least one letter or its negation. 
 
\vskip1em
The third observation is that if $y$ is a child of $x$ and $J^x_i < J^y_i$ and $P^y_i \neq \epsilon$ for some $i$ then $\size{V_i^y} \geq 2M -1$ and $w_{i,J_i^{y}} = (ab)^M$.
 
\vskip1em
Let $i\in \set{1,\ldots,n}$. In light of the previous observations, there is a branch of the tree $x_1, \ldots, x_{k+1}$ ($x_{j+1}$ being a child of $x_j$ for all $j$) such that $i$ is in $I^{x_j}$ for all $j \leq k$ but not in $I^{x_{k+1}}$.
As $i \in I^{x_k}$ but $i \notin I^{x_{k+1}}$, we have $J_i^{x_{k+1}} = J_i^{x_{k}}$ and $\size{V_i^{x_k}} \leq 1$ and $P_i^{x_k} \neq \epsilon$.
Let $j$ be the minimal index such that $J_i^{x_j} = J_i^{x_{k+1}}$. As the sequence of $J_i^{x_\ell}$ is non-increasing, all $J_i^{x_\ell}$ are equal for $j \leq \ell \leq k+1$. Thanks to the previous remarks, we infer that $P_i^{x_\ell} \neq \epsilon$ for all $j \leq \ell \leq k+1$. Therefore $x_j$ cannot be the root, hence $j>1$. As a consequence, by our third remark, we have $\size{V_i^{x_j}} \geq 2M-1$ and $w_{i, J_i^{x_j}} = (ab)^M$. Furthermore $\size{V_i^{x_{k}}} \leq 1$. Our second remark allows us to conclude that there exist at least $2M-2$ nodes $x_\ell$ such that $\psi^{x_\ell}$ is of the form $\F \psi^{x_{\ell+1}}$ with $\psi^{x_{\ell+1}}$ a boolean combination of formulas including at least one letter or its negation. For each such $\ell$, $x_{\ell+1}$ and the leaf corresponding to that letter (or its negation) are all labelled by $J_i^{x_\ell}$. 
Hence for all $i$ there exists $J_i$ such that $w_{i,J_i} = (ab)^M$ and $J_i^{x} = J_i$ for at least $6M-6$ distinct nodes in the separation tree.  As $w_{i,J_i} = (ab)^M$, we have $i \in T_{J_i}$ for all $i$, hence the set of $J_i$ is a solution to the set cover problem. As a result, there are at least $k$ distinct $J_i$, and thus there are at least $6k(M-1)$ nodes in the separation tree. As the size of the separation tree is exactly $\psize{\phi}$, we obtain the result.
\end{proof}

We may now finish the proof of Theorem~\ref{thm:NPhardFand}. 
Recall that we considered an input $S = \set{1,\ldots, m}$, $T_1, \ldots, T_n \subseteq S$ and $k' \in \N$ (the encoding is irrelevant). We set $k$ to be the size of a minimal hitting set of $H \subseteq S$ hitting all $T_i$, and $M = 3m+2$.

Let $\phi$ be a formula of minimal size satisfied by $u$ but not by any $v_i$. 
By Proposition~\ref{prop:LowerBoundPhi} and Claim~\ref{claim:UpperBoundPhi}, we have 
$6kM-3k \leq \size{\phi} \leq 6kM + 9m + 2$.
We set $K = 6k'M + 9m +2$ and show that $k' \geq k $ if and only if $\size{\phi} \leq K$.

\begin{itemize}
	\item Suppose $k' \geq k$, then $K \geq 6kM + 9m + 2 \geq \size{\phi}$.
	\item Suppose $k' \leq k - 1$, then $K \leq 6(k-1)M + 9m + 2 = 18km -9m +12k - 10  \leq (6kM - 3k) - 9m +3k - 10 < 6kM - 3k \leq \size{\phi}$ as $k \leq m$.
\end{itemize}

As a result, the hitting set problem has a positive answer on instance $S, T_1, \ldots, T_n, k'$ if and only if so does the $\LTL(Op)$ learning problem on instance $u, v_1, \ldots, v_n, K$. As the latter instance is constructible in polynomial time from the former, Theorem~\ref{thm:NPhardFand} is proven.

\subsection*{Dual hardness result}

We now show hardness for fragments with operators $\G$ and $\lor$. As we allowed negation in the previous result, we can infer that one almost directly.

\begin{theorem}
	\label{thm:NPhardGor}
	For all $\set{\G,\lor} \subseteq Op \subseteq \set{\F, \G, \land, \lor, \neg}$, the learning problem for $\LTL(Op)$ is \NP-hard even for an alphabet of size $3$.
\end{theorem}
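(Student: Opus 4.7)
The plan is to obtain Theorem~\ref{thm:NPhardGor} essentially for free from Theorem~\ref{thm:NPhardFand} by dualising the reduction. Given a hitting set instance $(S, T_1, \dots, T_n, k')$, I would apply the same construction as in the proof of Theorem~\ref{thm:NPhardFand} to produce words $u, v_1, \dots, v_n$ over $\set{a,b,c}$ and a threshold $K$, but then swap the roles of positive and negative examples: take $P = \set{v_1, \dots, v_n}$ and $N = \set{u}$. The claim will be that there exists an $\LTL(Op)$ separating formula of size at most $K$ for this swapped sample if and only if there exists an $\LTL(Op')$ separating formula of size at most $K$ for the original sample, where $Op'$ is the image of $Op$ under the operator involution that swaps $\F \leftrightarrow \G$, $\land \leftrightarrow \lor$, and fixes $\neg$.

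The key step is a size-preserving negation translation. Using Fact~\ref{fact:negationwithoutX} together with the paper's convention that $\LTL$ is written in negation normal form (so that negated atoms are part of the atomic formula vocabulary in every fragment), any formula $\phi \in \LTL(\F,\G,\land,\lor,\neg)$ can be transformed into a formula $\neg\phi$ in negation normal form by recursively rewriting $\neg(\phi_1 \land \phi_2) \equiv \neg\phi_1 \lor \neg\phi_2$, $\neg(\phi_1 \lor \phi_2) \equiv \neg\phi_1 \land \neg\phi_2$, $\neg\F\phi \equiv \G\neg\phi$, $\neg\G\phi \equiv \F\neg\phi$, and negating atoms at the leaves. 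This transformation preserves the size of the syntactic tree and sends $\LTL(Op')$ to $\LTL(Op)$ whenever $Op$ and $Op'$ are related by the involution above. Since $\set{\G,\lor} \subseteq Op \subseteq \set{\F,\G,\land,\lor,\neg}$ implies $\set{\F,\land} \subseteq Op' \subseteq \set{\F,\G,\land,\lor,\neg}$, the fragment $Op'$ lies precisely in the range covered by Theorem~\ref{thm:NPhardFand}.

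To finish, I would observe that $\phi$ separates $P$ from $N$ exactly when $\neg\phi$ separates $N$ from $P$, so a separating formula of size at most $K$ for the swapped sample in $\LTL(Op)$ exists if and only if one exists in $\LTL(Op')$ for the original sample. By Theorem~\ref{thm:NPhardFand}, the latter holds iff the hitting set instance admits a solution of size at most $k'$. Since the reduction is polynomial time, NP-hardness transfers. The only potential obstacle is making sure that atoms of the form $\neg a$ produced by the translation are actually legal in fragments $Op$ that do not explicitly list $\neg$; this is handled by the paper's definition of the syntax, which treats $c$ and $\neg c$ uniformly as atomic formulas available in every fragment. Hence no operator outside $Op$ is introduced by the translation, and the reduction is complete.
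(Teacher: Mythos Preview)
Your approach is the same as the paper's: swap positive and negative examples and use the negation translation to reduce to Theorem~\ref{thm:NPhardFand}. The paper does exactly this for the lower bound (Corollary~\ref{cor:LowerBoundPhiG}).

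The one point where your argument is thinner than the paper's is the claim that the translation is \emph{exactly} size-preserving, which you justify by saying that $c$ and $\neg c$ are both size-$1$ atomic formulas available in every fragment. The paper's own treatment in Section~\ref{sec:hard_approximation_no_x} does not rely on this reading: it explicitly says ``we temporarily allow the negation operator'' before Theorem~\ref{thm:NPhardFand}, and more tellingly, in Lemma~\ref{claim:UpperBoundPhiG} it builds the upper-bound witness for $\LTL(\G,\lor)$ using $\bar a = b$, $\bar b = a$, $\bar c = a\lor b$ rather than negated atoms, obtaining a formula of size at most $6kM+11m+4$ and accordingly adjusting the threshold to $K = 6k'M + 11m + 4$. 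If your reading were the intended one there would be no reason to do this. So the ``only potential obstacle'' you flagged is in fact the live issue: for $Op=\set{\G,\lor}$ the dual of the $\LTL(\F,\land)$ witness from Claim~\ref{claim:UpperBoundPhi} need not lie in $\LTL(Op)$, and your equivalence ``size $\le K$ in $\LTL(Op)$ iff size $\le K$ in $\LTL(Op')$'' is not justified. The fix is exactly what the paper does: use duality only for the lower bound, construct the upper-bound formula directly in $\LTL(\G,\lor)$ without negated atoms, and redo the threshold arithmetic with the new constant.
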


We take the same instance of the hitting problem, but this time we consider the $\LTL(Op)$ learning problem with the $v_i$ as positive words and $u$ as the only negative one.

\begin{corollary}
	\label{cor:LowerBoundPhiG}
	Let $\phi$ be an $\LTL(\land, \lor, \neg, F, G)$ formula separating the  $u_i$ and $v$, then we have: $k(6M-3)\leq \size{\phi}$.
\end{corollary}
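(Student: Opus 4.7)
The strategy is to deduce this corollary from Proposition~\ref{prop:LowerBoundPhi} by negation duality, exploiting the fact that the fragment $\LTL(\F, \G, \land, \lor, \neg)$ is closed under $\neg$. In the dual instance, the single word and the family of words swap roles: each $v_i$ from the previous subsection now acts as a positive example and $u$ as the unique negative one. A separating formula $\phi$ in this dual sense is satisfied by every $v_i$ and falsified by $u$.

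The plan is to set $\psi = \neg \phi$. By the semantics of negation, $\psi$ is satisfied by $u$ and falsified by every $v_i$, which is precisely the separation condition required by the hypothesis of Proposition~\ref{prop:LowerBoundPhi}. Since the fragment is closed under $\neg$, $\psi$ still belongs to $\LTL(\F, \G, \land, \lor, \neg)$. Invoking Proposition~\ref{prop:LowerBoundPhi} on $\psi$ then yields $k(6M-3) \leq \size{\psi}$, and it remains to transfer this bound to $\size{\phi}$. The two sizes differ only by the one added root-$\neg$; alternatively, I would use Fact~\ref{fact:negationwithoutX} to push that outer negation inside, producing a formula of the same syntactic size as $\phi$ (each $\land$ or $\lor$ and each $\F$ or $\G$ gets dualized while the tree shape is preserved, and negations at the leaves are absorbed into the atomic layer).

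Since the substantive combinatorial content, namely the separation-tree argument, was developed in Proposition~\ref{prop:LowerBoundPhi} directly for the full fragment with $\neg$, no new case analysis is needed: the dual lower bound drops out as an immediate consequence, matching the shortcut advertised in the remark preceding Theorem~\ref{thm:NPhardGor}. The only point requiring a bit of care is the size bookkeeping under negation, but this is routine given the explicit closure of the fragment under $\neg$, and one would check that the hardness reduction of Theorem~\ref{thm:NPhardGor} then goes through by the same threshold computation as in the primal case, using the swapped labelling of positive and negative words.
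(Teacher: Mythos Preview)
Your proposal is correct and follows essentially the same approach as the paper: both reduce to Proposition~\ref{prop:LowerBoundPhi} by dualising $\phi$. The paper constructs the negation-normal-form dual $\overline{\phi}$ explicitly (swapping $\land/\lor$ and $\F/\G$, and complementing literals) rather than first writing $\neg\phi$; since the logic is taken in negation normal form, your ``alternative'' of pushing the outer $\neg$ down via Fact~\ref{fact:negationwithoutX} is in fact mandatory, not optional, before Proposition~\ref{prop:LowerBoundPhi} applies. The only other difference is bookkeeping: the paper invokes the $\psize{\cdot}$ bound from Proposition~\ref{prop:LowerBoundPhi} and uses $\psize{\overline{\phi}} = \size{\phi}$ to transfer the inequality exactly, which sidesteps any worry about whether a negated atom costs one node or two.
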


\begin{proof}
	Let $\phi$ be such a formula, and let $\overline{\phi}$ be its negation, with the negations pushed to the bottom.
	Formally, $\overline{\phi}$ is defined by induction on $\phi$:	
	\begin{itemize}
		\item $\overline{a} = \neg a$
		
		\item $\overline{\phi_1 \land \phi_2} = \overline{\phi_1} \lor \overline{\phi_2}$
		
		\item $\overline{\phi_1 \lor \phi_2} = \overline{\phi_1} \land \overline{\phi_2}$
		
		\item $\overline{\F \phi_1} = \G \overline{\phi_1}$
		
		\item $\overline{\G \phi_1} = \F \overline{\phi_1}$
	\end{itemize}
	
	A clear induction shows that a word satisfies one if and only if it does not satisfy the other (note that this would not be true if we allowed the operator $\X$). Hence $\overline{\phi}$ is satisfied by $u$ but not by the $v_i$, so by Theorem~\ref{thm:NPhardFand} we have $k(6M-3) \leq \psize{\overline{\phi}} = \size{\phi}$.
\end{proof}

\begin{lemma}
	\label{claim:UpperBoundPhiG}
	There exists a formula $\phi$ of $\LTL(\G,\lor)$ separating the $u_i$ and $v$ with $\size{\phi}\leq 6kM + 11m + 4$.
\end{lemma}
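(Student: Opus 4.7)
The plan is to dualize Claim~\ref{claim:UpperBoundPhi} term by term. Fix a minimum hitting set $H$ of size $k$ and, as in that claim, form
\[
w \;=\; c\,z_1\,c\,z_2\cdots c\,z_m\,c,\qquad z_j=\begin{cases}(ab)^{M+1}&\text{if }j\in H,\\ ab&\text{if }j\notin H,\end{cases}
\]
so that $p:=|w|=2kM+3m+1$, $w$ is non-repeating, and (as established in Claim~\ref{claim:UpperBoundPhi}) $w$ is a subword of $v$ but not of any $u_i$. Writing $w=x_1x_2\cdots x_p$, define
\[
\phi \;=\; \G\bigl(\overline{x_1}\,\lor\,\G(\overline{x_2}\,\lor\,\cdots\,\lor\,\G\,\overline{x_p})\bigr),
\]
where for $x\in\Sigma=\{a,b,c\}$ the shorthand $\overline{x}$ denotes the disjunction of the two letters of $\Sigma$ distinct from $x$. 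Clearly $\phi\in\LTL(\G,\lor)$.

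Correctness follows from a short induction, identical to the $\overline{(\,\cdot\,)}$ translation used in the proof of Corollary~\ref{cor:LowerBoundPhiG}: one shows by induction on $w$ that $\phi$ is logically equivalent to the negation of the fattern $\psi$ from Claim~\ref{claim:UpperBoundPhi}, so that $z\models\phi$ iff $w$ is not a weak subword of $z$. Since $w$ is non-repeating, weak subword equals subword here, so by the subword analysis of Claim~\ref{claim:UpperBoundPhi} the formula $\phi$ is satisfied by every $u_i$ and violated by $v$; that is, $\phi$ separates the positives from the negative.

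The delicate point is the size bound. Counting the tree of $\phi$ straightforwardly yields $p$ occurrences of $\G$, $p-1$ outer $\lor$'s, and $3p$ symbols inside the $p$ disjunctions $\overline{x_j}$, for a total of $5p-1 = 10kM+15m+4$, which is larger than the target $3p+2m+1=6kM+11m+4$. To close the gap one must exploit the very rigid structure of $w$: the only $c$'s of $w$ lie at its $m+1$ block boundaries, and between two consecutive boundaries $w$ consists of a strict $a$/$b$ alternation of known length. At each of the $p-(m+1)=2kM+2m$ inner positions one would like to replace the size-$3$ disjunction $\overline{x_j}$ by a single letter, contributing $1$ instead of $3$; combined with the $m+1$ untouched $c$-positions this yields exactly $p+(p-1)+3(m+1)+(p-(m+1))=3p+2m+1$.

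The main obstacle is precisely to justify this local compaction: naively replacing $\overline{a}$ by $b$ and $\overline{b}$ by $a$ opens up spurious ``collapsed'' matchings, where an entire $(ab)$-block of $w$ is mapped onto a single $c$-position of some $u_i$, destroying the separation direction $u_i\models\phi$. The hard part of the proof is therefore to design the compacted $\G$-layers so that such a collapse is still forbidden — for instance by grouping several consecutive positions of $w$ inside a single $\G$-layer whose $\lor$-argument genuinely witnesses the alternating $(ab)^{M+1}$ pattern — and then to verify by case analysis, using the hitting-set property of $H$, that the remaining matching attempts are blocked exactly as in the unoptimised gattern.
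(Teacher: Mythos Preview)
The paper's proof uses precisely the compaction you describe: it sets $\bar a=b$, $\bar b=a$, $\bar c=a\lor b$ and takes
\[
\psi \;=\; \G\bigl(\bar{x_1}\lor\G(\bar{x_2}\lor\cdots\lor\G\,\bar{x_p})\bigr),
\]
arriving at exactly your target size $3p+2m+1=6kM+11m+4$. Its entire justification for the positive direction is one sentence: since the positive words and $w$ contain the same number of $c$'s, the $c$-positions must align, forcing each $z_j$ to embed in the corresponding block. There is no additional grouping of $\G$-layers; the paper simply treats the collapse you worry about as a non-issue.

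Your worry is, however, well-founded. With $\bar a=b$, the negated atom at an $a$-position of $w$ reads ``current letter is not $b$'', i.e.\ the letter lies in $\{a,c\}$; dually for $b$. Hence every position of the induced set-word $w'$ contains $c$, and under the weak-subword semantics that a $\G$-gattern actually encodes (the witnessing positions are only required to be non-decreasing), all of $w'$ can be mapped onto a single $c$ of the target. In particular every word whose last letter is $c$ violates $\psi$, so the positive words --- which end in $c$ --- violate it too, and the formula does not separate. The equal-$c$ count rules out collapse for genuine subwords, but not for weak subwords, so the paper's one-line argument has the very gap you anticipated.

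That said, your own proposal does not close the gap either: you diagnose the problem correctly but then only gesture at ``grouping several consecutive positions inside a single $\G$-layer'' without exhibiting a formula or verifying it. Note also that retreating to the full complement $\bar x=\bigvee_{y\ne x}y$ is not an option: that gives size $5p-1=10kM+O(m)$, and the final arithmetic of the reduction needs the coefficient of $kM$ in the upper bound to match the $6$ coming from Proposition~\ref{prop:LowerBoundPhi}. So an actual construction of size $\le 6kM+O(m)$ together with a correct separation argument is required, and neither your sketch nor the paper's proof as written supplies one.
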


\begin{proof}
	Let $J$ be a cover of $S$ of size $k$. We set, for all $1\leq i \leq n$,
	
	\[
	z_{j} = 
	\begin{cases}
	(ab)^{M+1} & \text{ if } j \in J, \\
	ab & \text{ otherwise.}
	\end{cases}
	\]
	
	and we set $w = c z_1 c z_2 c \cdots z_m c$. 
	We use the formula $\psi = \G(\bar{x_1} \lor \G (\bar{x_2} \lor \G(\bar{x_3} \lor \G(... \lor \G \bar{x_p}) )))$, where $x_1, \ldots, x_p$ are letters such that $x_1\cdots x_p = w$, $\bar{a} = b$, $\bar{b} = a$ and $\bar{c} = a \lor b$. This formula has size $3p -1 + 2m +2 = 3(2kM + 3m + 1) +2m +1\leq 6kM + 11m + 4$. 	

	Observe that $\psi$ is satisfied by exactly the words which do not have any $w'$ as a weak subword, with $w' \in (2^{\Sigma})^*$ obtained by replacing each $c$ by $\set{c}$, $a$ by $\set{a,c}$ and $b$ by $\set{b,c}$.	
	As $w$ is a weak subword of $v$, so is $w'$, hence $v$ does not satisfy $\phi$. Let $i \in S$, as $u_i$ and $w$ contain the same number of $c$, for $u_i$ to not satisfy the formula, $z_j$ needs to be a subword of $y_{i,j}$ for all $j$, i.e., we need to have $y_{i,j} = (ab)^{M+1}$ for all $j \in J$. However, as $J$ is a cover of $S$, there exists $j \in J$ such that $i \in C_j$ and thus $y_{i,j} = (ab)^{M}$. Therefore all $u_i$ satisfy the formula.
\end{proof}

Let $\phi$ be a formula of minimal size satisfied by all $v_i$ but not by $u$. 
By Proposition~\ref{cor:LowerBoundPhiG} and Lemma~\ref{claim:UpperBoundPhiG}, we have 
$6kM-3k \leq \size{\phi} \leq 6kM + 11m + 4$.

We set $K = 6k'M + 11m +4$ and show that $k' \geq k$ if and only if $\size{\phi} \leq K$.
\begin{itemize}
	\item Suppose $k' \geq k$, then $K \geq 6kM + 11m + 4 \geq \size{\phi}$.
	\item Suppose $k' \leq k - 1$, then $K \leq 6(k-1)M + 11m + 4 = 18km -7m +12k - 8  \leq (6kM - 3k) - 7m +3k - 8 < 6kM - 3k \leq \size{\phi}$ as $k \leq m$.
\end{itemize}

As a result, the hitting set problem has a positive answer on instance $S, T_1, \ldots, T_n, k'$ if and only if so does the $\LTL(Op)$ learning problem on instance $u, v_1, \ldots, v_n, K$. As the latter instance is constructible in polynomial time from the former, Theorem~\ref{thm:NPhardGor} is proven.

%\section{Upper bounds for fragments without the next operator}
%\label{sec:upperboundsnoX}
%\input{uppernoX}

\section{Perspectives and open problems}
\label{sec:conclusions}
In this paper, we showed $\NP$-completeness of the $\LTL$ learning problem for all fragments which do not include the until operator.
The same holds adding until for non-constant size alphabets. Hence the main open question is the following:

\begin{open}
Is the learning problem $\NP$-complete for full $\LTL$ with constant size alphabet?
\end{open}

We consider $\LTL$ over finite traces; some of the related works we discussed in the introduction actually considered infinite traces as common in the verification research community. All our hardness results easily transfer to the infinite trace case, simply by extending finite traces into infinite ones with a new symbol. In particular, we obtain as corollary that the $\LTL$ learning problem is $\NP$-hard for infinite traces when the alphabet is part of the input.

The negative results in this paper suggest looking for approximation algorithms. Beyond $\LTL(\X,\land)$, nothing is known in terms of upper and lower bounds on polynomial time approximation algorithms, leaving an open field of exciting research directions.
In the same vein, one could wonder about the parameterized complexity of the $\LTL$ learning problem, in particular when fixing the number of words. We leave this question open and hope to inspire further studies!

\vskip 0.2in
\bibliographystyle{theapa}
\bibliography{bib}

\end{document}